%% file: main.tex
\documentclass{article}

\input{preamble}

\title{Kernel Tests of Equivalence}

\date{} 					% Or removing it

\author{%
Xing Liu\thanks{Corresponding email: \texttt{xingliu97@outlook.com}} \\
QuantCo \\
London, UK \\
\And
Axel Gandy \\
Department of Mathematics \\
Imperial College London \\
London, UK \\
}

\renewcommand{\shorttitle}{Kernel Tests of Equivalence}

\begin{document}
\maketitle

\begin{abstract}
    We propose novel kernel-based tests for assessing the equivalence between distributions. Traditional goodness-of-fit testing is inappropriate for concluding the absence of distributional differences, because failure to reject the null hypothesis may simply be a result of lack of test power, also known as the Type-II error. This motivates \emph{equivalence testing}, which aims to assess the \emph{absence} of a statistically meaningful effect under controlled error rates. However, existing equivalence tests are either limited to parametric distributions or focus only on specific moments rather than the full distribution. We address these limitations using two kernel-based statistical discrepancies: the \emph{kernel Stein discrepancy} and the \emph{Maximum Mean Discrepancy}. The null hypothesis of our proposed tests assumes the candidate distribution differs from the nominal distribution by at least a pre-defined margin, which is measured by these discrepancies. We propose two approaches for computing the critical values of the tests, one using an asymptotic normality approximation, and another based on bootstrapping. Numerical experiments are conducted to assess the performance of these tests.
\end{abstract}

% keywords can be added
\keywords{Equivalence testing \and hypothesis testing \and goodness-of-fit testing \and kernel methods}

\section{Introduction}
Goodness-of-fit (GOF) testing aims to use observed data to draw statistical conclusions on how well some nominal distribution approximates an unknown underlying data-generating process. Denote by $\cP$ the set of probability measures on $\R^d$, and let $Q, P \in \cP$. Given observations $\X_n \coloneqq \{ X_i \}_{i=1}^n \subset \R^d$ drawn independently from $Q$, a GOF test aims to use $\X_n$ to test
\begin{align}
    H_0^\ast: \ Q = P
    \qquad \textrm{against}\qquad
    H_1^\ast: \ Q \neq P
    \;.
    \label{eq:standard_hyps}
\end{align}
Depending on the assumed knowledge about the nominal distribution $P$, the GOF testing literature can be categorized into two main settings. The first is \emph{one-sample} testing, where samples from $Q$ are observed, and $P$ is often specified via a probability density function, whose normalizing constant is usually unknown. This is the case when, e.g., $P$ belongs to a probabilistic model, such as general exponential family models \citep[Chapter 3.4]{casella2002statistical}, energy-based models \citep{cho2013gaussian} and probabilistic graphical models \citep{koller2009probabilistic}. Another setup is \emph{two-sample} testing, where $P$ is specified implicitly through a sampling procedure and does not admit tractable likelihood functions, but simulating from $P$ is possible. Examples include \emph{generative adversarial networks} \citep{goodfellow2014generative} and many scientific models based on simulators \citep{beaumont2010approximate,riesselman2018deep,bharti2022general}. In both settings, the null hypothesis $H_0^\ast$ is rejected if there is statistically significant evidence that the data do not comply to the nominal distribution $P$.

In many applications, however, the goal is not to establish that a statistically significant difference exists between some distributions, but rather that the distributions are practically equivalent. Examples include comparative bioequivalence trials \citep{metzler1974bioavailability,schuirmann1987comparison,cade2011estimating}, drug stability assessment \citep{liu2007pooling}, pharmacokinetics studies  \citep{hauck1984new,gsteiger2011simultaneous}, and validation of statistical models \citep{dette1998validation,dette2018equivalence,carlini2017adversarial}. In those cases, hypotheses of the form \eqref{eq:standard_hyps} are no longer appropriate for two reasons. First, failure to reject the null hypothesis $H_0^\ast$ does \emph{not} provide a probabilistic guarantee for $Q = P$; instead, it can simply be due to the lack of test power \citep{inman1994karl,siebert2020validation}. Second, it is well-known that, in practice, the null hypothesis $H_0^\ast$ will eventually always be rejected as the data size grows. This is because ``all models are wrong'', so a very large sample would ``invariably produce statistically significant lack of fit'' \citep[pp.\ 189]{hardle2019applied}.

\emph{Equivalence testing} \citep{david2013testing} aims to resolve this issue by considering a different null hypothesis, where $Q$ and $P$ are sufficiently \emph{different} by a pre-specified margin. The notion of distributional difference is often quantified by a statistical discrepancy $D$. Given a positive number $\theta > 0$, known as the \emph{equivalence margin} \citep{wellek2021testing} or \emph{minimal meaningful distance} \citep{meyners2012equivalence}, we consider testing the following hypotheses
\begin{align}
    H_0: \ D(Q, P) \geq \theta
    \;\qquad\textrm{against}\qquad
    H_1: \ D(Q, P) < \theta
    \;.
    \label{eq:equiv_hyps}
\end{align}
Unlike the standard hypotheses \eqref{eq:standard_hyps}, rejection of $H_0$ now suggests that there is sufficient evidence that $Q$ and $P$ are similar. This allows one to rigorously control the probability of falsely concluding that $Q$ and $P$ are equivalent.

Equivalence testing was initially employed in the field of pharmacokinetics \citep{hauck1984new} for determining whether new, cheaper drugs exhibit comparable efficacy to existing, more expensive ones \citep{senn2008statistical}. For a review, see \citet{david2013testing,lakens2017equivalence}. In the statistics literature, equivalence tests have been developed for comparing parametric models \citep{dette1998validation,dette2018equivalence}, for assessing independence \citep{chen2023testing}, and for improving the test-then-pool framework in causal inference \citep{li2020revisit}. However, most existing methods are restricted to parametric models or specific moments. A recent non-parametric method by \citet{chen2023testing} uses the Maximum Mean Discrepancy \citep{muller1997integral,gretton2012kernel} to avoid model-specific assumptions. However, their method relies on an asymptotic normal approximation, which, as we show numerically in \Cref{sec:one_sample:bootstrap_test}, may break down when the equivalence margin $\theta$ is small, resulting in uncontrolled Type-I error.

We propose two families of equivalence tests, one based on an asymptotic normal approximation as in \citet{chen2023testing}, and the other on a bootstrapping technique. Each family contains two variants, one employing Kernel Stein Discrepancies, suitable for one-sample testing, and the other employing Maximum Mean Discrepancies, suitable for two-sample testing. Our tests do not make any parametric assumptions about the underlying distributions and are not limited to specific moments. Our contributions are summarized as follows and in \Cref{tab:summary}:
\begin{enumerate}
    \item We first propose two equivalence tests for the one-sample setting. The first test, called the \emph{E-KSD-Normal} test, is motivated by \citet{chen2023testing} and leverages a Central Limit Theorem for the KSD statistic. We then demonstrate that this test, despite achieving high power, can result in uncontrolled Type-I error, especially with small equivalence margins. We then propose an alternative equivalence test, called the \emph{E-KSD-Boot} test, which is based on bootstrapping and can achieve better Type-I error control with finite samples regardless of the size of the equivalence margin, albeit at the cost of a lower test power.
    \item Next, we propose two equivalence tests for the two-sample setting. These tests, called the \emph{E-MMD-Normal} and \emph{E-MMD-Boot} tests, respectively, mirror the aforementioned one-sample tests and leverage MMD to adapt them to the two-sample setting. In particular, our E-MMD-Normal test resembles the homogeneity-equivalence test of \citet{chen2023testing} and generalizes it to allow non-identical sample sizes.
    \item For the two bootstrapping tests, we propose a data-driven approach for selecting the equivalence margin $\theta$, which computes $\theta$ as the minimal effect size given a pre-specified test power.
\end{enumerate}

The rest of this work is organized as follows. \Cref{sec:background} reviews the equivalence testing framework and the two statistical discrepancies we will use to construct our tests. \Cref{sec:one_sample} and \Cref{sec:two_sample} introduce our proposed equivalence tests that are suitable for one-sample and two-sample testing, respectively. \Cref{sec:choice_of_equiv_margin} discusses how to select the equivalence margin in practice. \Cref{sec:experiments} provides numerical experiments showing the validity and power of the proposed tests.

\begin{table}[t]
    \begin{center}
        % \resizebox{\textwidth}{!}{
        \begin{tabular}{lcccc}
            \toprule
            \multicolumn{1}{c}{\bf Equivalence Test} & \multicolumn{1}{c}{\bf Discrepancy} & \multicolumn{1}{c}{\bf Setting} & \multicolumn{1}{c}{\bf Method} & \multicolumn{1}{c}{\bf Reference}
            \\
            \midrule
            E-KSD-Normal & KSD & One-sample & CLT & \Cref{thm:ksd_normal_test}
            \\
            E-KSD-Boot & KSD & One-sample & Bootstrapping & \Cref{thm:ksd_bootstrap_test}
            \\
            E-MMD-Normal & MMD & Two-sample & CLT & \citet{chen2023testing} and \Cref{thm:mmd_normal_test}
            \\
            E-MMD-Boot & MMD & Two-sample & Bootstrapping & \Cref{thm:mmd_bootstrap}
            \\
            \bottomrule
        \end{tabular}
        % }
    \end{center}
    \caption{Summary of kernel-based equivalence tests proposed in this paper, and their connections to existing literature.}
    \label{tab:summary}
\end{table}

\section{Background}
\label{sec:background}
This section provides the necessary background on the kernel-based divergences and metrics that we adopt in our tests, as well as the relevant literature. Throughout, we denote by $\cP(\R^d)$ the set of probability measures on $\R^d$.

\subsection{Equivalence Testing}

Our goal is to design non-parametric tests targeting the equivalence hypotheses of the form \eqref{eq:equiv_hyps}, with appropriate choices of $D$ to be introduced later. Tests targeting equivalence hypotheses of the form \eqref{eq:equiv_hyps} are known as \emph{equivalence testing} \citep[ET,][]{david2013testing,lakens2017equivalence} or \emph{negligible effect testing} \citep{beribisky2024evaluating}. In classic GOF testing, the null hypothesis typically assumes no meaningful statistical difference, namely $H_0^\ast: Q = P$. Consequently, GOF testing can only conclude the \emph{presence} of model mis-alignment, because failing to reject $H_0^\ast$ does not suggest any statistical evidence that it holds true. In contrast, for ET, the null hypothesis assumes instead that $Q$ and $P$ are sufficiently different by some margin. This means that, for a well-calibrated ET, the probability of falsely concluding the \emph{absence} of meaningful statistical differences can be controlled.

ET has found applications in a wide range of areas, including pharmacokinetics \citep{hauck1984new,schuirmann1987comparison} and psychology \citep{quertemont2011statistically,rogers1993using}. The simplest equivalence test is the \emph{two one-sided test} \citep[TOST,][]{schuirmann1987comparison,seaman1998equivalence}. In TOST, two one-sided hypotheses of the form $H_0: D > \theta_u$ and $H_0: \Delta < - \theta_l$ are tested, where $D$ is some measure of effect (potentially negatively valued) and $\theta_l,\theta_u$ are two thresholds for significant effect size. One then one conclude $-\theta_l < \Delta < \theta_u$ if both hypotheses are rejected. However, TOST is often found to suffer from low test power due to the use of two auxiliary hypotheses \citep{boulaguiem2024finite,neuhauser2024perspective}.

Various ETs targeting directly an equivalence hypothesis have been proposed. \citet{rogers1993using,wellek2002testing} proposed tests for equivalence of means. \citet{dette2021bio} studied the problem of testing equivalence of means and variances in a two-sample setup. \citet{dette1998validation} used the $L_2$ distance to test the equivalence between observations and parametric models. \citet{dette2018equivalence} used both $L_2$ and $L_\infty$ distances to test the equivalence of two parametric models. \citet{baringhaus2017limit} used a weighted $L_2$ distance. \citet{freitag2007nonparametric} proposed an equivalence test for comparing the marginals of multivariate distributions. \citet{yuan2016assessing,beribisky2024evaluating} studied equivalence testing for structural equation models. These tests are limited to either specific moments of the distributions, or to  certain parametric models. In contrast, our proposed tests are non-parametric and capable of testing for the homogeneity of the full distributions, thus providing more flexibility.

Closest to our work is \citet{chen2023testing}, who used the characteristic functions to construct tests for distributional homeogeneity, independence, and symmetry. Although not mentioned explicitly in their paper, their measure of dissimilarity is effectively an MMD. Our normality-based methods are similar to their approach, except that they assume the two samples to have equal sizes, whereas we allow them to be different, thus providing more generality. Moreover, our bootstrap-based methods differ fundamentally from their approach---while \citet{chen2023testing} rely on an asymptotic normality result for the MMD estimator, our bootstrapped tests leverage the triangle inequality of MMD or KSD to construct a test statistic. As we will show numerically in \Cref{sec:experiments}, with small sample sizes, the normality-based tests, including that of \citet{chen2023testing}, performs poorly in controlling the Type-I error when the sample size is small, whereas the bootstrap-based tests remain well-calibrated even with small samples.

\begin{figure}
    \centering
    \includegraphics[width=0.75\linewidth]{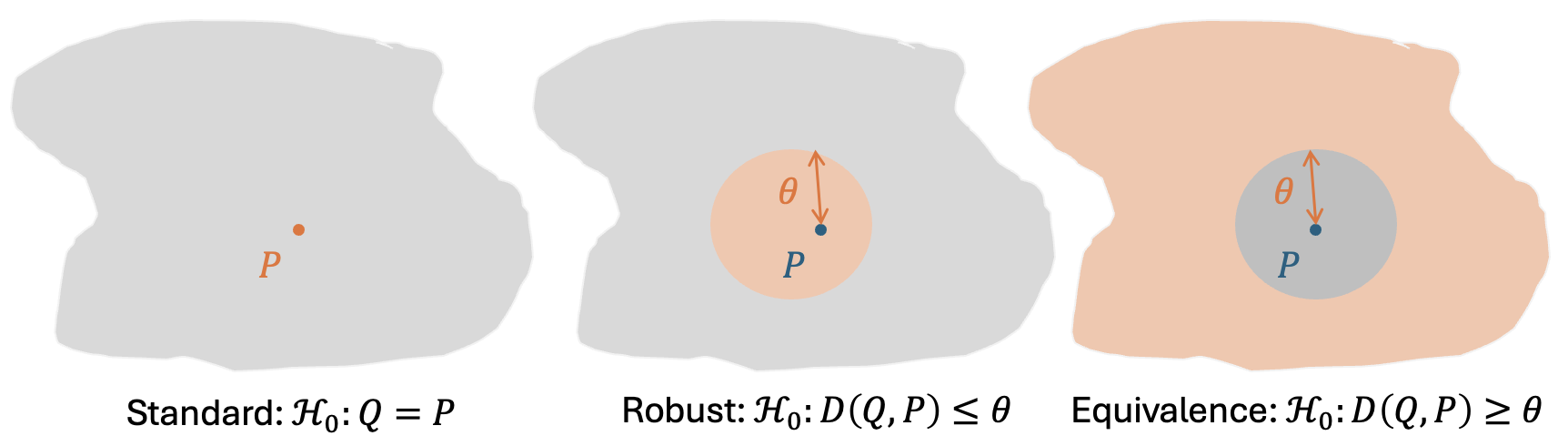}
    \caption{Comparison of different hypothesis testing paradigms. The shaded area represents the space of distributions of interest. The null sets are shown in orange, and the alternative sets in grey. \emph{Left.} standard testing with a point null hypothesis. \emph{Middle.} Robust testing based on a statistical discrepancy $D$. \emph{Right.} Equivalence testing.}
    \label{fig:hypotheses}
\end{figure}

\subsection{Kernel-based Discrepancies}

The equivalence hypotheses \eqref{eq:equiv_hyps} are specified by a statistical discrepancy $D$. Hence, changing $D$ would result in a different null set, thus also a different hypothesis. It is therefore important to select a statistical discrepancy that is both computationally convenient for the problem at hand, and also statistically meaningful to quantify the type of model deviation of interest. We propose to use KSD and MMD, two kernel-based statistical discrepancies with desirable computationally and statistical properties. We will review KSD and MMD in this section. Throughout, we assume the kernel $k$ is \emph{characteristic} \citep{sriperumbudur2011universality}.

\subsubsection{Kernel Stein Discrepancy}
\label{sec:ksd}
KSD is a statistical divergence based on kernel methods and Stein's method. Assume $P$ admits a Lebesgue density $p$ on $\R^d$ and $p(x) > 0$ that is continuously differentiable, and define its \emph{score function} $s_p(x) \coloneqq \nabla_x \log(x)$. Let $\cH_k$ be the reproducing kernel Hilbert space \citep[RKHS;][]{saitoh2016theory} associated with a reproducing kernel $k$, and let $\cH_k^d = \{f=(f_1, \ldots, f_d)^\top: f_j \in \cH\}$ be the product space equipped with the norm $\| f \|_{\cH_k^d} = (\sum_{j=1}^d \|f_j\|_{\cH_k}^2 )^{1/2}$. The \emph{(Langevin) kernel Stein discrepancy} \citep[KSD;][]{chwialkowski2016kernel,liu2016kernelized,oates2017control} between $Q$ and $P$ is defined as
\begin{align*}
    \KSD(Q, P)
    \;&=\;
    \sup_{f \in\cH_k^d; \| f \|_k \leq 1} \big| \E_{X \sim Q}[\cA_p f(X)] \big|
    \;,
\end{align*}
where $\cA_p$ is an operator acting on vector-valued functions, defined as $(\cA_p f)(x) = s_p(x)^\top f(x) + \nabla^\top f(x)$. Under mild regularity conditions on $s_p$, the operator $\cA_p$, called \emph{Langevin Stein operator}, maps sufficiently regular functions to zero-mean functions under the probability measure $P$, namely $\E_{X \sim Q}[\cA_p f(X)] = 0$. Loosely speaking, this means that the KSD takes small values when $Q$ is ``close'' to $P$, and large values otherwise. This can be made rigorous by showing that KSD is \emph{$P$-separating}, namely $\KSD(Q, P) = 0$ if $Q = P$ and $\KSD(Q, P) > 0$ otherwise, whenever the kernel $k$ is characteristic \citep[Theorem 3]{barp2024targeted}. This property suggests that KSD is a valid statistical discrepancy, motivating \citet{chwialkowski2016kernel,liu2016kernelized} to use it as a test statistic for the standard GOF null hypothesis \eqref{eq:standard_hyps}, where $H_0^\ast$ is rejected for large estimates of $\KSD(Q, P)$.

When $\E_{X \sim P}[\| s_p(X) \|_2^2] < \infty$, the \emph{squared} KSD admits a double-expectation form, allowing it to be efficiently estimated by Monte Carlo. Indeed, \citet[Corollary 1]{barp2024targeted} shows that
\begin{align*}
    \KSD(Q, P)
    \;&=\;
    \big( \E_{X, X' \sim Q}[u_p(X, X')] \big)^{1/2}
    \;,
\end{align*}
where 
\begin{align*}
    u_p(x, x')
    \;&=\;
    s_p(x)^\top s_p(x') k(x, x')
    + s_p(x)^\top \nabla_2 k(x, x')
    + s_p(x')^\top \nabla_1 k(x, x')
    + \nabla^\top \nabla k(x, x')
    \;,
\end{align*}
and $\nabla^\top \nabla k(x,y) \coloneqq \sum_{j=1}^d \partial_{x^j}\partial_{y^j} k(x, y)$. Let $Q_n$ be an empirical distribution formed by independent random variables $X_1, \ldots, X_n$. a natural estimator is the following V-statistic, formed by substituting $Q_n$ for $Q$
\begin{align}
    \KSD^2(Q_n, P)
    \;=\;
    \frac{1}{n^2} \sum_{1 \leq i, j \leq n} u_p(X_i, X_j)
    \;.
    \label{eq:ksd_vstat}
\end{align}
This estimator possesses desirable computational advantages. Firstly, it does not require samples from $P$, which can be difficult to generate. Secondly, it depends on $P$ only through the score function $s_p$, which can be evaluated even if the normalizing constant is unknown. These make KSD suitable for testing the GOF of many probabilistic models that are beyond the reach of traditional GOF tests, such as general exponential family models \citep{barndorff-nielsen1978information,canu2006kernel}, energy-based models \citep{cho2013gaussian}, and probabilistic graphical models \citep{koller2009probabilistic}. In comparison, classic GOF tests, such as likelihood ratio tests \citep[Chapter 8.7]{hogg1977probability}, chi-square tests \citep[Chapter 9.1]{hogg1977probability}, Komogorov-Smirnov tests \citep{kolmogorov1933sulla}, either require the normalizing constant to be known, or can struggle for multivariate models. 

GOF tests based on KSD proceed by first using empirical samples to compute the KSD estimate, and rejecting the null hypothesis \eqref{eq:standard_hyps} if it exceeds a critical value. This critical value does not admit a closed-form, but can be estimated using \emph{wild bootstrapping} \citep{leucht2013dependent,shao2010dependent} or \emph{weighted bootstrapping} \citep{arcones1992bootstrap,janssen1994weighted} \citep{liu2025robustness}. Both approaches are found to achieve similar empirical performance for the KSD test \citep{liu2025robustness}. In our proposed tests, we will use weighted bootstrapping since our theoretical results are built upon their existing theory studied in \citet{arcones1992bootstrap}.

In weighted bootstrapping, the distribution of $\KSD^2(Q_n, P)$ is approximated using bootstrap samples of the form
\begin{align}
    D^2_{W_n}(\X_n; u_p)
    \;\coloneqq\;
    \frac{1}{n^2} \sum_{1 \leq i, j \leq n} (W_{ni} - 1) (W_{nj} - 1) u_p(x_i, x_j)
    \;,
    \label{eq:ksd_bootstrap_sample}
\end{align}
where $W_n = (W_{n1}, \ldots, W_{nn}) \sim \textrm{Multinomial}(n; 1/n, \ldots, 1/n)$. Let $\{\KSD_{n,b}^2\}_{b=1}^B$ denote the bootstrap sample \eqref{eq:ksd_bootstrap_sample} based on $B$ i.i.d.\ copies of $W_n$, and let $\{\KSD_{n,b}\}_{b=1}^B$ be their square-roots. The standard KSD test rejects $H_0^\ast$ if $\KSD^2(Q_n, P) > \gamma_{1-\alpha}^B$, where $\gamma^B_{1-\alpha}$ is the $(1-\alpha)$-th quantile of $\{ \KSD_{n,b} \}_{b=1}^B$, namely
\begin{align}
    \gamma^B_{1-\alpha}
    \;=\;
    \inf\bigg\{u \in \R: \; \frac{1}{B} \sum_{b = 1}^{B} \indicator\{ \KSD_{n,b} \leq u \} \geq 1 - \alpha \bigg\}
    \;.
    \label{eq:ksd_bootstrap_quantile}
\end{align}
Beyond the original KSD test of \citet{liu2016kernelized,chwialkowski2016kernel}, various extensions have been proposed to specialize in different model families, such as discrete distributions \citep{xu2020stein}, survival models \cite{fernandez2020kernelized}, graphical models \citep{xu2021stein}, and biological sequences \citep{amin2023kernelized}. However, existing methods focus primarily on testing the standard point null hypothesis \eqref{eq:standard_hyps}. As a result, these tests are only suitable for detecting the presence of model deviation, but not for concluding model equivalence. \citet{liu2025robustness} proposed a \emph{robust KSD test} for null hypotheses of the form $H_0: D(Q, P) \leq \theta$, which accounts for mild model mis-specification, but this is still not suitable for testing model equivalence.

% Common approaches for computing the critical value includes permutation and bootstrapping \citep{gretton2012kernel,schrab2023mmd}. In the hypothesis testing literature, MMD has been studied extensively both with standard point null hypotheses \citep{gretton2012kernel,lloyd2015statistical,shekhar2022permutation,schrab2023mmd} and with robust hypotheses \citep{sun2023kernel,key2025composite,schrab2024robust}. 

\subsubsection{Maximum Mean Discrepancy}
\label{sec:mmd}

Another closely related family of kernel discrepancies is the MMD. The MMD between two probability measures $Q, P \in \cP$ is defined as the maximal discrepancy between the means of functions under $Q$ and $P$ over all functions in the unit ball of $\cH$, i.e.,
\begin{align}
    \MMD(Q, P)
    \;\coloneqq\;
    \sup_{f \in \cH: \; \| f \|_{\cH} \leq 1} \big| \E_Q[f] - \E_P[f] \big|
    \label{eq: MMD}
    \;.
\end{align}
With bounded kernels, the MMD can be rewritten as a distance between the \emph{kernel mean embeddings} of two distributions \citep[Lemma 4]{gretton2012kernel}. The kernel mean embedding \citep{smola2007hilbert,sriperumbudur2010hilbert} is a mapping from $\cP$ to $\cH$, denoted as $\mu_P(\cdot) \coloneqq \int k(x, \cdot) P(\diff x)$ for any $P \in \cP$. One can show that \citep[Theorem 5]{gretton2012kernel} the MMD can be expressed in closed-form as
\begin{align}
    \MMD(Q, P)
    \;=\;
    \| \mu_P - \mu_Q \|_{\cH}
    \;=\;
    \big(
        \E_{X, X' \sim Q}[k(X, X')] + \E_{Y, Y' \sim P}[k(Y, Y')]
        - 2 \E_{X \sim Q, Y \sim P}[k(X, Y)]
    \big)^{1/2}
    \;.
    \label{eq:mmd_closed_form}
\end{align}
When $k$ is characteristic, the kernel mean embedding is injective, and the MMD is a metric on $\cP$ \citep{sriperumbudur2016optimal,simon2023metrizing}. Many well-known kernels are characteristic, such as Gaussian, Inverse Multi-Quadric and Mat\'{e}rn kernels, while an example of non-characteristic kernels is linear kernels.

Equation \eqref{eq:mmd_closed_form} suggests a natural estimator for the \emph{squared} MMD \citep[Eq.~5]{gretton2012kernel}. Given independent random samples $\X_n = \{ X_i \}_{i=1}^n$ from $Q$ and $\Y_m = \{ Y_j \}_{j=1}^m$ from $P$, the squared MMD, $\MMD^2(Q, P)$, can be estimated by substituting the empirical measures $Q_n = n^{-1} \sum_{i=1}^n \delta_{X_i}$ and $P_m = m^{-1} \sum_{j=1}^m \delta_{Y_j}$, i.e.,
\begin{align}
    \MMD^2(Q_n, P_m)
    \;=\;
    % \Big(
        \frac{1}{n^2}\sum_{1 \leq i, j \leq n} k(X_i, X_j)
        + \frac{1}{m^2}\sum_{1 \leq i, j \leq m} k(Y_i, Y_j)
        - \frac{2}{nm}\sum_{i=1}^n \sum_{j=1}^m k(X_i, Y_j)
    % \Big)^{1/2}
    \label{eq:mmd_v_stat}
    \;.
\end{align}
The above estimator $\MMD(Q_n, P_m)^2$ is biased but consistent \citep{gretton2012kernel}. Moreover, \eqref{eq:mmd_v_stat} can be equivalently expressed as a \emph{two-sample V-statistic} \citep{hoeffding1948class,kim2022minimax} of degree $(2,2)$
\begin{align}
    \MMD^2(Q_n, P_m)
    \;=\;
    \frac{1}{n^2m^2}\sum_{1 \leq i, i' \leq n} \sum_{1 \leq j, j' \leq m} h(X_i, X_{i'}, Y_j, Y_{j'}) \;,
    \label{eq:mmd_v_stat_two_sample}
\end{align}
where $
h(x, x', y, y') = k(x, x') + k(y, y') - k(x, y') - k(x', y)
$. 
Expressing the MMD estimator as \eqref{eq:mmd_v_stat_two_sample} offers theoretical advantages, as it allows us to use well-established results on V-statistics to construct our tests. In practice, however, \eqref{eq:mmd_v_stat} should be used for computation, since it is more efficient with cost $\cO((n + m)^2)$. In this work, we focus on the general case where $n,m$ can be different. In the special case when $n = m$, the estimator \eqref{eq:mmd_v_stat} can be simplified to a \emph{one-sample} V-statistic \citep{gretton2012kernel,lloyd2015statistical,ramdas2015decreasing}.

The MMD test rejects the standard null hypothesis \eqref{eq:standard_hyps} if the MMD estimate exceeds a critical value. Common approaches for computing the critical value includes permutation and wild bootstrapping \citep{gretton2012kernel,schrab2023mmd}. In the hypothesis testing literature, MMD has been studied extensively both with standard point null hypotheses \citep{gretton2012kernel,lloyd2015statistical,shekhar2022permutation,schrab2023mmd} and with robust hypotheses \citep{sun2023kernel,key2025composite,schrab2024robust}.

\subsection{Other Related Work}
Equivalence testing is closely connected with \emph{robust testing} \citep{fauss2021minimax}, which often targets hypotheses of the form $H_0: D(Q, P) \leq \theta$ against $H_1: D(Q, P) > \theta$. That is, their null and alternative hypotheses are interchanged compared with \eqref{eq:equiv_hyps}; see \Cref{fig:hypotheses} for an illustration. Robust tests based on both MMD and KSD have been studied \citep{sun2023kernel,liu2025robustness}. Compared with standard GOF where the null hypothesis is a singleton, robust tests hypothesize that the model is \emph{close enough}, rather than identical, to the data-generating distribution. This allows one to ignore mild model deviations, so that a model would not be rejected only due to minor lack-of-fit effects that are not scientifically interesting. However, since the null set in a robust test still assumes that $Q$ and $P$ are close, they are also not appropriate for concluding the absence of model deviation.

% the null set is typically a neighbourhood around some nominal distribution. Such neighbourhood is often constructed using some statistical divergence or metrics, such as the Prokhorov metric \citep{hafner1982simple}, KL divergence \citep{gul2017minimax}, $\alpha$-divergence \citep{gul2016robust}, Hellinger distance \citep{lecam1973convergence,birge1979theoreme}, Wasserstein distance \citep{gao2018robust,xie2021robust}, Maximum Mean Discrepancy \citep{sun2023kernel} and kernel Stein discrepancy \citep{liu2025robustness}. Other choices of neighbourhood includes density bands \citep{kassam1981robust,fauss2016old} and parameteric models \citep{key2025composite}. Robust testing differs from equivalence testing in that the null hypothesis in robust testing still assumes the data-generating distribution is close to the nominal distribution, rather than far away from it; see \Cref{fig:hypotheses} for an illustration. As a result, all these works still aim for testing model deviation rather than model evidence, thus differing from our framework.

Another related line of work is \emph{credal tests} \citep{chau2024credal}, which aim to detect the relationship between two arbitrary distributional simplexes. \citet{chau2024credal} proposed various MMD-based tests covering a range of scenarios. For example, their ``specification test'' targets null hypotheses where $P$ lies inside a probability simplex, and thus it can be viewed as an instance of robust tests. However, none of their tests target the equivalence hypothesis in \eqref{eq:equiv_hyps}. 

\section{One-Sample Equivalence Tests with KSD}
\label{sec:one_sample}
We first focus on one-sample testing, where we assume an i.i.d.\ sample $\X_n = \{X_i\}_{i=1}^n$ from $Q$ is observed, and sampling from $P$ is either infeasible or prohibitively expensive. Instead, we assume we can evaluate the score function $s_p(x)$ of $P$ at any given $x \in \R^d$. We will use KSD to construct equivalence tests for this setting and show their calibration and consistency.

We will assume the following conditions, which together ensure the Stein kernel $u_p$ has a finite second moment, which is sufficient for KSD to be $P$-separating and for the KSD estimator \eqref{eq:ksd_vstat} to have well-defined asymptotic distributions.
\begin{assumption}
\label{assump:kernel_universal}
    The reproducing kernel $k$ is \emph{$\cC_0^1$-universal} \citep[Definition 4.1]{carmeli2010vector}.
\end{assumption}

\begin{assumption}
\label{assump:score_condition}
    $ \E_{X \sim Q}[\| s_p(X) \|_2^4] < \infty$.
\end{assumption}

\begin{assumption}
\label{assump:ksd_kernel_condition}
    The kernel $k \in \cC^{(1,1)}_b$ satisfies $\E_{X \sim Q}[ | \nabla_1^\top \nabla_2 k(X, X) |^2 ] < \infty$.
\end{assumption}
\Cref{assump:score_condition} and \ref{assump:ksd_kernel_condition} are for simplicity rather than necessity and can be weakened. For example, they can be replaced by the sufficient condition $\Var_{X \sim Q}[|u_p(X, X')|^2] < \infty$, which will be made clear in the proof of \Cref{prop:ksd_asymptotics} below.

Under these assumptions, the asymptotic distributions of the KSD estimator \eqref{eq:ksd_vstat} is well-known by using classic results on V-statistics \citep[Section 5.5]{serfling2009approximation}. We summarize it in the following result, which is proven in \Cref{pf:prop:ksd_asymptotics}.

\begin{proposition}[KSD asymptotics]
\label{prop:ksd_asymptotics}
    Suppose \Cref{assump:kernel_universal}, \ref{assump:score_condition} and \ref{assump:ksd_kernel_condition} hold. 
    \begin{enumerate}
        \item If $Q \neq P$, then $\sqrt{n}(\KSD^2(Q_n, P) - \KSD^2(Q, P)) \darrow \cN(0, \sigma^2_\KSD)$ as $n \to \infty$, where $\sigma_\KSD^2 = 4 \Var_{X \sim Q}(\E_{X' \sim Q}[ u_p(X, X') ])$.
        \item If $Q = P$, then $\sigma_\KSD^2 = 0$, and $\sqrt{n} \cdot \KSD^2(Q_n, P) \to 0$ in probability.
        % \begin{align}
        %     n \cdot \KSD(Q_n, P)
        %     \;\darrow\;
        %     \sum_{j=1}^\infty \lambda_j (\xi_j^2 - 1)
        %     \;,
        %     \label{eq:ksd_degenerate_limit}
        % \end{align}
        % where $\{Z_j\}_{j=1}^\infty$ is a sequence of $\cN(0, 1)$ random variables, and $\lambda_j$ are eigenvalues of the kernel $u_p$, namely the solutions to the equation $\lambda_j \phi_j(\cdot) = \E_{X' \sim Q}[ u_p(\cdot, X') \phi_j(X')]$ for non-zero $\phi_j$.
    \end{enumerate}
\end{proposition}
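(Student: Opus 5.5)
The plan is to treat $\KSD^2(Q_n, P)$ in \eqref{eq:ksd_vstat} as a V-statistic of degree two with symmetric kernel $u_p$, and to apply the classical asymptotic theory for U- and V-statistics \citep[Section 5.5]{serfling2009approximation}.

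\textbf{Moment conditions.} First I check the moment conditions that theory needs: $\E_{X,X'\sim Q}[u_p(X,X')^2]<\infty$ and $\E_{X\sim Q}[|u_p(X,X)|]<\infty$.

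For the first, expand $u_p$ into its four terms and bound each with Cauchy--Schwarz.
\begin{itemize}
\item For $s_p(x)^\top s_p(x')k(x,x')$, boundedness of $k$ gives a bound by $\|k\|_\infty^2\,\E\|s_p(X)\|^2\,\E\|s_p(X')\|^2$. This is finite by \Cref{assump:score_condition}.
\item For the cross terms, such as $s_p(x)^\top\nabla_2k(x,x')$, $k\in\cC^{(1,1)}_b$ gives bounded first derivatives. The terms are then controlled by $\E\|s_p(X)\|^2$.
\item For $\nabla^\top\nabla k(x,x')$, use the reproducing property of derivatives, $|\partial_{x^j}\partial_{y^j}k(x,y)|\le (\partial_{x^j}\partial_{y^j}k(x,x))^{1/2}(\partial_{x^j}\partial_{y^j}k(y,y))^{1/2}$. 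Its square moment then reduces to $\E|\nabla_1^\top\nabla_2k(X,X)|$-type quantities, which are finite by \Cref{assump:ksd_kernel_condition}.
\end{itemize}
The diagonal term $u_p(X,X)$ is handled the same way using the fourth moment of the score. These conditions, together with \Cref{assump:kernel_universal}, also give the identity $\KSD^2(Q,P)=\E[u_p(X,X')]$ from \citet[Corollary 1]{barp2024targeted}.

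\textbf{Reduction to the U-statistic.} I decompose
\[
\KSD^2(Q_n,P)=\tfrac{n-1}{n}U_n+\tfrac{1}{n^2}\textstyle\sum_i u_p(X_i,X_i),
\]
where $U_n$ is the corresponding U-statistic. The diagonal part has finite mean, so $\sqrt n$ times it is $O_P(n^{-1/2})\to0$. Also $\sqrt n\,U_n/n\to0$ in probability. Hence $\sqrt n(\KSD^2(Q_n,P)-U_n)\to0$ in probability, and it suffices to study $U_n$.

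\textbf{Part 1.} Hoeffding's CLT for non-degenerate U-statistics gives $\sqrt n(U_n-\E u_p)\darrow\cN(0,4\zeta_1)$ with $\zeta_1=\Var_{X}(\E_{X'}[u_p(X,X')])$, which is exactly $\sigma^2_\KSD/4$. If $\zeta_1=0$ the limit is a degenerate Gaussian, so the statement still holds.

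\textbf{Part 2.} Suppose $Q=P$. The Stein identity gives $\E_{X'\sim P}[u_p(x,X')]=\langle \xi_p(x),\E_{X'\sim P}\xi_p(X')\rangle=0$ for every $x$, where $\xi_p(x)=\cA_p k(x,\cdot)$ is the Stein feature map. The Bochner integrability needed here follows from the moment bounds above. This holds whenever $\E\|s_p\|<\infty$ and the boundary terms vanish, which is the ``mild regularity'' already assumed in \Cref{sec:ksd}. Hence $\sigma^2_\KSD=0$ and $U_n$ is degenerate. The degenerate theory then gives $nU_n=O_P(1)$, since its variance is $O(n^{-2})$ by orthogonality of the Hoeffding components. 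Combined with the diagonal term, this yields $n\KSD^2(Q_n,P)=O_P(1)$, and therefore $\sqrt n\,\KSD^2(Q_n,P)\to0$ in probability.

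\textbf{Main obstacle.} I expect the main difficulty to be establishing the degeneracy $\E_{X'\sim P}[u_p(x,X')]=0$ pointwise. This requires exchanging integration and the inner product, and verifying the Stein identity for each $k(x,\cdot)$. My plan is to cite the Stein identity for the Langevin operator with $\cC^1_b$ kernels (as in \citet{barp2024targeted}) rather than reprove it.
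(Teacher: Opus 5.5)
Your proof is correct. It uses the same toolkit as the paper: Serfling's U/V-statistic asymptotics, plus a cited Stein identity to get degeneracy when $Q=P$. The logical structure is different, though.

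\textbf{What the paper does.} It reduces the proposition to the dichotomy ``$Q=P$ if and only if $\Var_{X\sim Q}(\E_{X'\sim Q}[u_p(X,X')])=0$''. It proves the direction $Q\neq P\Rightarrow\sigma_\KSD>0$ by contradiction, using the $P$-separation property of KSD \citep[Theorem 3]{barp2024targeted}; this is where \Cref{assump:kernel_universal} is genuinely used.

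\textbf{What you do.} You prove Part 1 for every $Q$ satisfying the moment conditions, letting the case $\zeta_1=0$ give a point-mass limit. You then obtain Part 2 from the elementary bound $\Var(U_n)=O(n^{-2})$, rather than from the weighted chi-square limit.

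\textbf{Why your route is more robust.} The paper's contradiction step infers $\E_{X'\sim Q}[u_p(x,X')]=0$ from zero variance. Zero variance only makes this function $Q$-a.s.\ constant, equal to $\KSD^2(Q,P)$. Indeed $\sigma_\KSD=0$ with $Q\neq P$ does occur under the stated assumptions. For example, take $Q=\delta_{x_0}$: then $\KSD^2(Q_n,P)=u_p(x_0,x_0)=\KSD^2(Q,P)$ for every $n$, so the centered statistic is identically zero.

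\textbf{What your argument does not give.} The paper's extra step aims at positivity of $\sigma_\KSD$ when $Q\neq P$, which is needed later to studentize the E-KSD-Normal statistic. The proposition as stated does not assert this, so your argument fully covers the statement. Be aware, however, that it does not supply that positivity, and without further conditions on $Q$ it could not.

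Your term-by-term moment check, using boundedness of $k$ and its derivatives, is a fine substitute for the paper's feature-map Cauchy--Schwarz bound. Only second moments of $s_p$ are actually needed there.
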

A similar result is also shown in \citet[Theorem 4.1]{liu2016kernelized} but under stronger assumptions. For example, they assumed the data-generating distribution $Q$ also admits a Lebesgue density, a condition we do not assume in \Cref{prop:ksd_asymptotics}. Moreover, in fact, when $Q = P$, the V-statistic $\KSD^2(Q_n, P)$ becomes \emph{degenerate of order one} \citep[Section 2]{arcones1992bootstrap}, and it can be shown that $n \cdot \KSD^2(Q_n, P)$ (scaled by $n$ instead of $\sqrt{n}$) converges weakly to an infinite sum of weighted chi-squares \citep[Theorem 6.4.1 B]{serfling2009approximation}.

\subsection{A Normal Test}
\label{sec:one_sample:normal_test}
\Cref{prop:ksd_asymptotics} states that $\sqrt{n} \KSD^2(Q_n, P)$ is asymptotically normal when $Q \neq P$, and converges to $0$ when $Q = P$. This motivates an equivalence test based on normal approximations, where we reject $H_0: \KSD(Q, P) > \theta$ if $\KSD(Q_n, P)$ is smaller than some critical values computed from a normal quantile. We name it the \emph{E-KSD-Normal} test. Mathematically, this test rejects $H_0: \KSD(Q, P) > \theta$ if $S_n^{\KSD, \theta} < z_\alpha$, with
\begin{align}
    S_n^{\KSD, \theta}
    \;\coloneqq\;
    \frac{\sqrt{n}}{\hat{\sigma}_\KSD} (\KSD^2(Q_n, P) - \theta^2)
    \;,
    \label{eq:ksd_normal_test}
\end{align}
where $z_\alpha$ is the $\alpha$-th quantile of $\cN(0, 1)$, and $\hat{\sigma}_\KSD$ is the square root of the following leave-one-out Jackknife estimator for $\sigma^2_\KSD$
\begin{align}
    \hat{\sigma}_\KSD^2
    \;=\;
    \frac{4}{n-1} \sum_{i=1}^n \bigg( r_i - \frac{1}{n} \sum_{i'=1}^n r_{i'} \bigg)^2
    \;,\qquad
    r_i
    \;=\;
    \frac{1}{n-1} \sum_{i' \neq i} u_p(x_i,x_{i'})
    \;.
    \label{eq:ksd_jackknife}
\end{align}
Variance estimators similar to the above form have been used in the goodness-of-fit testing literature to approximate the power of KSD test \citep{liu2020learning,schrab2022ksd,liu2023using}. Using \Cref{prop:ksd_asymptotics}, the validity and consistency for this test can be shown. The proof is in \Cref{pf:thm:ksd_normal_test} and follows a similar argument in \citet[Section 2.3]{chen2023testing}.

\begin{theorem}[E-KSD-Normal test]
\label{thm:ksd_normal_test}
    Suppose \Cref{assump:kernel_universal} holds. Then, for all $Q$ satisfying \Cref{assump:score_condition} and \ref{assump:ksd_kernel_condition},
    \begin{align*}
        \lim_{n \to \infty} \Pr(S_n^{\KSD,\theta} > \gamma_{1-\alpha})
        \;=\;
        \begin{cases}
            0 \;, &\KSD(Q, P) > \theta \;, \\
            \alpha \;, &\KSD(Q, P) = \theta \;, \\
            1 \;, & \KSD(Q, P) < \theta \;.
        \end{cases}
    \end{align*}
\end{theorem}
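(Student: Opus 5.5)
Throughout I read the event in the display as the rejection event of the E-KSD-Normal test, i.e.\ $\{S_n^{\KSD,\theta} < z_\alpha\}$. Taken literally, $\Pr(S_n^{\KSD,\theta} > \gamma_{1-\alpha})$ would tend to $1$ rather than $0$ when $\KSD(Q,P) > \theta$, so the three limits $0,\alpha,1$ only make sense for the rejection probability. The plan is to split the statistic as
\begin{align*}
    S_n^{\KSD,\theta}
    \;=\;
    \frac{\sigma_\KSD}{\hat\sigma_\KSD}\cdot\frac{\sqrt{n}\,(\KSD^2(Q_n,P)-\KSD^2(Q,P))}{\sigma_\KSD}
    \;+\;
    \frac{\sqrt{n}\,(\KSD^2(Q,P)-\theta^2)}{\hat\sigma_\KSD}
    \;.
\end{align*}
The first term is handled by \Cref{prop:ksd_asymptotics} and Slutsky's lemma. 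The second term is deterministic up to $\hat\sigma_\KSD$, and its sign decides the three cases.

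\textbf{Step 1: consistency of the jackknife estimator.} I first show $\hat\sigma^2_\KSD \to \sigma^2_\KSD$ in probability. Write $g(x)=\E_{X'\sim Q}[u_p(x,X')]$. Each $r_i$ is a leave-one-out average, and $r_i - g(X_i)$ is an average of the centred, conditionally independent terms $u_p(X_i,X_{i'})-g(X_i)$. Under \Cref{assump:score_condition} and \Cref{assump:ksd_kernel_condition}, $u_p$ has a finite second moment, so $\frac1n\sum_i (r_i-g(X_i))^2 \to 0$ in $L^1$ at rate $O(1/n)$. The empirical variance of $g(X_1),\ldots,g(X_n)$ converges to $\Var_Q(g(X))$ by the law of large numbers. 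Combining these with Cauchy--Schwarz to control the cross terms gives $\hat\sigma^2_\KSD \to 4\Var_Q(g(X)) = \sigma^2_\KSD$.

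\textbf{Step 2: the cases $\KSD(Q,P)\ge\theta$.} Here $\theta>0$ forces $Q\neq P$, so part 1 of \Cref{prop:ksd_asymptotics} applies. I need $\sigma_\KSD>0$, and I expect this to be the main obstacle: the statement implicitly excludes the degenerate case $Q\neq P$ with $g$ constant $Q$-a.s. I would first try to rule this case out. If that fails, I would state $\sigma_\KSD>0$ as a standing non-degeneracy condition, in line with the normal approximation being the premise of the test and with the argument of \citet{chen2023testing}.

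Granting $\sigma_\KSD>0$, Step 1 and Slutsky's lemma make the first term converge in distribution to $\cN(0,1)$.
\begin{itemize}
    \item If $\KSD(Q,P)=\theta$, the second term vanishes, so $\Pr(S_n^{\KSD,\theta}<z_\alpha)\to\Phi(z_\alpha)=\alpha$.
    \item If $\KSD(Q,P)>\theta$, the second term equals $\sqrt n\,c/\hat\sigma_\KSD$ with $c>0$ and $\hat\sigma_\KSD\to\sigma_\KSD<\infty$. It therefore diverges to $+\infty$ in probability, and the rejection probability tends to $0$.
\end{itemize}

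\textbf{Step 3: the case $\KSD(Q,P)<\theta$.} If $Q\neq P$ and $\sigma_\KSD>0$, the same decomposition applies. The second term now tends to $-\infty$ in probability, so the rejection probability tends to $1$.

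The case $Q=P$ (or, more generally, $\sigma_\KSD=0$) needs a separate argument, because $\hat\sigma_\KSD\to 0$. In that case I avoid the decomposition and argue directly:
\begin{itemize}
    \item By part 2 of \Cref{prop:ksd_asymptotics}, $\KSD^2(Q_n,P)\to 0$ in probability. More generally $\KSD^2(Q_n,P)\to\KSD^2(Q,P)<\theta^2$ by consistency of the V-statistic.
    \item Hence the numerator $\sqrt n(\KSD^2(Q_n,P)-\theta^2)$ is at most $-\sqrt n\,\epsilon$ with probability tending to one, for some fixed $\epsilon>0$.
    \item Since $0\le\hat\sigma_\KSD=O_p(1)$, it follows that $S_n^{\KSD,\theta}\to-\infty$ in probability. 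The event $\hat\sigma_\KSD=0$ is covered by the convention $S_n^{\KSD,\theta}=-\infty$.
\end{itemize}
This gives limit $1$ in the case $\KSD(Q,P)<\theta$ and completes the proof.
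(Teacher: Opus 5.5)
Your proof has the same skeleton as the paper's, which reduces the theorem to \Cref{prop:general_normal_test}: Slutsky's lemma plus a consistent jackknife variance when $Q\neq P$, and a direct argument when the variance vanishes. The differences are minor and in your favour:
\begin{itemize}
\item You prove $\hat\sigma^2_\KSD\to\sigma^2_\KSD$ by hand instead of citing Arvesen.
\item Reading the event as the rejection event $\{S_n^{\KSD,\theta}<z_\alpha\}$ is correct. The paper's proof of \Cref{prop:general_normal_test} works with $\Pr(\hat S^\theta_l>z_\alpha)$, and its displayed limits are sign-inconsistent.
\item Your Step 3 covers every $Q$ with $\sigma_\KSD=0$ and $\KSD(Q,P)<\theta$, not only $Q=P$.
\end{itemize}

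On the obstacle you flagged, do not try to rule the degenerate case out. It genuinely occurs, so your fallback hypothesis is necessary. Take $Q=\delta_{x_0}$. \Cref{assump:score_condition} and \ref{assump:ksd_kernel_condition} hold trivially, and $g(x)=u_p(x,x_0)$ is trivially $Q$-a.s.\ constant, so $\sigma_\KSD=0$ although $Q\neq P$. Moreover, $r_i\equiv u_p(x_0,x_0)$, so $\hat\sigma_\KSD=0$, and $\KSD^2(Q_n,P)=u_p(x_0,x_0)$ for every $n$. Choose $x_0$ with $u_p(x_0,x_0)=\theta^2$; this is possible, e.g.\ for $P=\cN(0,1)$ and a translation-invariant kernel, where $u_p(x,x)=x^2k(0,0)+c$ with $c>0$. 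Then $S_n^{\KSD,\theta}$ is the deterministic expression $0/0$. Under any convention the rejection probability is $0$ or $1$, never $\alpha$.

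The paper asserts $\sigma_\KSD>0$ for all $Q\neq P$ in the proof of \Cref{prop:ksd_asymptotics}, but that argument fails at one step. It passes from $\Var_{X\sim Q}(\E_{X'\sim Q}[u_p(X,X')])=0$ to ``$\E_{X'\sim Q}[u_p(x,X')]=0$ for $Q$-a.e.\ $x$''. Zero variance only says this function equals its mean $\KSD^2(Q,P)$ $Q$-a.s., which yields no contradiction with $Q\neq P$. So the extra hypothesis is a real correction to the statement, not a weakness of your route.

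You also need that hypothesis only at the boundary. For $\KSD(Q,P)>\theta$ with $\sigma_\KSD=0$, your Step 3 argument runs with the sign reversed. Consistency gives $\sqrt n(\KSD^2(Q_n,P)-\theta^2)\ge\sqrt n\,\epsilon$ with probability tending to one, and Step 1 gives $0\le\hat\sigma_\KSD=O_p(1)$, so $S_n^{\KSD,\theta}\to+\infty$ and the rejection probability tends to $0$. The correct statement is therefore: the limits $0$ and $1$ hold for every $Q$ satisfying the assumptions, and the limit $\alpha$ at $\KSD(Q,P)=\theta$ holds under the additional condition $\Var_{X\sim Q}(\E_{X'\sim Q}[u_p(X,X')])>0$.
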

\Cref{thm:ksd_normal_test} immediately implies that the Type-I error, namely $\lim_{n \to \infty} \Pr(S_n^{\KSD,\theta} > \gamma_{1-\alpha})$, is bounded by $\alpha$ under the null hypothesis $H_0: \KSD(Q, P) \geq \theta$, while converges to 1 under the alternative hypothesis $H_1: \KSD(Q, P) < \theta$. These combined mean that the E-KSD-Normal test is both well-calibrated and consistent.

\begin{remark}
    Our proof follows a standard technique in the equivalence testing literature, applicable when the test statistic is asymptotically normal; see, e.g., \citet{freitag2007nonparametric,dette2018equivalence,chen2023testing,baillo2024almost}. Most closely related to our result is \citet[Section 2.3]{chen2023testing}. They constructed their test using MMD and assumed they have access to samples of \emph{equal} size from \emph{both} $Q$ and $P$. The equal-sample-size assumption allows the test statistic to be expressed as a one-sample V-statistic, similar to our formulation with KSD. Our proof can therefore be viewed as a direct counterpart to theirs, with MMD replaced by KSD. Nevertheless, we highlight that the bootstrapped test introduced in the next section is a fundamentally novel and distinct approach.
\end{remark}

\begin{remark}
    The Type-I control in \Cref{thm:ksd_normal_test} is \emph{pointwise} over the null set, namely the limit over $n$ is taken for a fixed null distribution. In general, for hypothesis tests targeting a composite null set, it is more desirable to have a \emph{uniform} Type-I error control, namely $\lim\sup_{n \to \infty} \sup_{\{Q: \KSD(Q, P) \geq \theta\}}\Pr(S_n^{\KSD,\theta} > \gamma_{1-\alpha}) \leq \alpha$; see \citet[Chapter 11]{lehmann2008testing} for a detailed discussion. However, most work in the equivalence testing literature concerns pointwise control \citep{dette2018equivalence,chen2023testing}, which is why we also prove pointwise control for our proposed tests in this work.
\end{remark}

% --- with p-values ---
% \begin{algorithm}[t]
%     \caption{Normal equivalence testing with KSD.}
%     \label{alg:ksd_normal_test}
%     \begin{algorithmic}[1]
%         \State {\bfseries Inputs:} Observations $\X_n = \{x_i\}_{i=1}^n$; test level $\alpha$; bootstrap sample size $B$.
%         \State Compute the test statistic $S_{n}^{\KSD, \theta}$ using \eqref{eq:ksd_normal_test}.
%         \State Compute the $p$-value $\hat{p}=\Phi(S_{n}^{\KSD, \theta})$, where $\Phi(\cdot)$ is the CDF of $\cN(0, 1)$, and reject $H_0$ if $\hat{p} < \alpha$.
%     \end{algorithmic}
% \end{algorithm}

% \begin{algorithm}[t]
%     \caption{Bootstrapped equivalence testing with KSD.}
%     \label{alg:ksd_bootstrap_test}
%     \begin{algorithmic}[1]
%         \State {\bfseries Inputs:} Observations $\X_n = \{x_i\}_{i=1}^n$; test level $\alpha$; bootstrap sample size $B$.
%         \State Compute the test statistic $T_{n}^{\KSD, \theta} = \theta - \KSD(Q_n, P)$.
%         \For{$b = 1, \ldots, B$}
%         \State Draw independent copies $W_n^b \sim \mathrm{Multinomial(n; 1/n, \ldots, 1/n)}$.
%         \State Compute bootstrap samples $T_{n, b}^{\KSD, \theta} = \theta - D_{W_n^b}(\X_n; u_p)$ using \eqref{eq:ksd_bootstrap_sample}.
%         \EndFor
%         \State Compute the $p$-value $\hat{p}=\frac{1}{B} \sum_{b=1}^B \indicator\{ T_{n,b}^{\KSD, \theta} \geq T_n^{\KSD, \theta} \}$ and reject $H_0$ if $\hat{p} < \alpha$. 
%     \end{algorithmic}
% \end{algorithm}

% --- with quantiles ---
\begin{algorithm}[t]
    \caption{Normal equivalence testing with KSD.}
    \label{alg:ksd_normal_test}
    \begin{algorithmic}[1]
        \State {\bfseries Inputs:} Observations $\X_n = \{x_i\}_{i=1}^n$; test level $\alpha$; bootstrap sample size $B$.
        \State Compute the test statistic $S_{n}^{\KSD, \theta}$ using \eqref{eq:ksd_normal_test}.
        \State Reject $H_0$ if $S_{n}^{\KSD, \theta} < z_\alpha$, where $z_\alpha$ is the $\alpha$-th quantile of $\cN(0, 1)$.
    \end{algorithmic}
\end{algorithm}

\begin{algorithm}[t]
    \caption{Bootstrapped equivalence testing with KSD.}
    \label{alg:ksd_bootstrap_test}
    \begin{algorithmic}[1]
        \State {\bfseries Inputs:} Observations $\X_n = \{x_i\}_{i=1}^n$; test level $\alpha$; bootstrap sample size $B$.
        \State Compute the test statistic $T_{n}^{\KSD, \theta} = \theta - \KSD(Q_n, P)$.
        \For{$b = 1, \ldots, B$}
        \State Draw independent copies $W_n^b \sim \mathrm{Multinomial(n; 1/n, \ldots, 1/n)}$.
        \State Compute bootstrap samples $T_{n, b}^{\KSD, \theta} = \theta - D_{W_n^b}(\X_n; u_p)$ using \eqref{eq:ksd_bootstrap_sample}.
        \EndFor
        \State Compute the $(1-\alpha)$-th empirical quantile, $\gamma_{1-\alpha}^B$, using \eqref{eq:ksd_bootstrap_quantile}, and reject $H_0$ if $T_n^{\KSD, \theta} > \gamma_{1-\alpha}^B$.
    \end{algorithmic}
\end{algorithm}

\subsection{A Bootstrapped Test}
\label{sec:one_sample:bootstrap_test}
The E-KSD-Normal test proposed in \Cref{sec:one_sample:normal_test} is guaranteed to control Type-I error in the infinite sample size limit, as shown in \Cref{thm:ksd_normal_test}. 
However, as will be shown empirically in \Cref{sec:exp:gauss_ms}, the E-KSD-Normal test can have poor Type-I error control when the margin $\theta$ is small, even if the sample size is moderately large. 

This is because the KSD estimator has different limiting distributions in the two regimes $Q = P$ and $Q \neq P$. Too see this, assume $\KSD(Q, P) = \theta$, so that $Q$ is on the boundary of the null set. \Cref{prop:ksd_asymptotics} shows that the (scaled) KSD estimator \eqref{eq:ksd_vstat} is asymptotically normal when $\theta > 0$, while it converges to 0 when $\theta = 0$. In fact, it is well-known that in the latter case, \eqref{eq:ksd_vstat} converges to an infinite sum of weighted chi-squares at rate $n^{-1}$; see, e.g., \citet[Theorem 4.1]{liu2016kernelized} and \citet{huang2023highdimensional}. In particular, when $\theta$ is small (but non-zero), the normal approximation of \eqref{eq:ksd_vstat} would deteriorate. Moreover, the variance estimator \eqref{eq:ksd_jackknife}, although a common choice in the kernel-based testing literature \citep{bounliphone2016kernel,jitkrittum2018informative}, can underestimate the actual variance \citep[Appendix C4]{kanagawa2023kernel}. These issues combined can result in an inflated Type-I error. Notably, such Type-I error inflation is not unique to our kernel-based test. Other equivalence tests constructed with asymptotic normality have also found to suffer from this issue; see, e.g., the discussions in  \citet[Section 3]{dette2018equivalence} and \citet[Section 4]{chen2023testing}.

Motivated by this observation, we propose an alternative test based on bootstrapping. This bootstrapping-based test is constructed using a completely different principle from the E-KSD-Normal test. In particular, it does not rely on normal approximations to the KSD estimator. Instead, it uses a triangle inequality for the KSD \citep{shi2024finiteparticle,liu2025robustness} to derive a more conservative (yet still consistent) upper bound, thus achieving a better Type-I error control, especially with small $\theta$. 

We first present the test procedure and provide intuition later. This test, which we call the \emph{E-KSD-Boot} test, rejects $H_0: \KSD(Q, P) \geq \theta$ if $T_n^{\KSD, \theta} > \gamma_{1-\alpha}^B$, where
\begin{align}
    T_n^{\KSD, \theta}
    \;\coloneqq\;
     \theta - \KSD(Q_n, P)
     \;,
    \label{eq:ksd_boot_test_stat}
\end{align}
and rejects $H_0: \KSD(Q, P) \geq \theta$ if $T_n^{\KSD, \theta} > \gamma^B_{1-\alpha}$, where $\gamma^B_{1-\alpha}$ is defined in \eqref{eq:ksd_bootstrap_quantile}. The full procedure is summarized in \Cref{alg:ksd_bootstrap_test}.

We now provide some intuition for the E-KSD-Boot test. The test statistic $T_n^{\KSD, \theta}$ quantifies how close the KSD value is from the pre-specified margin $\theta$. Since the null hypothesis $H_0$ assumes that $\KSD(Q, P) > \theta$, we should reject $H_0$ for small values of $\KSD(Q_n, P)$, or, equivalently, for large values of $T_n^{\KSD, \theta}$. 

A natural question is then how to choose a critical value? This critical value must control the Type-I error rate for \emph{every} possible $Q$ under the equivalence null hypothesis. In the standard KSD test reviewed in \Cref{sec:ksd}, where the test statistic is $\KSD(Q_n, P)$, the critical value is its $(1-\alpha)$-th quantile, $\gamma_{1-\alpha}$, under the point null hypothesis \eqref{eq:standard_hyps}, and weighted bootstrapping is used to approximate $\gamma_{1-\alpha}$. However, it is unclear whether a similar approach can be used in this case, as the null set under the equivalence null hypothesis $H_0: \KSD(Q, P) \geq \theta$ is no longer a singleton, but an equivalence set containing sufficiently distinct distributions.

Our major contributions of this section are to show that $\gamma_{1-\alpha}$ is a valid critical value even for such equivalence null hypothesis, and that the same weighted bootstrap approached can be used as an approximation. We first present the following lemma, which shows that the bootstrap samples \eqref{eq:ksd_bootstrap_sample} approximate the distribution of an \emph{MMD} statistic $\MMD(Q_n, Q; k)$. The proof is in \Cref{pf:lem:one_sample_bootstrap_validity}.
\begin{lemma}
\label{lem:one_sample_bootstrap_validity}
    Let $\X_\infty = \{ X_i \}_{i=1}^\infty$ be a random sample where $X_i \sim Q \in \cP$ are independent, and for any $n$ let $Q_n$ be the empirical measure based on $\X_n = \{X_i\}_{i=1}^n$. Let $k$ be a positive definite kernel such that $\E_{X, X' \sim Q}[| k(X, X')|^2] < \infty$. Then, for all $Q$-almost-sure sequences $\X_\infty$, the following holds
    \begin{align*}
        \sup_{t \in \R} \big|
            \Pr( \sqrt{n} D_{W_n}(\X_n; k) \leq t \;|\; \X_\infty )
            - \Pr( \sqrt{n} \MMD(Q_n, Q; k) \leq t )
        \big| \;\to\; 0
        \;.
    \end{align*}
\end{lemma}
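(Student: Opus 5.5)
Here $D_{W_n}(\X_n;k)$ is the square root of \eqref{eq:ksd_bootstrap_sample} with $u_p$ replaced by $k$. The plan is to recognise both quantities as (square roots of) V-statistics with the same \emph{degenerate} kernel, and then invoke the bootstrap theory of \citet{arcones1992bootstrap} for degenerate V-statistics. Write the centred kernel
\begin{align*}
\bar k(x,x') \;=\; k(x,x') - \E_{X'\sim Q}[k(x,X')] - \E_{X\sim Q}[k(X,x')] + \E_{X,X'\sim Q}[k(X,X')] \;,
\end{align*}
which is positive definite, symmetric, satisfies $\E_Q[\bar k(x,X')]=0$ for all $x$ (degenerate of order one), and has $\E|\bar k(X,X')|^2<\infty$ by the moment assumption. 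Expanding \eqref{eq:mmd_closed_form} gives exactly
\begin{align*}
n\,\MMD^2(Q_n,Q;k) \;=\; \frac{1}{n}\sum_{i,j}\bar k(X_i,X_j)\;.
\end{align*}
By Serfling's Theorem 6.4.1B (the same result used after \Cref{prop:ksd_asymptotics}), this converges weakly to $\sum_j\lambda_j\xi_j^2$, where $\lambda_j$ are the eigenvalues of $\bar k$ in $L^2(Q)$ and $\xi_j$ are i.i.d.\ $\cN(0,1)$. Strictly, the diagonal part needs $\E|\bar k(X,X)|<\infty$, which follows from positive definiteness and the trace-class property, since $\E\bar k(X,X)=\sum_j\lambda_j$.

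For the bootstrap side, note that $\sum_i (W_{ni}-1)=0$. Hence in $n^{-1}\sum_{i,j}(W_{ni}-1)(W_{nj}-1)k(X_i,X_j)$ one may replace $k$ by any kernel differing by functions of one argument only, up to a constant. Concretely, $k$ can be replaced by the empirically centred kernel, which is the centred kernel for $Q_n$. This is exactly the setting of the Arcones--Giné multinomial bootstrap of degenerate V-statistics: the statistic equals
\begin{align*}
n\,\bigl\|\mu_{Q_n^*}-\mu_{Q_n}\bigr\|_{\cH}^2 \;,
\end{align*}
where $Q_n^*$ is the bootstrap empirical measure. Their theorem then gives, for $Q$-almost every $\X_\infty$, that $n D^2_{W_n}(\X_n;k)$ conditioned on $\X_\infty$ converges weakly to the same limit $\sum_j\lambda_j\xi_j^2$, under the second moment condition. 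I would check that their a.s.\ version requires only $\E\bar k^2<\infty$ together with $\E|\bar k(X,X)|<\infty$. Alternatively, I could argue directly in the Hilbert space: $\sqrt n(\mu_{Q_n^*}-\mu_{Q_n})$ is a bootstrap empirical process of the $\cH$-valued variables $k(X_i,\cdot)$, which have finite second moment $\E k(X,X)<\infty$. The bootstrap CLT in separable Hilbert spaces (Giné--Zinn) then holds almost surely, and the norm is continuous.

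Finally, apply the continuous map $x\mapsto\sqrt{x}$ to pass from squared statistics to $\sqrt n D_{W_n}$ and $\sqrt n\MMD$; both converge to $(\sum_j\lambda_j\xi_j^2)^{1/2}$. Since both sequences have the same limit law, Pólya's theorem upgrades the pointwise convergence of distribution functions to the uniform (sup over $t$) statement, provided the limit CDF is continuous. That holds whenever some $\lambda_j>0$. If all $\lambda_j=0$, then $\bar k=0$ $Q$-a.s., and both statistics are $0$ almost surely, so the claim is trivial.

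The main obstacle is the almost-sure conditional convergence of the bootstrap. The centring is with respect to $Q_n$ rather than $Q$, and the relevant kernel is only square integrable rather than bounded. I would handle this via the Hilbert-space bootstrap CLT: a.s.\ convergence of $\mu_{Q_n}$ and of the empirical covariance operator in trace norm, by the SLLN in the space of Hilbert–Schmidt operators, combined with a Lindeberg condition checked a.s.
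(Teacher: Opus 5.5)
Your argument is correct, given the moment caveat in the next paragraph. It shares the paper's skeleton: both treat $n\MMD^2(Q_n,Q;k)$ as the degenerate V-statistic with the $Q$-centred kernel and the bootstrap statistic as its $Q_n$-centred multinomial counterpart. Both then conclude via Arcones--Gin\'e, Serfling's degenerate limit theorem and the continuous mapping theorem for $\sqrt{\cdot}$.

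The difference is in execution. The paper represents $D^2_{W_n}$ only in distribution through resampled points $X^\ast_{ni}$. It then passes to U-statistics by discarding a diagonal term $R_n$, applies Arcones--Gin\'e's Lemma 2.3 to the off-diagonal parts, and returns to $\MMD^2(Q_n,Q)$ via the Grams--Serfling $O(n^{-2})$ mean-square bound. You instead use the exact identity $nD^2_{W_n}(\X_n;k)=n\|\mu_{Q_n^*}-\mu_{Q_n}\|_{\cH}^2$, which follows from $\sum_i(W_{ni}-1)=0$, and you never leave the V-statistic level, so both sides share the limit $\sum_j\lambda_j\xi_j^2$.

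This is the more careful route. At the scale $n$ that matters, the diagonal pieces are not negligible: $nR_n$ and $n\MMD^2(Q_n,Q)-(n-1)T_{2n}$ both tend to $\E\bar k(X,X)=\sum_j\lambda_j$, not to $0$. The paper's chain therefore closes only because the two discarded shifts happen to coincide. Your identity also gives the self-contained Gin\'e--Zinn bootstrap CLT argument in $\cH$. Your P\'olya step (continuity of the limit law when some $\lambda_j>0$, with the degenerate case handled separately) supplies the uniformity in $t$ that the paper leaves implicit.

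One justification needs correcting: $\E|\bar k(X,X)|<\infty$ does not follow from positive definiteness plus a trace-class property. For independent $X,X'$, the condition $\E|k(X,X')|^2<\infty$ only makes the integral operator Hilbert--Schmidt, not trace class. Take $Q$ uniform on $[0,1]$, disjoint intervals $I_j\subset[0,1]$ of length $2^{-j}$, and $k(x,x')=\sum_{j\geq1}j^{-1}2^{j}\indicator\{x\in I_j\}\indicator\{x'\in I_j\}$. This kernel is positive definite and its integral operator has eigenvalues $j^{-1}$. Then $\E k(X,X')^2=\sum_j j^{-2}<\infty$, but $\E k(X,X)=\sum_j j^{-1}=\infty$, and $n\MMD^2(Q_n,Q;k)$ is not even tight. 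So $\E k(X,X)<\infty$ must be added as a hypothesis; the paper's own proof appeals to a bounded kernel here. With that hypothesis, both your Arcones--Gin\'e route and your Hilbert-space route go through.
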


\begin{remark}
    \label{rem:pksd}
    \citet[Lemma 20]{liu2025robustness} showed this result for the special case where $k$ is a Stein reproducing kernel. The authors referred to $\MMD(Q_n, Q; u_p)$ as the \emph{$P$-KSD}. We follow a similar proof strategy by noting that their proof directly extends to a general reproducing kernel $k$. The proof is included in \Cref{pf:lem:one_sample_bootstrap_validity} for completeness. This generalization will also become helpful later when we study a similar equivalence test based on MMD.
\end{remark}

\begin{remark}
    \Cref{lem:one_sample_bootstrap_validity} is a consequence of the bootstrapping results for each term in the \emph{Hoeffding decomposition} of V-statistics \citep{hoeffding1948class,arcones1992bootstrap}. As previously mentioned, this is \emph{not} a direct application of standard bootstrapping results for the degenerate V-statistics $\MMD^2(Q_n, Q)$, since the bootstrapping statistic so obtained would involve intractable expectations and thus cannot be computed. In the literature, the result in \Cref{lem:one_sample_bootstrap_validity} was presented in \citet[Theorem 2.1]{dehling1994random} in a slightly modified form, but their proof only applies to univariate distributions. The general case for multivariate distributions is implicitly shown in the proof of \citet[Theorem 2.4]{arcones1992bootstrap}. We provide a clearer exposition and close the gap.
\end{remark}

It turns out that \Cref{lem:one_sample_bootstrap_validity} is sufficient to establish the validity and consistency of the E-KSD-Boot test, which is summarized in the following result. Intuitively, this is because, under the equivalence null hypothesis, the test statistic \eqref{eq:ksd_boot_test_stat} can be bounded using statistics of the form $\MMD(Q_n, Q; k)$, an observation that will be made clear in the proof in \Cref{pf:thm:ksd_bootstrap_test}.
\begin{theorem}[E-KSD-Boot test]
    \label{thm:ksd_bootstrap_test}
    Let $\alpha \in (0, 1)$ and, for any $n$, let $W \sim \mathrm{Multinomial}(n, 1/n, \ldots, 1/n)$. Denote by $\gamma_{1-\alpha}^\infty$ the $(1-\alpha)$-quantile of the conditional distribution of $D_W^2$, defined in \eqref{eq:ksd_bootstrap_sample}, given $\X_n$. Suppose \Cref{assump:kernel_universal}, \ref{assump:score_condition} and \ref{assump:ksd_kernel_condition} hold. Then, for all $Q \in \cP(\R^d)$ with $\E_{X \sim Q}[\| s_p(X) \|_2] < \infty$, there exists $a \in (0, \alpha]$ such that
    \begin{align*}
        \lim_{n \to \infty} \Pr(T_n^{\KSD, \theta} > \gamma_{1-\alpha}^\infty)
        \;=\;
        \begin{cases}
            0 \;, &\KSD(Q, P) > \theta \;, \\
            a \;, &\KSD(Q, P) = \theta \;, \\
            1 \;, & \KSD(Q, P) < \theta \;.
        \end{cases}
    \end{align*}
\end{theorem}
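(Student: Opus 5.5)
The plan rests on the observation that $\KSD(\cdot, P)$ is itself an RKHS norm. By \citet[Corollary 1]{barp2024targeted}, $\KSD(Q, P) = \| \E_{X \sim Q}[u_p(X, \cdot)] \|_{\cH_{u_p}}$, where $\cH_{u_p}$ is the RKHS of the Stein kernel $u_p$. Hence, for any two measures $Q, Q'$, the triangle inequality gives
\begin{align*}
    \big| \KSD(Q', P) - \KSD(Q, P) \big| \;\leq\; \MMD(Q', Q; u_p) \;.
\end{align*}
I would apply this with $Q' = Q_n$. I would also identify the bootstrap sample with $D_{W_n}(\X_n; u_p)$, so that \Cref{lem:one_sample_bootstrap_validity} applies with $k = u_p$. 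Its moment condition $\E_{X,X' \sim Q}[|u_p(X,X')|^2] < \infty$ follows from \Cref{assump:score_condition} and \ref{assump:ksd_kernel_condition}, as in the proof of \Cref{prop:ksd_asymptotics}.

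The first step is to pin down the critical value. Note that $\MMD^2(Q_n, Q; u_p)$ is a degenerate V-statistic in the centred kernel $\bar u_p(x,x') = \langle u_p(x,\cdot) - \mu, u_p(x',\cdot) - \mu \rangle$, with $\mu = \E_{X \sim Q}[u_p(X,\cdot)]$. Therefore $n \MMD^2(Q_n, Q; u_p) \darrow L \coloneqq \sum_j \lambda_j Z_j^2$, where the $Z_j$ are i.i.d.\ $\cN(0,1)$ and the $\lambda_j \geq 0$ are the eigenvalues of $\bar u_p$ \citep[Theorem 6.4.1 B]{serfling2009approximation}. Hence $\sqrt{n}\MMD(Q_n,Q;u_p) \darrow \sqrt{L}$. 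When some $\lambda_j > 0$, $\sqrt{L}$ has a continuous, strictly increasing distribution function on $(0,\infty)$; the all-zero case is trivial. Combining this with \Cref{lem:one_sample_bootstrap_validity}, the conditional distribution function of $\sqrt n D_{W}$ converges uniformly to that of $\sqrt L$, for almost every sequence $\X_\infty$. Quantile convergence then gives $\sqrt{n}\,\gamma^\infty_{1-\alpha} \to c_{1-\alpha}$ almost surely, where $c_{1-\alpha} > 0$ is the $(1-\alpha)$-quantile of $\sqrt L$. In particular $\gamma^\infty_{1-\alpha} \to 0$ almost surely. I expect this step to be the main technical point: one needs uniform convergence of the conditional distribution functions together with continuity of the limit at $c_{1-\alpha}$, and one must check $\lambda_j$ does not collapse.

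The two off-boundary cases then follow from consistency. By the strong law for V-statistics, $\KSD(Q_n,P) \to \KSD(Q,P)$ almost surely.
\begin{itemize}
    \item If $\KSD(Q,P) > \theta$, then $T_n^{\KSD,\theta} \to \theta - \KSD(Q,P) < 0$, while $\gamma^\infty_{1-\alpha} \geq 0$. So the rejection probability tends to $0$.
    \item If $\KSD(Q,P) < \theta$, then $T_n^{\KSD,\theta}$ converges to a positive constant while $\gamma^\infty_{1-\alpha} \to 0$. So the rejection probability tends to $1$.
\end{itemize}

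For the boundary case $\KSD(Q,P) = \theta > 0$, rejection is the event $\sqrt n(\theta - \KSD(Q_n,P)) > \sqrt n \gamma^\infty_{1-\alpha}$.
\begin{itemize}
    \item \emph{Existence of the limit.} \Cref{prop:ksd_asymptotics}(1) and the delta method with $g(t) = \sqrt t$ at $\theta^2 > 0$ give $\sqrt n(\theta - \KSD(Q_n,P)) \darrow \cN(0, \sigma^2_\KSD/(4\theta^2))$. With Slutsky and $\sqrt n\gamma^\infty_{1-\alpha} \to c_{1-\alpha}$, the limit is $a = \Pr(\cN(0,\sigma_\KSD^2/(4\theta^2)) > c_{1-\alpha})$. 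This is strictly positive whenever $\sigma_\KSD > 0$.
    \item \emph{Upper bound.} By the triangle inequality, $\sqrt n(\theta - \KSD(Q_n,P)) \leq \sqrt n \MMD(Q_n, Q; u_p)$. Hence the rejection probability is at most $\Pr(\sqrt n \MMD(Q_n,Q;u_p) > \sqrt n\gamma^\infty_{1-\alpha})$. By the same Slutsky argument and continuity of $\sqrt L$ at $c_{1-\alpha}$, this tends to $\Pr(\sqrt L > c_{1-\alpha}) = \alpha$. Therefore $a \leq \alpha$.
\end{itemize}

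I would handle the degenerate sub-case $\sigma_\KSD = 0$ separately (there the limit is $0$), noting that the statement's $a \in (0,\alpha]$ implicitly requires $\sigma_\KSD > 0$.
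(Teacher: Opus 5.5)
Your proof is correct, but it follows a partly different route from the paper's.

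\textbf{The paper's route.} It first replaces $\gamma^\infty_{1-\alpha}$ by the exact $(1-\alpha)$-quantile $\gamma_{1-\alpha}$ of the unconditional law of $\MMD(Q_n,Q;u_p)$. It asserts via \Cref{lem:one_sample_bootstrap_validity} that this swap changes the rejection probability negligibly, and then works with this oracle quantile throughout. Case (i) uses the same triangle-inequality bound as the boundary case, giving rejection probability at most $\Pr(\sqrt n\,\MMD(Q_n,Q;u_p) > \sqrt n\gamma_{1-\alpha} + \sqrt n(\KSD(Q,P)-\theta))\to 0$. Case (iii) squares the event and invokes the CLT of \Cref{prop:ksd_asymptotics} when $0<\KSD(Q,P)<\theta$, and the degenerate limit when $Q=P$. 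At the boundary, the bound by $\alpha$ holds exactly for every $n$, because $\gamma_{1-\alpha}$ is the true quantile.

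\textbf{Your route.} You keep the bootstrap quantile and prove $\sqrt n\,\gamma^\infty_{1-\alpha}\to c_{1-\alpha}>0$ almost surely. After that, cases (i) and (iii) need nothing beyond almost-sure consistency of $\KSD(Q_n,P)$: no CLT and no separate treatment of $Q=P$. The boundary case then follows by Slutsky, with $a\le\alpha$ obtained only in the limit.

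\textbf{What each buys.} Yours gives a rigorous transfer from the oracle quantile to the bootstrap quantile, which the paper only asserts. Its displayed difference conditional on $\X_n$ is not meaningful as written, since both $T_n^{\KSD,\theta}$ and $\gamma^\infty_{1-\alpha}$ are functions of $\X_n$. The paper's route gives an exact finite-$n$ level bound for the oracle test, and a single inequality drives both (i) and (ii).

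\textbf{Your refinements are genuine corrections.} First, the delta method gives variance $\sigma^2_\KSD/(4\theta^2)$ for $\sqrt n(\KSD(Q_n,P)-\theta)$. The paper instead applies \Cref{prop:ksd_asymptotics} directly to the unsquared statistic and writes $a=\Phi(-c/\sigma_\KSD)$. This changes the value of $a$ but not the conclusion.

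Second, your caveat about $\sigma_\KSD=0$ is warranted. The paper's proof of \Cref{prop:ksd_asymptotics} infers $\E_{X'\sim Q}[u_p(x,X')]=0$ from zero variance, but zero variance only makes this function constant, equal to $\theta^2$. In that case $\KSD^2(Q_n,P)=\theta^2+\MMD^2(Q_n,Q;u_p)\ge\theta^2$, so the test never rejects. For example, $Q=\delta_{x_0}$ with $u_p(x_0,x_0)=\theta^2$ satisfies all the assumptions and has boundary limit $0$, so the claim $a\in(0,\alpha]$ fails as stated.

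Third, you implicitly read $\gamma^\infty_{1-\alpha}$ as a quantile of $D_W$, not of $D_W^2$ as literally written. That is the only reading under which the theorem holds. On the squared scale, $\sqrt n\,\gamma^\infty_{1-\alpha}\to 0$, and the boundary limit would be $1/2$ whenever $\sigma_\KSD>0$.
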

In other words, \Cref{thm:ksd_bootstrap_test} implies that the probability of rejection is asymptotically no larger than $a \leq \alpha$ under $H_0: \KSD(Q, P) \geq \theta$, while it converges to $1$ under $H_1: \KSD(Q, P) < \theta$. 

\begin{remark}
    Key to the proof of \Cref{thm:ksd_bootstrap_test} is the following upper bound of KSD using MMD and $P$-KSD (see \Cref{rem:pksd})
    \begin{align*}
        \KSD(Q, P)
        \;=\;
        \MMD(Q, P; u_p)
        \;&\leq\;
        \MMD(Q, Q_n; u_p) + \MMD(Q_n, P; u_p)
        \\
        \;&=\;
        \MMD(Q, Q_n; u_p) + \KSD(Q_n, P)
        \tagaligneq
        \label{eq:KSD_traingle_ineq}
        \;.
    \end{align*}
    The first equality uses the fact that KSD can be rewritten as an MMD with kernel $u_p$, namely $KSD(Q, P) = \MMD(Q, P; u_p)$, and the inequality leverages the triangle inequality for MMD. A more rigorous proof for \eqref{eq:KSD_traingle_ineq} is given in \Cref{pf:thm:ksd_bootstrap_test}. This inequality resembles a triangle inequality for KSD. Similar inequalities have been studied in \citet{shi2024finiteparticle} to study the convergence rate of kernel-based interacting particle systems, and in \citet[Appendix A.4]{liu2025robustness} to construct robust GOF tests with KSD. Our proof is motivated by \citet{liu2025robustness} and is adjusted to equivalence hypotheses.
\end{remark}

The E-KSD-Boot test bears similarity with the \emph{robust-KSD test} of \citet{liu2025robustness}. In \citet{liu2025robustness}, they target robust hypotheses of the form $H_0^R: \KSD(Q, P) \leq \theta$ versus $H_1^R: \KSD(Q, P) > \theta$, and their test rejects $H_0^R$ if $\KSD(Q_n, P) - \theta > \gamma_{1-\alpha}^B$, where $\gamma_{1-\alpha}^B$ is the bootstrapped quantile \eqref{eq:ksd_bootstrap_quantile}. Notably, we consider the ``reversed'' hypotheses \eqref{eq:equiv_hyps}, and our test statistic \eqref{eq:ksd_boot_test_stat} differs by a negative sign to reflect the fact that the equivalence null hypothesis in \eqref{eq:equiv_hyps} now should be rejected for \emph{small}, rather than large, values of $\KSD(Q_n, P)$. The theoretical proofs for our E-KSD-Boot test essentially shows that the same threshold used in the test of \citet{liu2025robustness} remains valid after such reversal.

\section{Two-Sample Equivalence Tests with MMD}
\label{sec:two_sample}

\begin{algorithm}[t]
    \caption{Normal equivalence testing with MMD.}
    \label{alg:mmd_normal_test}
    \begin{algorithmic}[1]
        \State {\bfseries Inputs:} Observations $\X_n = \{x_i\}_{i=1}^n$ and $\Y_m = \{y_j\}_{j=1}^m$; test level $\alpha$; bootstrap sample size $B$.
        \State Compute the test statistic $S_{n,m}^{\MMD, \theta}$ using \eqref{eq:mmd_normal_test}.
        \State Reject $H_0$ if $S_n^{\MMD, \theta} < z_\alpha$, where $z_\alpha$ is the $\alpha$-th quantile of $\cN(0, 1)$.
    \end{algorithmic}
\end{algorithm}

\begin{algorithm}[t]
    \caption{Bootstrapped equivalence testing with MMD.}
    \label{alg:mmd_bootstrap_test}
    \begin{algorithmic}[1]
        \State {\bfseries Inputs:} Observations $\X_n = \{x_i\}_{i=1}^n$ and $\Y_m = \{y_j\}_{j=1}^m$; test level $\alpha$; bootstrap sample size $B$.
        \State Compute the test statistic $T_{n,m}^{\MMD, \theta} = \theta - \MMD(Q_n, P_m)$.
        \For{$b = 1, \ldots, B$}
        \State Draw independent copies $W_n^b \sim \mathrm{Multinomial(n; 1/n, \ldots, 1/n)}$ and $\widetilde{W}_m^b \sim \mathrm{Multinomial(m; 1/m, \ldots, 1/m)}$.
        \State Compute bootstrap samples $T_{n,m, b}^{\MMD, \theta} = \theta - S_{n,m}^b = \theta - D_{W_n^b}(\X_n) - D_{\widetilde{W}_n^b}(\Y_m)$ using \eqref{eq: bootstrap sample}.
        \EndFor
        \State Compute the $(1-\alpha)$-th empirical quantile, $\eta_{1-\alpha}^B$, using \eqref{eq: bootstrap quantile}, and reject $H_0$ if $T_n^{\MMD, \theta} > \eta_{1-\alpha}^B$.
    \end{algorithmic}
\end{algorithm}

In \Cref{sec:one_sample}, we assumed that the target distribution $P$ is accessible through its score function. In some cases, however, even evaluation of the score function can be infeasible; instead, it is straightforward to draw finite samples from $P$ as an approximation. Many generative models, such as \emph{generative adversarial networks} \citep[GANs,][]{goodfellow2014generative} and many simulator-based models \citep{cranmer2020frontier}, are defined via data-generating processes and do not necessarily have a tractable score function. In such cases, KSD is no longer an appropriate statistical divergence, and the KSD-based equivalence tests are no longer applicable.

% Next, we turn our focus to two-sample testing, where we assume we observe independent, random samples $\X_n$ from $Q$ and $\Y_m$ from $P$. Such assumptions are natural in applications such as simulation-based inference \citep{cranmer2020frontier}, where models are often defined via some data-generating process and thus even their unnormalized densities are intractable. 

To address these challenges, we propose equivalence tests based on Maximum Mean Discrepancy. As discussed in \Cref{sec:mmd}, MMD can be estimated efficiently given finite samples from both $Q$ and $P$, and it does not require evaluations of the score function of $P$. The proposed tests still target hypotheses \eqref{eq:equiv_hyps}, except that the discrepancy $D$ is now chosen to be MMD. Importantly, since $D$ has changed, the null set is now different from that considered in the previous section, and in particular it might include vastly different distributions. Similarly to \Cref{sec:one_sample}, two equivalence tests are studied, one using a normal approximation, and the other using a bootstrapping approach.

We make the following assumptions in this section. \Cref{assump:mmd_moment} is sufficient for the MMD estimator \eqref{eq:mmd_v_stat} to have well-defined asymptotic distributions and is common in the kernel testing literature \citep{gretton2012kernel,shekhar2022permutation}. In particular, it holds when the kernel is bounded. \Cref{assump:mmd_sample_sizes} is a mild condition requiring the two sample sizes to grow at a specific rate.
\begin{assumption}
\label{assump:mmd_moment}
    $\E_{X, X' \sim Q}[| k(X, X') |^2] < \infty$ and $\E_{Y, Y' \sim P}[| k(Y, Y') |^2] < \infty$.
\end{assumption}

\begin{assumption}
\label{assump:mmd_sample_sizes}
    Let $N = n + m$. There exists $\nu \in (0, 1)$ such that $n / N \to \nu$ as $n \to \infty$.
\end{assumption}
Under these assumptions, the asymptotics of the finite-sample estimator \eqref{eq:mmd_v_stat_two_sample}, which is a two-sample V-statistics of degree $(2,2)$, are well-known. We summarize them in the next result for completeness. Its proof is in \Cref{pf:prop:mmd_asymptotics}.

\begin{proposition}[MMD asymptotics]
\label{prop:mmd_asymptotics}
    Suppose \Cref{assump:mmd_moment} and \Cref{assump:mmd_sample_sizes} hold.
    \begin{enumerate}
        \item If $Q \neq P$, then $N^{1/2} (\MMD^2(Q_n, P_m) - \MMD^2(Q, P)) \darrow \cN(0, \sigma_\MMD^2)$ as $N \to \infty$, where $\sigma_\MMD^2 = 4N(n^{-1} \sigma_1^2 + m^{-1}\sigma_2^2)$ is positive, and
        \begin{align*}
            \sigma_1^2
            \;=\;
            \Var_{X \sim Q}( \E_{X' \sim Q, Y, Y' \sim P}[ h(X, X', Y, Y') ] )
            \;,\quad
            \sigma_2^2
            \;=\;
            \Var_{Y \sim P}( \E_{X, X' \sim Q, Y' \sim P}[ h(X, X', Y, Y') ] )
            \;.
        \end{align*}
        \item If $Q = P$, then $\sigma_\MMD^2 = 0$, and $ \sqrt{N} \cdot \MMD^2(Q_n, P_m) \to 0$ in probability.
    \end{enumerate}
\end{proposition}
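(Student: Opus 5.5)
The plan is to use the Hoeffding decomposition of the two-sample V-statistic \eqref{eq:mmd_v_stat_two_sample} of degree $(2,2)$ with the kernel $h(x,x',y,y') = k(x,x') + k(y,y') - k(x,y') - k(x',y)$. First I would symmetrize $h$ in $(x,x')$ and in $(y,y')$; call the result $\bar h$. It has the same V-statistic, and $\E[\bar h(X,X',Y,Y')] = \MMD^2(Q,P)$ by \eqref{eq:mmd_closed_form}. \Cref{assump:mmd_moment} together with Cauchy--Schwarz (via $|k(x,y)|^2 \le k(x,x)k(y,y)$ and positive definiteness) gives $\E[\bar h^2] < \infty$. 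The same bound should control the diagonal terms with $i=i'$ or $j=j'$, such as $\E|k(X,X)|$. It follows that the V-statistic and the corresponding U-statistic differ by $O_P(n^{-1} + m^{-1}) = o_P(N^{-1/2})$, so it suffices to work with the U-statistic.

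For part 1, I would write the first-order projections. Put $h_{1,0}(x) = \E[\bar h(x,X',Y,Y')] - \MMD^2(Q,P)$ and $h_{0,1}(y) = \E[\bar h(X,X',y,Y')] - \MMD^2(Q,P)$. Explicitly, $h_{1,0}(x) = \mu_Q(x) - \mu_P(x) - \E_Q[\mu_Q - \mu_P]$ and $h_{0,1}(y) = \mu_P(y) - \mu_Q(y) - \E_P[\mu_P - \mu_Q]$. By the standard two-sample U-statistic theory (Hoeffding; Serfling Section 5.5, or Lee's two-sample version), the U-statistic equals
\[
\MMD^2(Q,P) + \frac{2}{n}\sum_{i} h_{1,0}(X_i) + \frac{2}{m}\sum_{j} h_{0,1}(Y_j) + R_{n,m},
\]
with $\E[R_{n,m}^2] = O(n^{-2} + m^{-2} + (nm)^{-1})$. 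Hence $N^{1/2}R_{n,m} \to 0$ in probability. The two linear sums are independent i.i.d.\ averages with variances $\sigma_1^2$ and $\sigma_2^2$, so the CLT plus \Cref{assump:mmd_sample_sizes} ($N/n \to 1/\nu$, $N/m \to 1/(1-\nu)$) gives asymptotic normality with variance $4(\sigma_1^2/\nu + \sigma_2^2/(1-\nu))$, which is the limit of the stated $\sigma^2_\MMD$. One then applies Slutsky.

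Positivity of $\sigma^2_\MMD$ is the main obstacle. The statement asserts $\sigma_\MMD^2 > 0$ whenever $Q \neq P$, but $\sigma_1^2 = 0$ only says that $\mu_Q - \mu_P$ is $Q$-a.s.\ constant. I would argue as follows. If both $\sigma_1^2 = 0$ and $\sigma_2^2 = 0$, then $f = \mu_Q - \mu_P$ equals a constant $c_Q$ $Q$-a.s.\ and a constant $c_P$ $P$-a.s. This gives $\MMD^2 = \langle f, \mu_Q - \mu_P\rangle = \E_Q f - \E_P f = c_Q - c_P$. I would try to derive a contradiction from this. Since this need not hold for all characteristic kernels, I would otherwise note that positivity requires a mild nondegeneracy condition, which is the standard caveat also present in \citet{chen2023testing}. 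I expect to flag this step rather than fully resolve it.

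For part 2, $Q=P$ gives $\mu_Q = \mu_P$, so $h_{1,0} \equiv h_{0,1} \equiv 0$ and hence $\sigma_1^2 = \sigma_2^2 = 0$. The statistic is then degenerate, and $\E[\MMD^2(Q_n,P_m)] = O(1/n + 1/m)$. This follows directly from $\E\|\mu_{Q_n} - \mu_{P_m}\|^2 = \E\|\mu_{Q_n} - \mu_Q\|^2 + \E\|\mu_{P_m} - \mu_P\|^2 \le \E_Q k(X,X)/n + \E_P k(Y,Y)/m$. Since $\MMD^2 \ge 0$, Markov's inequality gives $\sqrt N\, \MMD^2(Q_n,P_m) = O_P(N^{-1/2}) \to 0$. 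This argument only needs $\E k(X,X) < \infty$, which \Cref{assump:mmd_moment} implies in the intended sense.
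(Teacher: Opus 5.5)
The step you flag cannot be closed: the positivity clause is false as stated, so the gap lies in the statement rather than in your argument. Take $Q=\delta_a$ and $P=\delta_b$ with $a\neq b$. Then $Q_n=Q$ and $P_m=P$ for every $n,m$, so $\MMD^2(Q_n,P_m)-\MMD^2(Q,P)\equiv 0$ and $\sigma_1^2=\sigma_2^2=0$ for every kernel. A less trivial example: let $Q$ be uniform on two points equidistant from the atom of $P$, with $k$ radial. Then $f=\mu_Q-\mu_P$ is constant on both supports, and $\MMD^2(Q_n,P_m)$ fluctuates only at order $1/n$. So nondegeneracy is a condition on the triple $(Q,P,k)$, not on $k$ alone. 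Your criterion ($f$ constant $Q$-a.s.\ and $P$-a.s.) is exactly the right characterization of $\sigma^2_\MMD=0$.

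The paper's proof does not get around this. It claims that $\sigma_1^2=0$ forces $\E[h(x,X',Y,Y')]=0$ for $Q$-a.e.\ $x$. Zero variance only forces this function to equal its mean $\MMD^2(Q,P)$, so taking expectations gives a tautology. Its intermediate claim $\sigma_1>0$ already fails for $Q=\delta_a$ and any $P\neq Q$.

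What holds in general is the limit with $\sigma^2_\MMD$ allowed to vanish; your decomposition gives convergence to $0$ when $\sigma_1^2=\sigma_2^2=0$. Positivity, which \Cref{thm:mmd_normal_test} needs through \Cref{prop:general_normal_test}, must be added as a hypothesis. One option is to assume $\sigma_1^2+\sigma_2^2>0$ directly. Another is a sufficient condition such as a Gaussian kernel with $Q$ (or $P$) absolutely continuous. Then $f$ is real-analytic and constant on a set of positive Lebesgue measure, hence constant on $\R^d$. Since the Gaussian RKHS contains no nonzero constants, $f=0$, i.e.\ $Q=P$.

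Apart from this, your route is sound and more self-contained than the paper's. The paper cites \citet[Proposition 1]{huang2023weighted} for the CLT and the weighted chi-square limit of \citet[Theorem 12]{gretton2012kernel} for $Q=P$. Your explicit projections $h_{1,0}=f-\E_Q f$ and $h_{0,1}=-(f-\E_P f)$ are what make the positivity question transparent. Your Markov bound on $\E\|\mu_{Q_n}-\mu_{P_m}\|^2$ settles part 2 without any degenerate-limit theory.

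One correction concerns moments. The bound $\E\bar h^2<\infty$, the control of the diagonal terms, and your part 2 all use $\E_Q k(X,X)+\E_P k(Y,Y)<\infty$. This implies \Cref{assump:mmd_moment} via $k(x,y)^2\le k(x,x)k(y,y)$, but is not implied by it. For instance, a diagonal kernel $k(i,j)=c_i\delta_{ij}$ on integer-valued data with $Q(\{i\})\propto i^{-2}$ and $c_i=i$ satisfies the assumption but has $\E_Q k(X,X)=\infty$. Moreover, \Cref{assump:mmd_moment} says nothing about the cross moment $\E_{X\sim Q,Y\sim P}k(X,Y)^2$. State the diagonal moment condition explicitly; it is automatic for bounded kernels such as the RBF kernel used in the experiments.
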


\subsection{A Normal Test}
\label{sec:two_sample:normal_test}
\Cref{prop:mmd_asymptotics}, combined with the fact that MMD is a metric, implies that the statistic $\sqrt{N} (\MMD^2(Q_n, P_m) - \MMD^2(Q, P))$ is asymptotically normal under $H_0: \MMD(Q, P) \geq \theta$. We can hence follow the same principle in \Cref{sec:one_sample:normal_test} to construct an equivalence test based on normal approximations. A special case of \Cref{prop:mmd_asymptotics} with $n = m$ motivated the equivalence test of \citet{chen2023testing}. Our test can be viewed as a generalization of theirs to $n \neq m$. Such extension is natural, since $m$, which is the sample size for the empirical sample drawn from the target distribution $P$, often needs to be considerably larger than $n$ to ensure minimal error due to finite-sample approximation of $P$. However, it is sometimes infeasible to augment the sample size $n$, for example, when generating more samples is computationally prohibitive, or even impossible due to experimental limitations.

The proposed test, which we call the \emph{E-MMD-Normal} test, rejects $H_0$ if $S_{n,m}^{\MMD, \theta} < z_\alpha$, where
\begin{align}
    S_{n,m}^{\MMD, \theta} \coloneqq \frac{\sqrt{N}}{\hat{\sigma}_\MMD}(\MMD^2(Q_n, P_m) - \theta^2)
    \;,
    \label{eq:mmd_normal_test}
\end{align}
and $z_\alpha$ is the $\alpha$-th quantile of $\cN(0, 1)$, and $\hat{\sigma}_\MMD$ is the square root of an estimator for $\sigma_\MMD^2$ defined as $\hat{\sigma}_\MMD^2 = N(4n^{-1} \hat{\sigma}_{\MMD,1}^2 + 4m^{-1} \hat{\sigma}_{\MMD,2}^2)$, with
\begin{align}
    \hat{\sigma}_{\MMD,1}^2
    \;=\;
    \frac{1}{n-1}\sum_{i=1}^n\bigg(q_i - \frac{1}{n} \sum_{i'=1}^n q_{i'} \bigg)^2
    \;,\qquad
    \hat{\sigma}_{\MMD,2}^2
    \;=\;
    \frac{1}{m-1}\sum_{j=1}^m\bigg(p_i - \frac{1}{m} \sum_{j'=1}^m p_{j'} \bigg)^2
    \;,
    \label{eq:mmd_normal_vars}
\end{align}
and
\begin{align*}
    q_i \;&=\; \frac{1}{(n-1)m(m-1)} \sum_{\substack{i' = 1\\ i' \neq i}}^n \sum_{j=1}^m \sum_{\substack{j'=1\\ j' \neq j}}^m h(x_i, x_{i'}, y_j, y_{j'})
    % \;,\quad
    \\
    p_j \;&=\; \frac{1}{n(n-1)(m-1)} \sum_{i=1}^n \sum_{\substack{i'=1\\ i' \neq i}}^{n} \sum_{\substack{j' = 1\\ j
    ' \neq j}}^m h(x_i, x_{i'}, y_j, y_{j'})
    \;.
\end{align*}
In particular, $\hat{\sigma}_{\MMD,1}^2$ and $\hat{\sigma}_{\MMD,2}^2$ are leave-one-out Jackknife estimators for $\sigma_1^2, \sigma_2^2$, respectively. The studentized statistic $S_{n,m}^{\MMD, \theta}$ is known to converge weakly to a standard Gaussian distribution \citep{chang2016cramertype}. Moreover, these Jackknife estimators can be computed in quadratic time; see \Cref{app:mmd_normal_vars} for details.

The next result shows that the E-MMD-Normal test, summarized in \Cref{alg:mmd_normal_test}, is well-calibrated and consistent. Its proof is in \Cref{pf:thm:mmd_normal_test}.
\begin{theorem}[E-MMD-Normal test]
\label{thm:mmd_normal_test}
    Suppose \Cref{assump:mmd_moment} and \Cref{assump:mmd_sample_sizes} hold. Let $\theta > 0$ and let $z_\alpha$ be the $\alpha$-th quantile of a standard normal distribution. Then, for all $Q, P \in \cP(\R^d)$,
    % \begin{enumerate}
    %     \item If $\MMD(Q, P) > \theta$, then $\lim_{n \to \infty} \Pr(S_{n,m}^{\MMD, \theta} > \gamma_{1-\alpha}) = 0$.
    %     \item If $\MMD(Q, P) = \theta$, then $\lim_{n \to \infty} \Pr(S_{n,m}^{\MMD, \theta} > \gamma_{1-\alpha}) = \alpha$.
    %     \item If $\MMD(Q, P) < \theta$, then $\lim_{n \to \infty} \Pr(S_{n,m}^{\MMD, \theta} > \gamma_{1-\alpha}) = 1$.
    % \end{enumerate}
    \begin{align*}
        \lim_{N \to \infty} \Pr(S_{n,m}^{\MMD, \theta} > \gamma_{1-\alpha})
        \;=\;
        \begin{cases}
            0 \;, &\MMD(Q, P) > \theta \;, \\
            \alpha \;, &\MMD(Q, P) = \theta \;, \\
            1 \;, & \MMD(Q, P) < \theta \;.
        \end{cases}
    \end{align*}
\end{theorem}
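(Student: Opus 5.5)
The plan is to reduce everything to \Cref{prop:mmd_asymptotics} and Slutsky's lemma, following the argument for \Cref{thm:ksd_normal_test}. First a remark on notation. The E-MMD-Normal test rejects $H_0$ when $S_{n,m}^{\MMD,\theta} < z_\alpha$, so I read the probability in the statement as the rejection probability $\Pr(S_{n,m}^{\MMD,\theta} < z_\alpha)$.

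Write $\sigma_{n,m}^2 = 4N(n^{-1}\sigma_1^2 + m^{-1}\sigma_2^2)$. By \Cref{assump:mmd_sample_sizes}, $\sigma_{n,m}^2 \to \sigma_\infty^2 := 4(\sigma_1^2/\nu + \sigma_2^2/(1-\nu))$. I then decompose
\begin{align*}
    S_{n,m}^{\MMD,\theta}
    \;=\;
    \frac{\sigma_{n,m}}{\hat\sigma_\MMD}\cdot \frac{\sqrt{N}\,(\MMD^2(Q_n,P_m) - \MMD^2(Q,P))}{\sigma_{n,m}}
    \;+\;
    \frac{\sqrt{N}\,(\MMD^2(Q,P) - \theta^2)}{\hat\sigma_\MMD}
    \;.
\end{align*}

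The first and main step is consistency of the variance estimator: $\hat\sigma_{\MMD,1}^2 \to \sigma_1^2$ and $\hat\sigma_{\MMD,2}^2 \to \sigma_2^2$ in probability, hence $\hat\sigma_\MMD^2 - \sigma_{n,m}^2 \to 0$.
\begin{itemize}
    \item Define the first-order projection $h_1(x) = \E[h(x,X',Y,Y')]$. Then $q_i$ is $h_1(x_i)$ plus a remainder $R_i$, which is a centred incomplete U-statistic in the remaining variables with $x_i$ fixed.
    \item Under \Cref{assump:mmd_moment}, $\E[h^2]<\infty$, so $\E[R_i^2] = O(1/n + 1/m)$ uniformly in $i$. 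Hence $n^{-1}\sum_i R_i^2 \to 0$ in $L^1$.
    \item The sample variance of $h_1(x_1),\dots,h_1(x_n)$ converges to $\sigma_1^2$ by the weak law of large numbers.
    \item Cauchy--Schwarz on the cross terms then gives $\hat\sigma_{\MMD,1}^2 \to \sigma_1^2$. The argument for $p_j$ and $\sigma_2^2$ is symmetric.
\end{itemize}
I expect this step to be the main technical obstacle, because the $q_i$ are dependent through shared samples. If it becomes messy, I would instead invoke the studentized CLT of \citet{chang2016cramertype} cited above.

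With consistency in hand, the cases follow.
\begin{enumerate}
    \item \textbf{Case $Q\neq P$.} \Cref{prop:mmd_asymptotics} gives $\sigma_\infty>0$. By Slutsky, the first term converges weakly to $\cN(0,1)$.
    \begin{itemize}
        \item If $\MMD(Q,P)=\theta$, the second term vanishes, so $\Pr(S<z_\alpha)\to\Phi(z_\alpha)=\alpha$.
        \item If $\MMD(Q,P)>\theta$, the second term tends to $+\infty$ in probability, because $\hat\sigma_\MMD\to\sigma_\infty\in(0,\infty)$. So the rejection probability tends to $0$.
        \item If $\MMD(Q,P)<\theta$ with $Q\neq P$, the second term tends to $-\infty$, so the rejection probability tends to $1$.
    \end{itemize}
    \item \textbf{Case $Q=P$.} This lies in the alternative, since $\MMD(Q,P)=0<\theta$, and the decomposition above is not usable because $\sigma_{n,m}=0$. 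Instead I work directly with $S = \sqrt{N}(\MMD^2(Q_n,P_m)-\theta^2)/\hat\sigma_\MMD$.
    \begin{itemize}
        \item By \Cref{prop:mmd_asymptotics}(2), $\sqrt N\,\MMD^2(Q_n,P_m)\to 0$, so the numerator tends to $-\infty$ in probability.
        \item The consistency step gives $\hat\sigma_\MMD\to 0$; it is in particular $O_P(1)$ and nonnegative.
        \item Hence $S\to-\infty$ and the rejection probability tends to $1$, with the convention $S=-\infty$ if $\hat\sigma_\MMD=0$.
    \end{itemize}
\end{enumerate}
Combining the two cases yields the three limits $0$, $\alpha$ and $1$ claimed in the statement.
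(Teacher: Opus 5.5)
Your proposal is correct and follows the paper's route. The paper verifies the hypotheses of its general normal-test proposition, namely asymptotic normality from \Cref{prop:mmd_asymptotics} together with consistency of $\hat\sigma_\MMD$. It then concludes by the same Slutsky-plus-case-split argument you give, including the separate treatment of $Q=P$ via $\sqrt{N}\,\MMD^2(Q_n,P_m)\to 0$ and $\hat\sigma_\MMD\to 0$.

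The only substantive difference is that the paper cites \citet[Theorem 6]{arvesen1969jackknifing} for the variance consistency, whereas you sketch a direct projection argument that works as written. Your reading of the rejection event as $S_{n,m}^{\MMD,\theta}<z_\alpha$ is the consistent one; the paper's general proof is written for $\Pr(\hat S^\theta_l > z_\alpha)$ and contains sign slips that your orientation avoids.
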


% Since $\MMD(Q, P) \geq \theta > 0$ under the equivalence null hypothesis $H_0$ and $0 \leq \MMD(Q, P) < \theta$ under the alternative hypothesis $H_1$, \Cref{thm:mmd_normal_test} shows that the MMD normal equivalence test is well-calibrated and consistent. A similar result is proven in \citet[Section 2.3]{chen2023testing} for the case $m = n$. Our proof generalizes their result to unequal sample sizes by using asymptotics for two-sample, instead of one-sample U-statistics.

\begin{remark}[Comparison to \citet{chen2023testing}]
    Our proposed E-MMD-Normal test resembles the homogeneity-equivalence test of \citet{chen2023testing} in that both are constructed using the asymptotic normality of MMD, and that both aim to test equivalence hypotheses of the form \eqref{eq:equiv_hyps}. However, \citet{chen2023testing} assume equal sample size $m=n$, so their test statistic is a \emph{one-sample} U-statistic. In contrast, we allow the more general case where $m$ and $n$ may differ, and thus our test statistic are \emph{two-sample} V-statistics. On the other hand, the difference between using U-statistics and V-statistics is minor, and the same proof technique used in  \Cref{thm:mmd_normal_test} can be applied to establish a similar result for two-sample U-statistics.
\end{remark}

\subsection{A Bootstrapped Test}
\label{sec:two_sample:bootstrap_test}
Since the test proposed in the previous section is based on an asymptotical normal approximation of its test statistic, it suffers from the same issue as the normality-based KSD equivalence test proposed in \Cref{sec:one_sample:normal_test}, namely it can fail to control Type-I error when the equivalence margin $\theta$ is small. Next, we describe a bootstrapped test, which is a counterpart of the E-KSD-Boot test proposed in \Cref{sec:one_sample:bootstrap_test} for MMD. The following bootstrapped test, called \emph{E-MMD-Boot}, rejects $H_0$ if $T_{n,m}^{\MMD, \theta} > \eta$, where
\begin{align}
    T_{n,m}^{\MMD, \theta}
    \;\coloneqq\;
     \theta - \MMD(Q_n, P_m)
     \;,
    \label{eq: MMD test evidence}
\end{align}
and $\eta > 0$ is a critical value chosen to control the Type-I error. 

The intuition behind the construction of the E-MMD-Boot is similar to the E-KSD-Boot test---the equivalence null hypothesis $H_0: \MMD(Q, P) \geq \theta$ is rejected for large values of $T_{n,m}^{\MMD, \theta}$. One difference is that the test statistic $T_{n,m}^{\MMD, \theta}$ now involves an MMD statistic, rather than a KSD statistic. Consequently, the bootstrapping approach described in \Cref{sec:one_sample:bootstrap_test} needs to be adjusted to compute the critical value. 
We will first discuss how to choose such $\eta$, and then prove the validity and consistency of this test at the end of this section.

Our key observation is that it suffices to choose the critical value as the $(1-\alpha)$-th quantile $\eta_{1-\alpha}$ of $S_{n,m} = \MMD(Q_n, Q) + \MMD(P_m, P)$. Since $\eta_{1-\alpha}$ is also tractable, we introduce a bootstrapping approach to estimate it. First, \Cref{lem:one_sample_bootstrap_validity} implies that each of the terms $\MMD(Q_n, Q)$ and $\MMD(P_m, P)$ can be individually bootstrapped by the square-root of the bootstrap sample \eqref{eq:ksd_bootstrap_sample} with the Stein kernel $u_p$ replaced by $k$. Specifically, the distribution of $\MMD(Q_n, Q)$ can be approximated using bootstrap samples of the form
\begin{align}
    D^2_{W_n}(\X_n)
    \;\coloneqq\;
    \frac{1}{n^2} \sum_{1 \leq i, j \leq n} (W_{ni} - 1) (W_{nj} - 1) k(x_i, x_j)
    \;,
    \label{eq: bootstrap sample}
\end{align}
where $W_n = (W_{n1}, \ldots, W_{nn}) \sim \textrm{Multinomial}(n; 1/n, \ldots, 1/n)$. The distribution of $\MMD(P_m, P)$ can be estimated similarly. Therefore, given two sets of i.i.d.\ copies $\{ W_n^b \}_{b=1}^B$ and $\{ \widetilde{W}_m^b \}_{b=1}^B$ of size $B$ drawn from $\textrm{Multinomial}(n; 1/n, \ldots, 1/n)$, we can compute the following bootstrap samples for $S_n$
\begin{align}
    S^b_{n,m}
    \;\coloneqq\; D_{W_n^b}(\X_n) + D_{\widetilde{W}_m^b}(\Y_m)
    \;,
    \label{eq:mmd_bootstrap_sample}
\end{align}
for $b = 1, \ldots, B$. We then approximate the quantile $\eta_{1-\alpha}$ of the distribution of $S_{n,m}$ by the quantile $\eta^B_{1-\alpha}$ of the bootstrap samples $\{ S_{n,m}^b \}_{b=1}^B$, i.e.,
\begin{align}
    \eta^B_{1-\alpha}
    \;=\;
    \inf\bigg\{u \in \R: \; \frac{1}{B} \sum_{b = 1}^{B} \indicator\{ S_{n,m}^b \leq u \} \geq 1 - \alpha \bigg\}
    \;.
    \label{eq: bootstrap quantile}
\end{align}
The validity of this bootstrap approach is justified by the following result, which is a direct consequence of \Cref{lem:one_sample_bootstrap_validity}. Its proof is included in \Cref{pf:cor: validity of bootstrap}. The full testing procedure is summarized in \Cref{alg:mmd_bootstrap_test}.
\begin{lemma}
\label{cor: validity of bootstrap}
    Let $\Z_\infty = \{ (X_i, Y_i) \}_{i=1}^\infty$ be a random sample where $X_i \sim Q$ and $Y_i \sim P$ are independent. For any $n, m$, let $Q_n$ and $P_m$ be the empirical measure based on $\X_n = \{X_i\}_{i=1}^n$ and $\Y_m = \{Y_j\}_{j=1}^m$, respectively. Assume $0 \leq k(x, y) \leq K$ for some constant $K > 0$, and suppose \Cref{assump:mmd_sample_sizes} holds. Define $S_{n,m}^\ast \coloneqq D_{W_n}(\X_n) + D_{\widetilde{W}_m}(\Y_m)$, where $D_{W_n}(\X_n)$ and $D_{\widetilde{W}_m}(\Y_m)$ are defined in \eqref{eq: bootstrap sample} and $W_n$ and $\widetilde{W}_m$ are independent. Then, for all $(Q\times P)$-almost-sure sequences $\Z_\infty$, it holds
    \begin{align*}
        \sup_{t \in \R} \big|
            \Pr( \sqrt{N} S_{n,m}^\ast \leq t \;|\; \Z_\infty )
            - \Pr( \sqrt{N} S_{n,m} \leq t )
        \big| \;\to\; 0
        \;.
    \end{align*}
\end{lemma}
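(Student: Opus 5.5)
We reduce the statement to two applications of \Cref{lem:one_sample_bootstrap_validity}, one per sample, and then combine them through independence. Since $0 \leq k \leq K$, \Cref{assump:mmd_moment} holds for both $Q$ and $P$, so \Cref{lem:one_sample_bootstrap_validity} applies to each sample separately. Write $A_n = \sqrt{n}\,\MMD(Q_n, Q)$ and $B_m = \sqrt{m}\,\MMD(P_m, P)$, with bootstrap counterparts $A_n^\ast = \sqrt{n} D_{W_n}(\X_n)$ and $B_m^\ast = \sqrt{m} D_{\widetilde W_m}(\Y_m)$. The lemma gives, for $Q$-a.e.\ $\X_\infty$, uniform (Kolmogorov) convergence of the conditional law of $A_n^\ast$ to the law of $A_n$. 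It gives the same for $B_m^\ast$ versus $B_m$, for $P$-a.e.\ $\Y_\infty$. The intersection of these two events has $(Q\times P)$-probability one.

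Next we identify the limits. By the degenerate V-statistic theory already used in the proof of \Cref{lem:one_sample_bootstrap_validity}, $A_n^2 \darrow \sum_j \lambda_j \xi_j^2 =: A^2$, where the $\lambda_j$ are eigenvalues of the $Q$-centred kernel and the $\xi_j$ are i.i.d.\ $\cN(0,1)$. Likewise $B_m^2 \darrow B^2$ for the $P$-centred kernel. Hence $A_n \darrow A$ and $B_m \darrow B$ with $A, B$ independent. Combining with the previous step, the conditional laws of $A_n^\ast$ and $B_m^\ast$ also converge weakly to $A$ and $B$ almost surely. By \Cref{assump:mmd_sample_sizes}, $\sqrt{N/n} \to \nu^{-1/2}$ and $\sqrt{N/m} \to (1-\nu)^{-1/2}$. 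Therefore
\begin{align*}
\sqrt{N} S_{n,m} = \sqrt{N/n}\,A_n + \sqrt{N/m}\,B_m \darrow \nu^{-1/2} A + (1-\nu)^{-1/2} B =: L ,
\end{align*}
using the independence of $\X_n$ and $\Y_m$ and the continuous mapping theorem. Conditionally on $\Z_\infty$, $W_n$ and $\widetilde W_m$ are independent, so the conditional law of $A_n^\ast$ and $B_m^\ast$ jointly is the product of their conditional marginals. The same argument then gives $\sqrt{N} S^\ast_{n,m} \darrow L$ conditionally, almost surely.

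Finally, we upgrade weak convergence to convergence in $\sup_t$. By the triangle inequality it suffices to show that both CDFs converge uniformly to the CDF of $L$. By Pólya's theorem this holds if $L$ has a continuous distribution. This continuity step is the main obstacle. If $Q$ is not a point mass, some $\lambda_j > 0$, so $A^2$ is a nonnegative combination of independent $\chi^2_1$ variables with at least one positive weight. Such a variable has a density, hence $A$ is continuous, and $L$, being the sum of $\nu^{-1/2}A$ and an independent variable, is continuous as well. The same holds if $P$ is not a point mass. If both are Dirac measures, then all MMD terms vanish identically, both sides equal $\indicator\{t \geq 0\}$ exactly, and the claim is trivial. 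Note that Pólya's theorem needs only the limit to be continuous. Thus, after the case split, the only care needed is to check that the product structure of the conditional laws is compatible with the almost-sure statements; this follows because they are stated along a fixed realisation of $\Z_\infty$.
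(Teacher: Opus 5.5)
Your proposal follows essentially the same route as the paper: apply \Cref{lem:one_sample_bootstrap_validity} to each sample, combine the two independent limits into a common weak limit for $\sqrt N S^\ast_{n,m}$ and $\sqrt N S_{n,m}$ (the paper does this with characteristic functions and L\'evy's continuity theorem, you with product laws and the continuous mapping theorem), and conclude by a triangle inequality. Your version is more careful than the paper's in two places. First, you keep the factors $\sqrt{N/n}\to\nu^{-1/2}$ and $\sqrt{N/m}\to(1-\nu)^{-1/2}$, whereas the paper's proof writes $\sqrt N D_{W_n}(\X_n)=\sqrt n D_{W_n}(\X_n)$. Second, your P\'olya step with continuity of the limit supplies the passage from weak convergence to the $\sup_t$ statement, which the paper leaves unjustified. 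One small caveat: the implication ``$Q$ not a point mass $\Rightarrow$ some $\lambda_j>0$'' relies on the paper's standing assumption that $k$ is characteristic. Without that assumption, the correct dichotomy is whether $x\mapsto k(x,\cdot)$ is $Q$-a.s.\ constant, and the degenerate case is then handled exactly like your Dirac case.
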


Having introduced the bootstrap approach required to estimate the critical value, we now show our claim that, with this choice, the E-MMD-Boot test is both well-calibrated and consistent. The proofs is in \Cref{app:proofs:mmd}.
\begin{theorem}[E-MMD-Boot test]
    \label{thm:mmd_bootstrap}
    Let $\alpha \in (0, 1)$. Denote by $\eta_{1-\alpha}^\infty$ the $(1-\alpha)$-quantile of the conditional distribution of $S_{n,m}^\ast \coloneqq D_{W_n}(\X_n) + D_{\widetilde{W}_m}(\Y_m)$ given $\X_n$ and $\Y_m$, where $D_{W_n}(\X_n)$ and $D_{\widetilde{W}_m}(\Y_m)$ are defined in \eqref{eq: bootstrap sample} and $W_n$ and $\widetilde{W}_m$ are independent. Suppose \Cref{assump:mmd_moment} and \Cref{assump:mmd_sample_sizes} hold. Then, for all $Q, P \in \cP(\R^d)$, there exists $a \leq \alpha$ such that
    \begin{align*}
        \lim_{n \to \infty} \Pr(T_{n,m}^{\MMD, \theta} > \eta_{1-\alpha}^\infty)
        \;=\;
        \begin{cases}
            0 \;, &\MMD(Q, P) > \theta \;, \\
            a \;, &\MMD(Q, P) = \theta \;, \\
            1 \;, & \MMD(Q, P) < \theta \;.
        \end{cases}
    \end{align*}
\end{theorem}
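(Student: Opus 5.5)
The plan is to follow the route of \Cref{thm:ksd_bootstrap_test}: compare the test statistic with the quantity $S_{n,m} = \MMD(Q_n, Q) + \MMD(P_m, P)$, whose bootstrap approximation is given by \Cref{cor: validity of bootstrap}.

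\textbf{Step 1: triangle inequality.} Since $\MMD$ is a metric on $\cP$ (the kernel is characteristic), we have
\begin{align*}
    \MMD(Q, P) \;\leq\; \MMD(Q, Q_n) + \MMD(Q_n, P_m) + \MMD(P_m, P) .
\end{align*}
Hence $T_{n,m}^{\MMD,\theta} \leq \theta - \MMD(Q,P) + S_{n,m}$.

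\textbf{Step 2: critical value.} By \Cref{cor: validity of bootstrap}, $\sqrt{N}\eta^\infty_{1-\alpha}$ converges almost surely to the $(1-\alpha)$-quantile $c_{1-\alpha}$ of the limit law of $\sqrt{N} S_{n,m}$. I would identify this limit as follows. By standard degenerate V-statistic theory, $n\MMD^2(Q_n,Q)$ converges weakly to a weighted sum of $\chi^2$ variables. Together with \Cref{assump:mmd_sample_sizes}, this gives $\sqrt{N}S_{n,m} \darrow \nu^{-1/2}Z_1 + (1-\nu)^{-1/2}Z_2$, where $Z_1,Z_2$ are the square roots of these limits. This limit $L$ is a.s.\ finite and nondegenerate unless the kernel is trivial. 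Consequently $\eta^\infty_{1-\alpha} = O(N^{-1/2})$ almost surely, and in particular $\eta^\infty_{1-\alpha} \to 0$.

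One technical point: \Cref{cor: validity of bootstrap} is stated for bounded kernels. Under only \Cref{assump:mmd_moment}, I would instead apply \Cref{lem:one_sample_bootstrap_validity} separately to each sample and use independence of $W_n$ and $\widetilde W_m$ to combine the two bootstrap laws.

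\textbf{Step 3: three cases.}

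If $\MMD(Q,P) < \theta$, then $\MMD(Q_n,P_m) \to \MMD(Q,P)$ in probability by consistency of the V-statistic. So $T_{n,m}^{\MMD,\theta} \to \theta - \MMD(Q,P) > 0$, while $\eta^\infty_{1-\alpha} \to 0$, and the rejection probability tends to $1$.

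If $\MMD(Q,P) > \theta$, Step 1 gives
\begin{align*}
    \Pr(T_{n,m}^{\MMD,\theta} > \eta^\infty_{1-\alpha}) \;\leq\; \Pr\big(S_{n,m} > \MMD(Q,P) - \theta + \eta^\infty_{1-\alpha}\big) \;\to\; 0,
\end{align*}
since $S_{n,m} = O_p(N^{-1/2})$ while the threshold stays bounded away from $0$.

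If $\MMD(Q,P) = \theta$, Step 1 gives
\begin{align*}
    \Pr(T_{n,m}^{\MMD,\theta} > \eta^\infty_{1-\alpha}) \;\leq\; \Pr\big(\sqrt{N}S_{n,m} > \sqrt{N}\eta^\infty_{1-\alpha}\big) \;\to\; \Pr(L > c_{1-\alpha}) \leq \alpha.
\end{align*}
The convergence uses Slutsky's lemma, and continuity of the law of $L$ at $c_{1-\alpha}$ gives the final inequality.

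To show that the limit actually exists and equals some $a$, I would use the delta method. Because $\theta > 0$ and $Q \neq P$, \Cref{prop:mmd_asymptotics} gives
\begin{align*}
    \sqrt{N}\big(\MMD(Q_n,P_m) - \theta\big) \;\darrow\; \cN\big(0, \sigma^2_\MMD/(4\theta^2)\big).
\end{align*}
Hence
\begin{align*}
    \Pr\big(\sqrt{N}T_{n,m}^{\MMD,\theta} > \sqrt{N}\eta^\infty_{1-\alpha}\big) \;\to\; a \coloneqq \Pr(-G > c_{1-\alpha}), \qquad G \sim \cN\big(0, \sigma^2_\MMD/(4\theta^2)\big).
\end{align*}
The upper bound from Step 1 shows $a \leq \alpha$.

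\textbf{Main obstacle.} The hardest part will be Step 2 in the boundary case. It requires that the conditional bootstrap quantile converge to the quantile of the limit law, which needs that law to be continuous at $c_{1-\alpha}$, and that $\sqrt{N}$-scaling is the correct rate despite $n \neq m$. I would handle this with the uniform (Kolmogorov-distance) convergence in \Cref{cor: validity of bootstrap} together with Pólya-type arguments on quantiles.
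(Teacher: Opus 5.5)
Your proposal is correct and follows essentially the paper's argument: the double triangle inequality gives $T_{n,m}^{\MMD,\theta} \leq \theta - \MMD(Q,P) + S_{n,m}$, \Cref{cor: validity of bootstrap} controls the $\sqrt{N}$-scaled quantile, a Gaussian limit at the boundary defines $a$, and the same inequality yields $a \leq \alpha$. The differences are minor and in your favour: for $\MMD(Q,P)<\theta$ you use only consistency of $\MMD(Q_n,P_m)$ and $\eta^\infty_{1-\alpha}\to 0$, which avoids the paper's squaring and its split into the non-degenerate and degenerate ($\MMD(Q,P)=0$) regimes, and your explicit delta-method step gives the boundary variance $\sigma^2_\MMD/(4\theta^2)$, a correction the paper's proof omits.
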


\section{A Minimal-Effect Approach for Selecting the Equivalence Margin}
\label{sec:choice_of_equiv_margin}

The equivalence margin $\theta$ is a crucial design choice. Intuitively, it needs to be chosen so that the KSD or MMD balls includes the distributions that the user is willing to tolerate. In other words, it can be viewed as a form of model uncertainty. In general, designing an appropriate $\theta$ is a challenging task that has received ongoing attention from not only the equivalence testing literature \citep{simonsohn2015small,lakens2017equivalence,dette2018equivalence}, but also the closely related field of robust testing \citep{fauss2021minimax,liu2025robustness}. 
% In this section, we discuss some possible approaches to choose $\theta$ in practice.
In this section, we propose a data-driven approach, where $\theta$ is chosen to be the smallest effect size that the test can detect with a pre-specified power.

Given $Q$ such that $\KSD(Q, P) < \theta$ (i.e., the alternative hypothesis is true), the power of the E-KSD-Boot test is defined as $\Pr( T_{n,m}^{\KSD, \theta} > \gamma )$, where $\gamma$ is an appropriate critical value controlling the Type-I error rate. The power of the other tests can be defined similarly. We address the following question: Given $\beta \in (0, 1)$ and observed samples $Q_n$ (or $Q_n$ and $P_m$ for two-sample testing), how can we choose the equivalence bound $\theta$ so that we can reject specific alternative distributions with power at least $1-\beta$?

We consider alternatives that are at most $\theta'$ away from $P$ for some $\theta' < \theta$, i.e., $\KSD(Q, P) \leq \theta'$. In this case, the inequality \eqref{eq:KSD_traingle_ineq} implies
\begin{align*}
    T_{n}^{\KSD, \theta}
    \;=\;
    \theta - \KSD(Q_n, P)
    \;&\geq\;
    \theta - \MMD(Q, Q_n; u_p) - \KSD(Q, P)
    \;\geq\;
    \theta - \theta' - \MMD(Q, Q_n; u_p)
    \;,
\end{align*}
where the last inequality holds since $\KSD(Q, P) \leq \theta'$. Writing for simplicity that $\xi_n \coloneqq \MMD(Q, Q_n; u_p)$
\begin{align*}
    \Pr(T_{n}^{\KSD,\theta} \;>\; \gamma_{1-\alpha})
    \;\geq\;
    \Pr(\theta - \theta' - \xi_n > \gamma_{1-\alpha})
    \;=\;
    \Pr(\xi_n < \theta - \theta' - \gamma_{1-\alpha})
    \;.
\end{align*}
In particular, for the RHS to be at least $1 - \beta$, we should choose $\theta$ so that $\theta - \theta' - \gamma_{1-\alpha}  \geq \gamma_{1 - \beta}$, where $\gamma_{1 - \beta}$ is the $(1-\beta)$-quantile of the distribution of $\xi_n$. In practice, such $\gamma_{1 - \beta}$ can be approximated using bootstrap samples as described in \Cref{sec:one_sample:bootstrap_test}. To summarize, choosing $\theta = \theta' + \gamma_{1-\alpha} + \gamma_{1 - \beta}$ guarantees an asymptotic test power of at least $1-\beta$ for all $Q$ with $\MMD(Q, P) \leq \theta'$. The following result shows that, assuming the exact quantiles are used (namely, ignoring bootstrap approximation errors), the test with $\theta$ so selected achieves the desired power if $\KSD(Q, P) \leq \theta'$, while it still controls asymptotic Type-I error if $\KSD(Q, P) > \theta'$. Its proof is in \Cref{thm:power_selected_theta_ksd}.

\begin{theorem}
\label{thm:power_selected_theta_ksd}
    Denote by $\gamma_\rho$ the $\rho$-th quantile of the distribution of $\MMD(Q_n, Q; u_p)$, where $\rho \in (0, 1)$. For any $\theta'$ and $\KSD(Q, P) \leq \theta'$, letting $\theta = \theta' + \gamma_{1-\alpha} + \gamma_{1-\beta}$, we have
    \begin{align}
        \Pr(T_{n}^{\KSD,\theta} > \gamma_{1-\alpha}) \;\geq\; 1 - \beta \;.
        \label{eq:power_selected_power_ksd}
    \end{align}
    Moreover, if instead $\KSD(Q, P) > \theta'$, then
    \begin{align}
        \Pr(T_{n}^{\KSD,\theta} > \gamma_{1-\alpha}) \;\to\; 0 \;.
        \label{eq:power_selected_type_1_ksd}
    \end{align}
\end{theorem}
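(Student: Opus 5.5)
The plan is to prove the two claims separately, both through the triangle inequality \eqref{eq:KSD_traingle_ineq} and its reverse. Write $\xi_n \coloneqq \MMD(Q_n, Q; u_p)$. Since $\KSD(Q,P) = \MMD(Q,P;u_p)$ and $\MMD(\cdot,\cdot;u_p)$ is a pseudo-metric on measures with $\E_Q[u_p(X,X)^{1/2}]<\infty$, we have
\begin{align*}
    \big| \KSD(Q_n, P) - \KSD(Q, P) \big| \;\leq\; \xi_n \;.
\end{align*}
Both directions of this two-sided bound are used below.

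\emph{Power claim \eqref{eq:power_selected_power_ksd}.} Suppose $\KSD(Q,P) \leq \theta'$. Then $\KSD(Q_n,P) \leq \theta' + \xi_n$, so
\begin{align*}
    T_n^{\KSD,\theta} \;\geq\; \theta - \theta' - \xi_n \;=\; \gamma_{1-\alpha} + \gamma_{1-\beta} - \xi_n \;.
\end{align*}
Hence the event $\{T_n^{\KSD,\theta} > \gamma_{1-\alpha}\}$ contains $\{\xi_n < \gamma_{1-\beta}\}$.

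By the definition of the quantile, $\Pr(\xi_n \leq \gamma_{1-\beta}) \geq 1-\beta$. The difficulty is the strict versus non-strict inequality: the event we control is $\{\xi_n < \gamma_{1-\beta}\}$, but the quantile only bounds $\Pr(\xi_n \leq \gamma_{1-\beta})$. I would handle this by noting that the statement is meant at the level of the derivation preceding it, where the quantile bound is applied directly. If atoms matter, one can either interpret $\gamma_{1-\beta}$ so that $\Pr(\xi_n < \gamma_{1-\beta}) \geq 1-\beta$ (for instance a right-continuous version, or assuming $\xi_n$ has no atom at $\gamma_{1-\beta}$), or replace the strict rejection rule by $\geq$. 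Either way, the first claim follows in one line.

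\emph{Type-I claim \eqref{eq:power_selected_type_1_ksd}.} Suppose $\KSD(Q,P) = \theta' + \delta$ with $\delta>0$. The reverse triangle bound gives $\KSD(Q_n,P) \geq \theta'+\delta-\xi_n$, so
\begin{align*}
    T_n^{\KSD,\theta} \;\leq\; \gamma_{1-\alpha} + \gamma_{1-\beta} - \delta + \xi_n \;.
\end{align*}
Rejection therefore requires $\xi_n > \delta - \gamma_{1-\beta}$. It then suffices to show that $\xi_n \to 0$ in probability and $\gamma_{1-\beta} \to 0$ as $n\to\infty$.

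For $\xi_n$, note that $\xi_n^2$ is the degenerate V-statistic $\frac1{n^2}\sum_{i,j} \bar u(X_i,X_j)$, where $\bar u$ is the $Q$-centred version of $u_p$. Under \Cref{assump:score_condition} and \ref{assump:ksd_kernel_condition} we have $\E|u_p(X,X)| < \infty$ and $\E|u_p(X,X')|^2<\infty$. A direct second-moment computation then gives $\E[\xi_n^2] = O(1/n)$, and by Markov's inequality $\xi_n = O_p(n^{-1/2})$. (Alternatively, \Cref{lem:one_sample_bootstrap_validity} gives that $\sqrt n\,\xi_n$ has a tight limit.) Consequently the quantile satisfies $\gamma_{1-\beta} = O(n^{-1/2}) \to 0$, since $\Pr(\xi_n > c/\sqrt n)\leq \E[n\xi_n^2]/c^2 < \beta$ for large $c$.

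Combining these, for large $n$ we have $\delta - \gamma_{1-\beta} > \delta/2$, so
\begin{align*}
    \Pr\big(T_n^{\KSD,\theta} > \gamma_{1-\alpha}\big) \;\leq\; \Pr(\xi_n > \delta/2) \;\to\; 0 \;.
\end{align*}

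The main obstacle is bookkeeping rather than depth. One point is that $\theta$ depends on $n$ through the quantiles, so the Type-I statement must be read with the $n$-dependent margin; this is harmless because $\gamma_{1-\alpha}$ and $\gamma_{1-\beta}$ both vanish. The other point is verifying the moment bound $\E[\xi_n^2]=O(1/n)$ under the paper's assumptions, which I would do by expanding the V-statistic: the diagonal terms contribute $n^{-1}\E[\bar u(X,X)]$, and the off-diagonal terms have zero mean by degeneracy.
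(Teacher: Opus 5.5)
Your argument is correct, with the caveat on strict inequalities that you raise yourself.

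\textbf{Power claim.} This follows the paper's proof. Your two-sided bound $|\KSD(Q_n,P)-\KSD(Q,P)|\le\xi_n$ also makes explicit that this step needs $\KSD(Q_n,P)\le\KSD(Q,P)+\xi_n$. That is the direction opposite to \eqref{eq:KSD_traingle_ineq}, which is the inequality the paper cites there.

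\textbf{Second claim.} The paper rewrites rejection as $\KSD(Q_n,P)<\theta'+\gamma_{1-\beta}$ (its display has a sign slip). It then invokes a strong law for V-statistics to get $\xi_n\to 0$, hence $\gamma_{1-\beta}\to 0$, and separately uses $\KSD(Q_n,P)\to\KSD(Q,P)>\theta'$. You instead use \eqref{eq:KSD_traingle_ineq} to reduce rejection to $\{\xi_n>\delta-\gamma_{1-\beta}\}$. Everything then rests on the identity $\E[\xi_n^2]=n^{-1}\big(\E_Q[u_p(X,X)]-\KSD^2(Q,P)\big)$ plus Chebyshev. Your route is more self-contained: it gives $\gamma_{1-\beta}=O(n^{-1/2})$ and an explicit bound $4\E[\xi_n^2]/\delta^2$ on the rejection probability for large $n$. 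The paper's route is shorter but outsources the convergence to an external almost-sure result.

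\textbf{Strict inequality.} Your suspicion is justified. The paper's asserted equality $\Pr(\xi_n<\gamma_{1-\beta})=1-\beta$ needs continuity, since in general $\Pr(\xi_n<\gamma_{1-\beta})\le 1-\beta$. Two of your fixes work: assuming no atom at $\gamma_{1-\beta}$, or using a non-strict rejection rule. Switching to the right-continuous quantile does not work. At an atom $a$ with $\Pr(\xi_n<a)<1-\beta<\Pr(\xi_n\le a)$, both versions of the quantile equal $a$.

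The problem only arises when $\KSD(Q,P)=\theta'$. If $\KSD(Q,P)<\theta'$, then $\KSD(Q_n,P)<\theta'+\xi_n$, so the event $\{\xi_n\le\gamma_{1-\beta}\}$ already forces rejection. However, the boundary case contains $Q=P$ with $\theta'=0$, where $\KSD(Q_n,P)=\xi_n$ exactly. There an atom at $\gamma_{1-\beta}$ genuinely breaks the bound.
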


\begin{remark}
    For example, when $\theta' = 0$, the induced equivalence margin is $\theta = \gamma_{1-\alpha} + \gamma_{1-\beta}$, and \Cref{thm:power_selected_theta_ksd} shows that the power guarantee \eqref{eq:power_selected_power_ksd} holds when $\KSD(Q, P) = 0$, or equivalently $Q = P$, while the calibration \eqref{eq:power_selected_type_1_ksd} holds when $\KSD(Q, P) > 0$, or equivalently $Q \neq P$.
\end{remark}
\begin{remark}
    \Cref{thm:power_selected_theta_ksd} does not account for approximation errors due to the bootstrap step, since it concerns the (unknown) quantiles $\gamma_{1-\alpha}, \gamma_{1-\beta}$ instead of their bootstrap approximation. Consequently, the power guarantee \eqref{eq:power_selected_power_ksd} might not hold with finite samples in practice. However, we find in our experiments in \Cref{sec:experiments} that such power guarantee does hold with finite samples. In fact, it can sometimes be conservative, in the sense that it can yield a larger power by choosing an unnecessarily large $\theta$. We will discuss this further in \Cref{sec:exp:gauss_ms}.
\end{remark}
\begin{remark}
    This practice of choosing $\theta$ based on a desired power has been adopted widely in applications such as psychology studies \citep{lakens2014performing,lakens2017equivalence}, sometimes under the name the ``\emph{small-telescopes}'' approach \citep{simonsohn2015small}. In that context, $\theta$ is also called the \emph{smallest effect size of interest} (SESOI). In some applications such as clinical trials, one might be able to infer an appropriate equivalence bound from pre-set regulations \citep{piaggio2006reporting,meyners2012equivalence,dette2018equivalence}.
\end{remark}

The same idea can be applied to the MMD-based bootstrapping test in a straightforward manner. The only difference is that the critical value should be the $(1-\beta)$-th quantile of $S_{n,m} \coloneqq \MMD(Q_n, Q) + \MMD(P_m, P)$ instead of $\xi_n$. We summarize this in the following result, proven in \Cref{pf:thm:power_selected_theta}.

\begin{theorem}
\label{thm:power_selected_theta}
    For any $\rho \in (0, 1)$, denote by $\eta_\rho$ the $\rho$-th quantile of the distribution of $S_n = \MMD(Q_n, Q) + \MMD(P_n, P)$. For any $\theta'$ and $\MMD(Q, P) \leq \theta'$, letting $\theta = \theta' + \eta_{1-\alpha} + \eta_{1-\beta}$, we have
    \begin{align}
        \Pr(T_{n,m}^{\MMD,\theta} > \eta_{1-\alpha}) \;\geq\; 1 - \beta \;.
        \label{eq:power_selected_power}
    \end{align}
    Moreover, if instead $\MMD(Q, P) > \theta'$, then
    \begin{align}
        \Pr(T_{n,m}^{\MMD,\theta} > \eta_{1-\alpha}) \;\to\; 0 \;.
        \label{eq:power_selected_type_1}
    \end{align}
\end{theorem}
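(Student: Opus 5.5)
The plan mirrors the argument sketched before \Cref{thm:power_selected_theta_ksd}, with the KSD triangle inequality \eqref{eq:KSD_traingle_ineq} replaced by two applications of the MMD triangle inequality. The first step is to record the deterministic bound
\begin{align*}
    \MMD(Q_n, P_m)
    \;\leq\;
    \MMD(Q_n, Q) + \MMD(Q, P) + \MMD(P, P_m)
    \;=\;
    S_{n,m} + \MMD(Q, P)
    \;,
\end{align*}
together with the reverse bound $\MMD(Q_n, P_m) \geq \MMD(Q, P) - S_{n,m}$. Both hold because MMD is a (pseudo)metric on $\cP$ via $\|\mu_Q - \mu_P\|_{\cH}$, and $Q_n, P_m$ lie in the domain where the mean embedding is well-defined. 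Here $S_{n,m}$ is the quantity written $S_n$ in the statement.

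For the power claim \eqref{eq:power_selected_power}, assume $\MMD(Q,P) \leq \theta'$. The first bound gives $T_{n,m}^{\MMD,\theta} \geq \theta - \theta' - S_{n,m} = \eta_{1-\alpha} + \eta_{1-\beta} - S_{n,m}$. Hence
\begin{align*}
    \Pr(T_{n,m}^{\MMD,\theta} > \eta_{1-\alpha})
    \;\geq\;
    \Pr(S_{n,m} < \eta_{1-\beta})
    \;.
\end{align*}
By the definition of the quantile, $\Pr(S_{n,m} \leq \eta_{1-\beta}) \geq 1-\beta$. The issue is that the inequality obtained is strict, so an atom of $S_{n,m}$ at $\eta_{1-\beta}$ would cost us. I would handle this in one of two ways. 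The first option is to note that the rejection event only requires $T > \eta_{1-\alpha}$ while the lower bound holds pointwise, and to restate with $\geq$ where needed. The second, which I prefer, is to argue that $S_{n,m}$ has a continuous distribution, or simply to interpret $\eta_{1-\beta}$ as a continuity point; this matches the informal derivation in the paper. I expect this boundary/atom technicality to be the only delicate point in the finite-sample part, and I would state the continuity convention explicitly.

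For the Type-I claim \eqref{eq:power_selected_type_1}, assume $\MMD(Q,P) = \theta' + \delta$ with $\delta > 0$. The reverse bound gives $T_{n,m}^{\MMD,\theta} \leq \eta_{1-\alpha} + \eta_{1-\beta} - \delta + S_{n,m}$. The event $\{T > \eta_{1-\alpha}\}$ is therefore contained in $\{S_{n,m} + \eta_{1-\beta} > \delta\}$. It remains to show that $S_{n,m} \to 0$ in probability and that the deterministic quantiles $\eta_{1-\beta} \to 0$.

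Both facts follow from the rate $\E[\MMD^2(Q_n, Q)] = n^{-1}(\E_Q k(X,X) - \E_{Q\otimes Q} k(X,X')) = O(1/n)$, and likewise for $P_m$, which is finite under \Cref{assump:mmd_moment}. Markov's inequality then gives $S_{n,m} = O_p(N^{-1/2})$, using \Cref{assump:mmd_sample_sizes} so that both $n,m\to\infty$. Since $\eta_{1-\beta}$ is a fixed-level quantile of a variable that is $O_p(N^{-1/2})$, it also tends to $0$; an alternative route is \Cref{cor: validity of bootstrap}, which shows that $\sqrt{N}S_{n,m}$ has a tight limit. Consequently $\Pr(S_{n,m} + \eta_{1-\beta} > \delta) \to 0$, which completes the proof.
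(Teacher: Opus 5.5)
Your proposal is correct and follows the paper's proof. The same double triangle inequality gives the power bound, and your reverse-triangle reduction of the second claim to $S_{n,m}\to 0$ in probability (which forces $\eta_{1-\beta}\to 0$) stands in for the paper's law-of-large-numbers step. The one soft spot is that your $O(1/n)$ second-moment bound uses $\E_{X\sim Q}[k(X,X)]<\infty$, which is slightly more than \Cref{assump:mmd_moment} literally states, although convergence in probability is all you need.

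Your atom caveat is well founded: the paper silently writes $\Pr(\xi_n<\eta_{1-\beta})=1-\beta$, and the finite-sample bound can fail without continuity of the law of $S_{n,m}$ at $\eta_{1-\beta}$. For $Q=P$ uniform on two points, $n=m=1$ and $\theta'=0$, one gets $S_{n,m}\equiv c>0$, $\theta=2c$ and power exactly $1/2$. So stating that continuity assumption explicitly, or rejecting when $T_{n,m}^{\MMD,\theta}\geq\eta_{1-\alpha}$, is the right repair.
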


\section{Experiments}
\label{sec:experiments}

We now evaluate the proposed equivalence tests on numerical experiments. We use the IMQ kernel for the KSD-based tests and the RBF kernel for the MMD-based tests; these are standard choices in the literature; see, e.g., \citet{gorham2017measuring,anastasiou2023steins} for KSD and \citet{gretton2012kernel,muandet2017kernel} for MMD. All kernel bandwidths are selected using the median heuristic \citep{gretton2012kernel}, defined as
\begin{align*}
    \lambda_{\mathrm{med}}^2 \;=\; \mathrm{Median}\big\{\|x - y \|_2^2: x, y \in \mathcal{D}, x \neq y \big\} \;,
\end{align*}
where $\mathcal{D} = \{x_i\}_{i=1}^n$ for KSD, and $\mathcal{D} = \{x_i\}_{i=1}^n \cup \{y_i\}_{i=1}^n$ for MMD. Each experiment is repeated 100 times, and we report the proportion of rejections as well as the $95\%$ confidence intervals. All tests have nominal level $\alpha = 0.05$.

\subsection{Gaussian Mean-Shift Models}
\label{sec:exp:gauss_ms}
\begin{figure}
    \centering
    \includegraphics[width=1\linewidth]{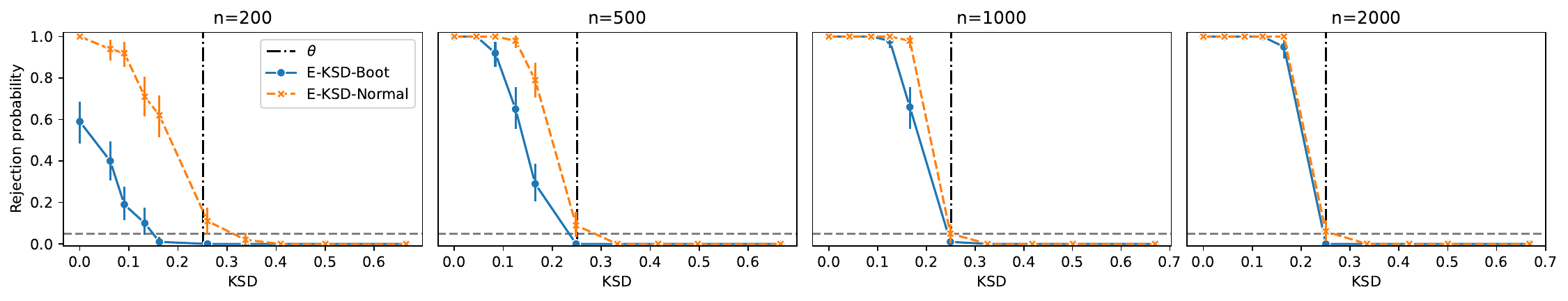}
    \caption{Gaussian mean-shift experiments with varying sample sizes. The black dotted vertical line is the equivalence margin $\theta$.}
    \label{fig:ksd:gauss_ms:sample_size}
\end{figure}

We illustrate the proposed tests with a Gaussian mean-shift example. We draw samples from the standard Gaussian $Q = \cN(0, 1)$, and the nominal distribution is $P = \cN(\mu, 1)$ with some $\mu \in \R$. The score function of such $P$ is $s_p(x) = - x + \mu$. We perform the E-KSD-Normal and E-KSD-Boot tests and compare their Type-I error and power.

\paragraph{Sample size}
We first evaluate the effect of sample sizes in \Cref{fig:ksd:gauss_ms:sample_size}. We fix the equivalence margin $\theta$ to be the population KSD value when $\mu = 0.3$. We can see that both tests correctly controls the Type-I error when the KSD is greater than $\theta$. When the KSD equals $\theta$, namely $Q$ lies at the boundary of the null set, E-KSD-Boot still controls Type-I error, even with a moderate sample size. In comparison, E-KSD-Normal fails to do so, even though both tests are shown to be valid asymptotically by \Cref{thm:ksd_normal_test} and \Cref{thm:ksd_bootstrap_test}. This is because E-KSD-Normal is based on a CLT, which is only a good approximation when $\KSD(Q, P) > 0$. As $\KSD(Q, P)$ tends to 0, the limiting distribution of the KSD estimator is no longer Gaussian, as shown in \Cref{prop:ksd_asymptotics}. Instead, in this case, \eqref{eq:ksd_vstat} becomes degenerate with an asymptotic distribution that is an infinite sum of weighted chi-squares (see the discussion after \Cref{prop:ksd_asymptotics}). Therefore, the normal approximation will have non-negligible errors when  $\KSD(Q, P)$ is close to zero. This poor Type-I error control is also found in other equivalence tests based on asymptotic normality \citep[Section 4]{chen2023testing}. In contrast, E-KSD-Boot does not suffer from this issue, as it does not rely on any normal approximations.

When the KSD is smaller than $\theta$, namely we are under $H_1$, both tests have non-trivial power, with the E-KSD-Normal test having the higher power. Importantly, the power of both tests converge to 1 as the sample sizes tend to infinity, which is expected given their consistency shown in \Cref{thm:ksd_normal_test} and \Cref{thm:ksd_bootstrap_test}.

\begin{figure}
    \centering
    \includegraphics[width=1\linewidth]{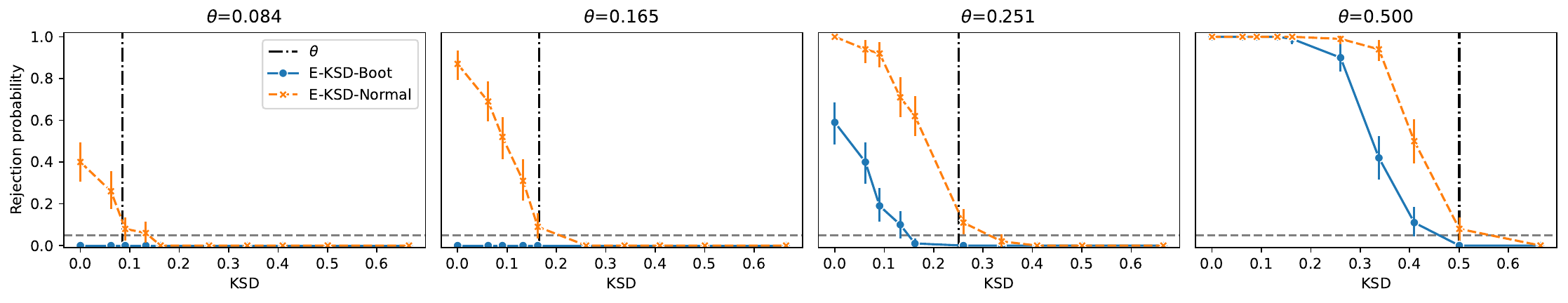}
    \caption{Gaussian mean-shift experiments with varying equivalence margins $\theta$, selected to be the population KSD values for different mean shifts.}
    \label{fig:ksd:gauss_ms:theta}
\end{figure}

\paragraph{Equivalence margin}
Next, we investigate the impact of equivalence margin in \Cref{fig:ksd:gauss_ms:theta}. We fix the sample size to be $n = 200$, and set the equivalence margin $\theta$ to be the population KSD value with $\mu = 0.1, 0.2, 0.3, 0.6$, respectively. For all $\theta$, the E-KSD-Normal test has a higher power, although it has poor Type-I error control near the equivalence margin, especially when $\theta$ is small. As discussed before, this is because the normal approximation deteriorates as $\theta \to 0$. On the other hand, the E-KSD-Boot test has a better Type-I error control at the expense of a lower test power.

\paragraph{Power-selected $\theta$}
We then evaluate the approach proposed in \Cref{sec:choice_of_equiv_margin}, where $\theta$ is selected by a power guarantee with a Type-II error control of $\beta = 0.2$ and $\theta' = 0$. In words, we select the $\theta$ with which the E-KSD-Boot test would achieve a power of at least $1-\beta = 0.8$ when $Q = P$. The results, shown in the top row of \Cref{fig:ksd:power_theta}, confirm that this approach indeed gives the desired power guarantee with E-KSD-Boot for all sample sizes. More importantly, this does not sacrifice the validity, as E-KSD-Boot still has a well-calibrated Type-I error rate. In comparison, E-KSD-Normal fails to control the Type-I error for $Q$ near the boundary of the null set, due to reasons discussed before.

We repeat the same experiment for the MMD-based equivalence tests, E-MMD-Boot and E-MMD-Normal, in the bottom of \Cref{fig:ksd:power_theta}. The same conclusion can be drawn. In particular, with the power-selected $\theta$, the E-MMD-Boot test is well-calibrated when $\MMD(Q, P) \geq \theta$, whereas when $Q = P$, it achieves the desired power of $1 - \beta = 0.8$. However, unlike for the KSD-based bootstrapping test, the power of E-MMD-Boot at $Q = P$ is close to 1, far higher than $1 - \beta$. Intuitively, this is because the $\theta$ derived for the MMD tests rely on a lower bound using \emph{two} triangle inequalities (see the proof of \Cref{thm:power_selected_theta}), whereas the KSD counterpart only uses one. As a result, the power-selected margin for E-MMD-Boot is more conservative, thus the higher power.

\begin{figure}
    \centering
    \includegraphics[width=1\linewidth]{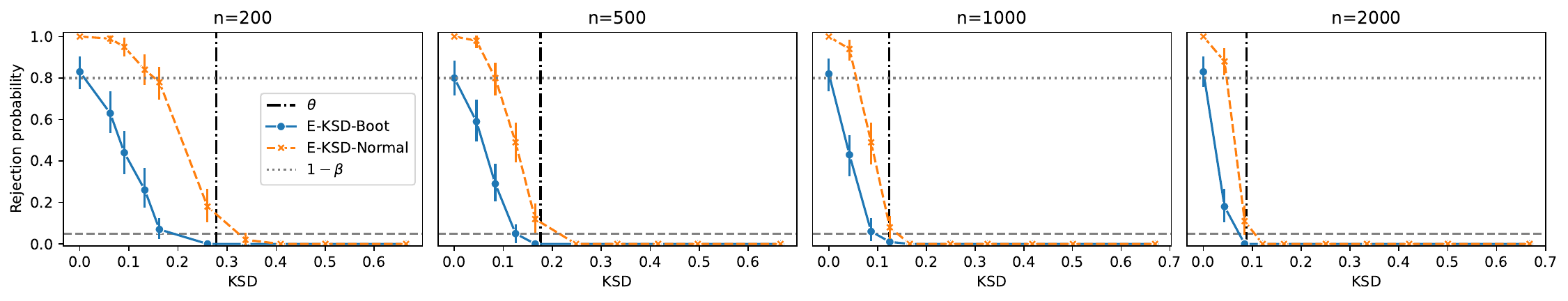}
    \includegraphics[width=1\linewidth]{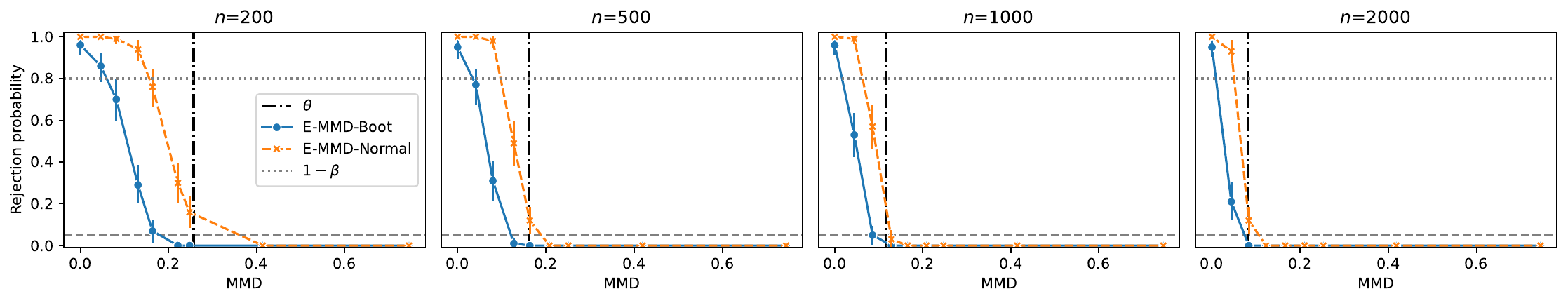}
    \caption{Gaussian mean-shift experiments with $\theta$ selected using a power guarantee with $\beta = 0.2$. \emph{Top}. KSD-based equivalence tests. \emph{Bottom}. MMD-based equivalence tests. }
    \label{fig:ksd:power_theta}
\end{figure}

\subsection{Gaussian-Bernoulli Restricted Boltzmann Machine}
We use the proposed KSD-based equivalence tests to assess the goodness-of-fit of Gaussian-Bernoulli Restricted Boltzmann Machines (GB-RBM) \citep{welling2004exponential,hinton2006reducing}. GB-RBM is a latent-variable model with joint density $p(x, h) \propto \exp(\frac{1}{2} x^\top Bh + b^\top x + c^\top h - \frac{1}{2} \| x \|_2^2)$, where $x \in \R^d$ is a vector of observed variables, $h \in \{\pm 1\}^{d'}$ is a binary hidden variable of latent dimension $d'$, and the model parameters are $B \in \R^{d \times d'}, b \in \R^d, c \in \R^{d'}$. The normalizing constant of a GB-RBM is in general intractable, because marginalizing over the hidden variable requires computing a sum of $2^{d'}$ terms, which is computationally prohibitive when $d'$ is large. Nevertheless, its score function admits a closed form: $s_p(x) = b - x + B \tanh(B^\top x + c)$, where $\tanh$ is applied entry-wise. This makes GB-RBMs a common benchmark for assessing KSD-based GOF tests \citep{liu2016kernelized,jitkrittum2017lineartime,schrab2022ksd,liu2025robustness}. 

We set $d = 50$ and $d' = 10$. The target distribution $P$ is formed with randomly initialized $B, b, c$, where each entry of $b, c$ is drawn independently from $\cN(0, 1)$, and the entries of $B$ are drawn from $\{\pm1\}$ with equal probability. The sampling distribution $Q$ has the same $B, c$, but the parameter $b$ is replaced with the perturbed version $\tilde{b} \sim \cN(b, \sigma^2 I_d)$, for some $\sigma > 0$. We draw samples of size $n = 500$ from $Q$ using block Gibbs sampling as done in \citet{cho2013gaussian,jitkrittum2017lineartime}. We then perform E-KSD-Normal and E-KSD-Boot by setting $\theta$ using a Type-II error control of $\beta = 0.2$ and $\theta' = 0$. 

\Cref{fig:ksd:rbm} shows the results with increasing noise levels $\sigma$. We plot the rejection probability against both the KSD values $\KSD(Q,P)$ and the noise levels on the x-axis; the purpose of the first plot is to demonstrate the calibration and power of the tests by showing $\theta$. Both tests have the correct Type-I error when $\KSD(Q, P) \geq  \theta$. For sufficiently small $\sigma$ such that the alternative hypothesis holds, both tests achieve non-trivial power, with the E-KSD-Normal test being more powerful than E-KSD-Boot. Moreover, at $\sigma = 0$, namely when $Q = P$, E-KSD-Boot has power slightly larger than $1 - \beta = 0.8$. This is expected given that $\theta$ is selected using the smallest effect-size approach. In comparison, E-KSD-Normal achieves a perfect power of 1, suggesting that such $\theta$ selected is conservative for this example. Nevertheless, we note that E-KSD-Normal may fail to control the Type-I error in some cases, as demonstrated in \Cref{sec:exp:gauss_ms}, and thus a direct power comparison between E-KSD-Normal and E-KSD-Boot can be unfair.

\begin{figure}
    \centering
    \includegraphics[width=0.6\linewidth]{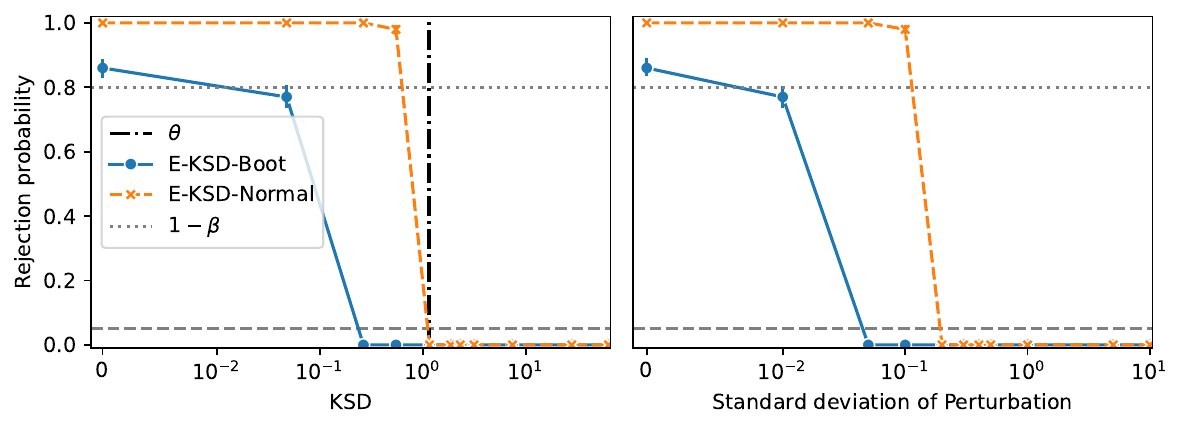}
    \caption{Gaussian-Bernoulli RBM experiment with $\theta$ selected using a Type-II error guarantee with $\beta = 0.2$. The same results are plotted against the KSD values (left) and the standard deviations of the noise (right).}
    \label{fig:ksd:rbm}
\end{figure}

\subsection{MNIST}

\begin{figure}
    \centering
    \includegraphics[width=1\linewidth]{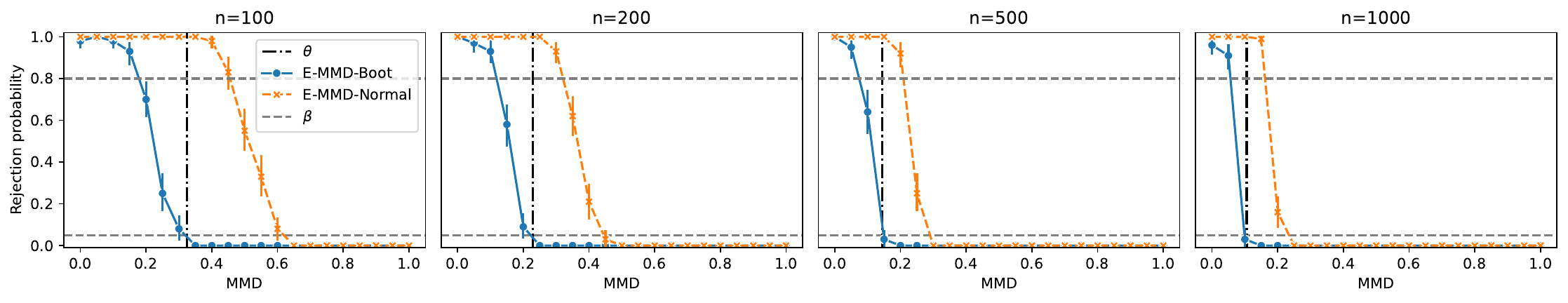}
    \caption{Testing equivalence of MNIST samples. The $\theta$ is selected using a power guarantee with $\beta = 0.2$.}
    \label{fig:mmd:mnist}
\end{figure}

We now evaluate the MMD-based equivalence tests using the MNIST dataset \citep{lecun2010mnist}. Denote by $P_l$ all images of digit $l$, for $l \in \{0, \ldots, 9\}$. The nominal distribution $P = P_1$ is all images of digit $1$, and the sampling distribution $Q = (1-\epsilon) P_1 + \epsilon P_3$ is a mixture of images of digit 1 and 3, with mixing ratio $\epsilon \in [0, 1]$. The purpose of this example is to assess the proposed MMD-based tests in high dimensions and under the presence of outliers. Similarly as before, we select $\theta$ using a Type-II error guarantee of $\beta = 0.2$ and $\theta' = 0$. 

\Cref{fig:mmd:mnist} shows the results as the mixing ratio $\epsilon$ increases from 0 to 1. Under the null hypothesis, where $\MMD(Q, P) \geq \theta$, the E-MMD-Normal test has poor Type-I error control. We again attribute this to the degraded normal approximation due to the high dimensional nature of this example (each image is represented by a vector of 784 dimensions). In contrast, the E-KSD-Boot test is well-calibrated. Moreover, the E-KSD-Boot is not overly conservative, as it achieves non-trivial power for small values of $\MMD(Q, P)$. In particular, when $\MMD(Q, P) = 0$, the realized power is close to 1. This again shows that the smallest effect-size approach is able to select a meaningful margin $\theta$, although in this case it is rather conservative, as the power is higher than the prescribed value of $1 - \beta = 0.8$.

\section{Conclusion}

We propose kernel-based equivalence tests that are suitable for evaluating the equivalence between distributions. Standard goodness-of-fit testing is only designed to assess the \emph{lack} of goodness-of-fit, but cannot draw conclusions on the \emph{equivalence} between observed data and the nominal distribution with probabilistic guarantees on the error rate. Our proposed tests, on the other hand, are shown to be asymptotically valid against null hypotheses that the data and the nominal distribution differ at least by some equivalence margin. Leveraging kernel-based statistical discrepancies, namely the kernel Stein discrepancy and the maximum mean discrepancy, our proposed suit of tests covers both the one-sample and two-sample testing scenarios. Since one challenge with these equivalence tests is the selection of an appropriate equivalence margin, we introduce a data-driven approach where an equivalence margin is selected to be the minimal effect size for achieving a pre-specified power against some certain alternatives.

We conclude with a discussion on some future directions. 
\begin{itemize}
    \item \textbf{Extension to other statistical discrepancies.} Our proof techniques extend naturally to generic one-sample and two-sample V-statistics. It would therefore be interesting to investigate other statistical discrepancies that admit such estimators when constructing similar equivalence tests. Examples include \emph{energy distance} \citep{szekely2004testing} and the \emph{Hilbert-Schmidt independence criterion} \citep[HSIC,][]{gretton2005measuring,gretton2007kernel}, both of which are closely connected to MMD \citep{sejdinovic2013equivalence}.
    \item \textbf{Alternative estimators.} As discussed in \Cref{sec:two_sample:bootstrap_test} and \Cref{sec:one_sample:bootstrap_test}, the normality-based tests can have uncontrolled Type-I error rates when $\theta$ is small, because the CLT approximation of the KSD and MMD estimators deteriorates when $\KSD(Q, P)$ and $\MMD(Q, P)$ are close to zero. One potential solution is to use alternative estimators that retain asymptotic normality even when $\KSD(Q, P)=0$ or $\MMD(Q, P)=0$, such as those based on \emph{incomplete U-statistics} \citep{kim2024dimension,shekhar2022permutation}. Understanding whether these alternative estimators can improve the Type-I error control of these normality-based tests is another possible direction. 
    \item \textbf{Selection of equivalence margin.} Finally, it is important to provide a more intuitive interpretation of the equivalence margin $\theta$ selected using the minimal-effect approach. This is particularly relevant in fields such as bioequivalence studies, where statistical equivalence is often defined through explicit biological criteria \citep{berger1996bioequivalence,dette2021bio,meyners2012equivalence}. Relating these criteria to the value of $\theta$ chosen via the minimal-effect approach in \Cref{sec:choice_of_equiv_margin} may improve the interpretability and practical applicability of kernel-based equivalence tests.
\end{itemize}

\section*{Acknowledgments}
The authors would like to thank Linying Yang for helpful discussions.

% REFERENCE %%%%%%%%%%%%%%%%%%%%%%%%%%%%%%%%%%%
% \bibliographystyle{abbrvnat}
\bibliographystyle{apalike}
\bibliography{ref}

% APPENDIX %%%%%%%%%%%%%%%%%%%%%%%%%%%%%%%%%%%
\newpage
\appendix

\section{Supplementary Background and Results}

We include auxiliary results that will become useful in the proofs in \Cref{sec:proofs}. \begin{itemize}
    \item \Cref{sec:large_sample_asymptotics} studies the weak limit of statistics of the form $\MMD(Q_n, Q)$.
    \item \Cref{sec:normal_equiv_test_general} presents a general framework for the normality-based equivalence tests that applies to both KSD and MMD.
\end{itemize} 

\subsection{Large-Sample Asymptotics}
\label{sec:large_sample_asymptotics}
We present the asymptotic distribution of $\MMD(Q_n, Q)$, where $Q_n$ is an empirical distribution based on random samples from $Q$. This type of statistics appear in the bootstrap validity results \Cref{lem:one_sample_bootstrap_validity} and \Cref{cor: validity of bootstrap}. It is \emph{not} a typical MMD statistic, as the second argument is a population distribution, and thus this statistic is not computable. Nevertheless, we show that it converges to a weak limit at rate $\sqrt{n}$. This is not surprising, since $\MMD^2(Q_n, Q)$ can be shown to be a degenerate (one-sample) V-statistics, and thus converges weakly at rate $n$. Its proof is in \Cref{pf:lem:D_Qn_Q_asymptotics}

\begin{lemma}
\label{lem:D_Qn_Q_asymptotics}
    Let $Q_n$ be an empirical distribution based on an i.i.d.\ random sample $\{X_i\}_{i=1}^n$ drawn from $Q$. Assume $k$ is a reproducing kernel with $\E_{X, X' \sim Q}[|k(X, X')|^2] < \infty$. Then $\sqrt{n} \MMD(Q_n, Q)$ converges weakly to a non-degenerate distribution.
\end{lemma}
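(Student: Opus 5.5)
The plan is to write $\MMD^2(Q_n, Q)$ as a degenerate V-statistic, apply the classical weighted-chi-square limit for such statistics, and then pass to the square root. Define the centered kernel
\begin{align*}
    \bar{k}(x, x') \;\coloneqq\; k(x, x') - \E_{Y \sim Q}[k(x, Y)] - \E_{Y \sim Q}[k(Y, x')] + \E_{Y, Y' \sim Q}[k(Y, Y')] \;,
\end{align*}
which is $\langle k(x,\cdot) - \mu_Q, k(x',\cdot) - \mu_Q \rangle_{\cH}$. Expanding $\| \mu_{Q_n} - \mu_Q \|_\cH^2$ then gives
\begin{align*}
    \MMD^2(Q_n, Q) \;=\; \frac{1}{n^2} \sum_{1 \leq i, j \leq n} \bar{k}(X_i, X_j) \;.
\end{align*}
The kernel $\bar{k}$ is positive semi-definite and satisfies $\E_{X' \sim Q}[\bar{k}(x, X')] = 0$ for every $x$, so this V-statistic is degenerate of order one. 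The moment assumption $\E[|k(X,X')|^2] < \infty$, combined with Jensen's inequality, gives $\E[\bar{k}(X,X')^2] < \infty$. We will also need $\E[\bar{k}(X,X)] = \E[k(X,X)] - \E[k(X,X')] < \infty$, which is exactly $\E\|k(X,\cdot) - \mu_Q\|^2_\cH$. Finiteness of $\E[k(X,X)]$ is implicit in $\mu_Q$ being well-defined as a Bochner integral, so we will state it as part of the standing interpretation of the assumption.

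Next we apply the spectral theorem to the Hilbert--Schmidt integral operator $T f(x) = \E_{X'}[\bar{k}(x, X') f(X')]$ on $L^2(Q)$. This gives eigenvalues $\lambda_j \geq 0$ with $\sum_j \lambda_j = \E[\bar{k}(X,X)] < \infty$ (trace class, since $\bar{k}$ is positive semi-definite with integrable diagonal, by Mercer-type arguments on $L^2(Q)$). The eigenfunctions $\phi_j$ are orthonormal and have mean zero. Split the V-statistic into its off-diagonal U-part and its diagonal part:
\begin{align*}
    n \MMD^2(Q_n, Q) \;=\; \frac{1}{n}\sum_{i \neq j} \bar{k}(X_i, X_j) \;+\; \frac{1}{n} \sum_{i} \bar{k}(X_i, X_i) \;.
\end{align*}
By \citet[Theorem 5.5.2]{serfling2009approximation}, the first term converges weakly to $\sum_j \lambda_j (Z_j^2 - 1)$ with $Z_j$ i.i.d.\ $\cN(0,1)$. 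By the strong law of large numbers, the second term converges almost surely to $\E[\bar{k}(X,X)] = \sum_j \lambda_j$. Slutsky's lemma then gives
\begin{align*}
    n \MMD^2(Q_n, Q) \;\darrow\; \sum_j \lambda_j Z_j^2 \;,
\end{align*}
where the series converges almost surely because $\sum_j \lambda_j < \infty$.

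Finally, the continuous mapping theorem with $t \mapsto \sqrt{t}$ on $[0,\infty)$ yields
\begin{align*}
    \sqrt{n}\MMD(Q_n, Q) \;\darrow\; \Big(\sum_j \lambda_j Z_j^2\Big)^{1/2} \;.
\end{align*}
This limit is non-degenerate, meaning not a point mass, as soon as some $\lambda_j > 0$. That holds whenever $\bar{k} \not\equiv 0$ in $L^2(Q\otimes Q)$, i.e.\ whenever $k(X,\cdot)$ is not $Q$-a.s.\ equal to $\mu_Q$. The latter fails only in the trivial situation where the feature map is $Q$-a.s.\ constant, for example when $Q$ is a Dirac mass. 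If the lemma intends to exclude this, we will state it as an implicit assumption (e.g.\ $Q$ non-degenerate and $k$ characteristic).

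The main obstacle is the diagonal term. The hypothesis $\E|k(X,X')|^2<\infty$ on independent pairs does not by itself control $\E[k(X,X)]$. So the step I would scrutinize is justifying $\E[\bar{k}(X,X)] < \infty$, equivalently the trace-class property of $T$. If this fails, the diagonal term diverges and the rate $\sqrt{n}$ breaks. My first attempt is to derive it from the existence of the mean embedding $\mu_Q$, which the paper uses throughout. Failing that, I would add $\E[k(X,X)] < \infty$ explicitly; it holds automatically for bounded kernels.
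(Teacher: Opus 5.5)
Your argument follows the same route as the paper's proof. Both centre the kernel at $Q$, recognise a degenerate V-statistic, take the weighted chi-square limit, and apply the continuous mapping theorem to $t\mapsto\sqrt t$. The only difference is that you split off the diagonal and handle it with the strong law, rather than citing the degenerate V-statistic theorem in one step. That split exposes two points the paper's proof passes over, and both of your caveats are real defects of the statement, not of your argument. First, for $Q=\delta_x$ one has $Q_n=Q$ and $\MMD(Q_n,Q)\equiv 0$, so the limit is a point mass. Second, the diagonal term needs $\E[k(X,X)]<\infty$.

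One claim in your second paragraph is wrong: $\E[k(X,X)]<\infty$ is not implicit in $\mu_Q$ existing as a Bochner integral. Bochner integrability only needs $\E\|k(X,\cdot)\|_{\cH}=\E[k(X,X)^{1/2}]<\infty$, so your ``first attempt'' cannot succeed and the fallback is mandatory. Here is a concrete failure. Let $Q$ put mass proportional to $j^{-2}$ on distinct points $x_j$, and let $k(x_i,x_j)=j\,\delta_{ij}$, i.e.\ orthogonal features of squared norm $j$. Then $\E[k(X,X)^{1/2}]\propto\sum_j j^{-3/2}<\infty$ and $\E[|k(X,X')|^2]\propto\sum_j j^{-2}<\infty$, yet $\E[k(X,X)]\propto\sum_j j^{-1}=\infty$. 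In your decomposition the off-diagonal term has bounded variance, while the diagonal average diverges almost surely. Hence $\sqrt n\,\MMD(Q_n,Q)\to\infty$ in probability, and the lemma as stated is false.

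Your proof is complete once you add two hypotheses. The first is $\E[k(X,X)]<\infty$, which is automatic for bounded kernels and holds for $u_p$ under \Cref{assump:score_condition} and \Cref{assump:ksd_kernel_condition}. The second is to exclude $Q$ with $k(X,\cdot)=\mu_Q$ $Q$-a.s.; for characteristic $k$ this means exactly that $Q$ is not a point mass.
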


\subsection{Normality-based Equivalence Test: A General Framework}
\label{sec:normal_equiv_test_general}
Both \Cref{thm:ksd_normal_test} and \Cref{thm:mmd_normal_test} are kernel equivalence tests based on normal approximations, with the main difference being that the former uses KSD as the statistical divergence, while the latter uses MMD. Nonetheless, the proof for their validity and consistency follows a similar approach. This section presents a general framework for their proofs, which will be tailored to show their individual cases in \Cref{pf:thm:ksd_normal_test} and \Cref{app:proofs:mmd}, respectively.

The proof relies on the following assumption on the asymptotic distributional limit of estimators for the statistical divergences used. It holds for both KSD and MMD.
\begin{assumption}
\label{assump:general_asymptotics}
    Let $\{Q_l\}_{l=1}^\infty$ and $\{P_l\}_{l=1}^\infty$ be two sequences of probability measures. Let $k$ be a reproducing kernel. For each $l$, define $\hat{D}_l \coloneqq \MMD(Q_l, P_l; k)$ and $D \coloneqq \MMD(Q, P; k)$, and assume they are well-defined. Suppose $\{\hat{\sigma}_l\}_{l=1}^\infty$ is a sequence of non-negative numbers such that $\hat{\sigma}_l \to \sigma$ for some $\sigma \geq 0$. Furthermore, assume
    \begin{enumerate}
        \item When $Q \neq P$, then $\sigma > 0$, and $\sqrt{l}(\hat{D}_l - D) \to \cN(0, \sigma^2)$.
        \item When $Q = P$, then $\sigma = 0$, and $\sqrt{l}\hat{D}_l \to 0$ in probability.
    \end{enumerate}
\end{assumption}

We are now ready to present the following result for the validity and consistency of the normality-based test. Its proof is in \Cref{pf:prop:general_normal_test}.
\begin{proposition}
\label{prop:general_normal_test}
    Suppose \Cref{assump:general_asymptotics} holds. Define $\hat{S}^\theta_l =\sqrt{l} (\hat{D}^2_l - \theta^2) / \hat{\sigma}_l$, where $\theta > 0$, and $\hat{\sigma}_l^2 \to \sigma^2$ in probability. Further assume that $\E_{X, X' \sim Q}[|k(X, X')|^2] < \infty $. Then
    \begin{align*}
        \lim_{l \to \infty} \Pr(\hat{S}^\theta_l > \gamma_{1-\alpha})
        \;=\;
        \begin{cases}
            0 \;, &D > \theta \;, \\
            \alpha \;, &D = \theta \;, \\
            1 \;, & D < \theta \;.
        \end{cases}
    \end{align*}
\end{proposition}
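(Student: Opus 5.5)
We read the event in the display as the rejection event of \Cref{alg:ksd_normal_test}, namely $\{\hat{S}^\theta_l < z_\alpha\}$. Writing $\gamma_{1-\alpha} = -z_\alpha$, this is the same as $\{-\hat{S}^\theta_l > \gamma_{1-\alpha}\}$. With this reading the three limits are the stated ones. We also read \Cref{assump:general_asymptotics} as a statement about the squared estimator: $\sqrt{l}(\hat{D}^2_l - D^2) \darrow \cN(0,\sigma^2)$ when $Q \neq P$, and $\sqrt{l}\hat{D}^2_l \to 0$ in probability when $Q = P$. This is the form supplied by \Cref{prop:ksd_asymptotics} and \Cref{prop:mmd_asymptotics}. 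The plan is to split by the sign of $D - \theta$ and in each case combine the assumed limit with Slutsky's lemma, using the decomposition
\begin{align*}
    \hat{S}^\theta_l
    \;=\;
    \frac{\sqrt{l}(\hat{D}^2_l - D^2)}{\hat{\sigma}_l}
    + \frac{\sqrt{l}(D^2 - \theta^2)}{\hat{\sigma}_l}
    \;.
\end{align*}

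\textbf{Case $D \geq \theta$.} Since $\theta > 0$, $D > 0$, so $Q \neq P$. Hence $\sigma > 0$ and $\hat{\sigma}_l \to \sigma$ in probability. By Slutsky, the first term converges weakly to $\cN(0,1)$.
\begin{itemize}
    \item If $D = \theta$, the second term vanishes. Therefore $\Pr(\hat{S}^\theta_l < z_\alpha) \to \Phi(z_\alpha) = \alpha$, using continuity of the normal cdf at $z_\alpha$.
    \item If $D > \theta$, the second term equals $\sqrt{l}(D^2-\theta^2)/\hat{\sigma}_l$. It diverges to $+\infty$ in probability, because its numerator is deterministic and grows like $\sqrt{l}$ while the denominator converges to $\sigma \in (0,\infty)$. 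Since the first term is tight, $\Pr(\hat{S}^\theta_l < z_\alpha) \to 0$.
\end{itemize}

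\textbf{Case $D < \theta$.} This splits into two subcases.
\begin{itemize}
    \item If $Q \neq P$, the argument mirrors the previous case. The first term is tight and the second diverges to $-\infty$ in probability, so the rejection probability tends to $1$.
    \item If $Q = P$, then $D = 0$ and $\sigma = 0$, so a separate argument is needed. The numerator $\sqrt{l}\hat{D}^2_l - \sqrt{l}\theta^2$ is $o_p(1) - \sqrt{l}\theta^2$, which tends to $-\infty$ in probability. Moreover $\hat{\sigma}_l \geq 0$ with $\hat{\sigma}_l \to 0$. On the event $\{\hat{\sigma}_l > 0,\ \text{numerator} < 0\}$ the ratio is negative. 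Its magnitude is at least $|\text{numerator}|/\sup_l \hat{\sigma}_l$ on an event of probability tending to one, hence it is below any fixed $z_\alpha$ with probability tending to $1$.
\end{itemize}

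\textbf{Main obstacle.} The degenerate subcase $Q = P$ is the delicate step. The denominator tends to zero, so Slutsky does not apply directly. In addition, $\hat{\sigma}_l = 0$ must be handled by a convention: the jackknife estimators can vanish only on null events for nondegenerate data, or we set $\hat{S}^\theta_l = -\infty$ when the numerator is negative. We will first establish that the numerator is eventually negative with probability tending to one, and then note that dividing a negative quantity by a small positive number only pushes it further below $z_\alpha$. This requires no rate on $\hat{\sigma}_l$ at all. The remaining steps are routine. The extra moment condition $\E|k(X,X')|^2 < \infty$ is used only to guarantee that the assumed limits and $\hat{D}_l$ are well defined.
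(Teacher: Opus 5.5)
Your proposal is correct and follows essentially the same route as the paper: split $\hat{S}^\theta_l$ into the studentized CLT term plus the drift $\sqrt{l}(D^2-\theta^2)/\hat{\sigma}_l$, apply Slutsky when $Q\neq P$, and treat $Q=P$ separately by playing $\sqrt{l}\hat{D}^2_l\to 0$ against $\sqrt{l}\theta^2\to\infty$. Your reading of the rejection event as $\{\hat{S}^\theta_l<z_\alpha\}$ and of the assumption in squared form is what makes the stated limits true (the paper's literal display for $\Pr(\hat{S}^\theta_l>z_\alpha)$ in the $Q\neq P$ case has the signs reversed); the only slip is dividing by $\sup_l\hat{\sigma}_l$, which need not be finite, so bound instead on the event $\{\hat{\sigma}_l\le 1\}$, whose probability tends to one.
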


\subsection{Intuition behind the Proposed Bootstrapping Approach}
We provide intuition for the choice of the bootstrap sample \eqref{eq: bootstrap sample}. To understand why this is the correct bootstrap statistic, we introduce the concept of \emph{degeneracy} \citep[Chapter 5]{serfling2009approximation}. A symmetric function $u: \R^d \times \R^d \to \R$ (or its corresponding V-statistic) is said to be $Q$-\emph{degenerate} if the function $g(x; Q) \coloneqq \E_{Y \sim Q}[ u(x, Y) ] - \E_{X, Y \sim Q}[u(X, Y)]$ is zero, i.e., $g(x; Q) \equiv 0$, and \emph{non-degenerate} otherwise. Crucially, the asymptotic behaviour of V-statistics varies depending on the degeneracy of $u$, and bootstrapping methods for each case have been well-studied \citep{arcones1992bootstrap,janssen1997bootstrapping}. A natural approach to construct a bootstrap sample for $\MMD^2(Q_n, Q)$ is then to identify it as a V-statistic and use the corresponding bootstrap statistic according to its degeneracy.

Using the closed-form expression for MMD, we can write
\begin{align*}
    \MMD^2(Q_n, Q)
    \;&=\;
    \frac{1}{n^2} \sum_{1 \leq i, j \leq n} \E_{Y, Y' \sim Q}\big[ h\big( (x_i, Y), (x_j, Y') \big) \big]
    \\
    \;&=\;
    \frac{1}{n^2} \sum_{1 \leq i, j \leq n} k(x_i, x_j) - \E_{Y \sim Q}[ k(x_i, Y) ] - \E_{Y' \sim Q}[ k(Y', x_j) ] + \E_{Y, Y' \sim Q}[ k(Y, Y') ]
    \\
    \;&=\;
    \frac{1}{n^2} \sum_{1 \leq i, j \leq n} k(x_i, x_j; Q)
    \;,
\end{align*}
where the second line follows by substituting the definition of the MMD kernel $h\big( (x, y), (x', y') \big) \coloneqq k(x, x') + k(y, y') - k(x, y') - k(x', y)$, and $\E_{Y, Y' \sim Q}[ k(Y, Y') ] = \int_{\R^d} \int_{\R^d} k(x, y) Q(\diff x) Q(\diff y)$, and in the last line we have defined
\begin{align*}
    k(x, y; Q)
    \;\coloneqq\;
    k(x, y) - \E_{Y \sim Q}[ k(x, Y) ] - \E_{X \sim Q}[ k(X, y) ] + \E_{Y, Y' \sim Q}[ k(Y, Y') ]
    \;.
\end{align*}
It is straightforward to see that a generic $k$ is \emph{not} $Q$-degenerate, whereas $k(x, y; Q)$ is. To construct bootstrap samples for V-statistic $\MMD^2(Q_n, Q)$, a naive approach would be to follow the classic construction for degenerate V-statistics \citep{huskova1993consistency}, which would correspond to replacing $k(x_i, x_j)$ in \eqref{eq: bootstrap sample} by $k(x_i, x_j; Q)$. However, this is \emph{not} feasible in practice, as $k(x_i, x_j; Q)$ involves expectations over $Q$, which are intractable.

Instead, we observe that $\MMD^2(Q_n, Q)$ is the second-order term in the \emph{Hoeffding decomposition} \citep[see, e.g., ][Eq. (2.6)]{arcones1992bootstrap} of a non-degenerate V-statistic defined by $n^{-2} \sum_{1 \leq i, j \leq n} k(x_i, x_j)$. Classic bootstrapping results for each term in the Hoeffding decomposition \citep[Lemma 2.3]{arcones1992bootstrap} then suggest that the correct bootstrap statistic for this term is \eqref{eq: bootstrap sample}.

\section{Proofs}
\label{sec:proofs}
This section contains proofs for our theoretical contributions. Proofs relevant to the KSD-based tests are held in \Cref{app:proofs:ksd}, while those relevant to the MMD-based tests are held in \Cref{app:proofs:mmd}.

\subsection{Proofs for KSD Equivalence Tests}
\label{app:proofs:ksd}
This subsection holds proofs for the results on the E-KSD-Normal and E-KSD-Boot tests.

\subsubsection{Proof of \Cref{prop:ksd_asymptotics}}
\label{pf:prop:ksd_asymptotics}
\begin{proof}[Proof of \Cref{prop:ksd_asymptotics}]
    Since \eqref{eq:ksd_vstat} is a V-statistic, classic results on the asymptotics of V-statistics show that \emph{(i)} if $\Var_{X \sim Q}(\E_{X' \sim Q}[ u_p(X, X') ]) > 0$, then $\sqrt{n}(\KSD^2(Q_n, P) - \KSD^2(Q, P)) \darrow \cN(0, \sigma^2_\KSD)$, and \emph{(ii)} if $\Var_{X \sim Q}(\E_{X' \sim Q}[ u_p(X, X') ]) = 0$, then $n \cdot \KSD^2(Q_n, P)$ converges weakly to an infinite sum of weighted chi-squares, so in particular $\sqrt{n} \cdot \KSD(Q_n, P) \to 0$ in probability; see, for example, \citet[Section 6.4.1]{serfling2009approximation}. Therefore, it is sufficient to show that $Q = P$ if and only if $\Var_{X \sim Q}(\E_{X' \sim Q}[u_p(X, X']) = 0$.

    Define $\xi_p(x, \cdot) \coloneqq s_p(x) k(x, \cdot) + \nabla_1 k(x, \cdot)$. It can be shown that the Stein reproducing kernel can be rewritten as $u_p(x, x') = \langle\xi_p(x, \cdot), \xi_p(x', \cdot) \rangle_{\cH_k^d}$ \citep[see, for example, the proof of][Proposition 2]{gorham2017measuring}. First assume $Q = P$. Under \Cref{assump:kernel_universal}, \ref{assump:score_condition} and \ref{assump:ksd_kernel_condition}, \citet[Proposition 1]{gorham2017measuring} shows that $\E_{X' \sim Q}[\xi_p(X', \cdot)] \equiv 0$. This implies that
    \begin{align*}
        \Var_{X \sim Q}(\E_{X' \sim Q}[u_p(X, X')])
        \;&=\;
        \Var_{X \sim Q}(\E_{X' \sim Q}[\langle\xi_p(x, \cdot), \xi_p(x', \cdot) \rangle_{\cH_k^d}])
        \\
        \;&=\;
        \Var_{X \sim Q}(\langle\xi_p(x, \cdot), \E_{X' \sim Q}[\xi_p(x', \cdot)] \rangle_{\cH_k^d})
        \\
        \;&=\;
        0 \;.
    \end{align*}
    We now assume $Q \neq P$ and suppose for a contradiction that $\Var_{X \sim Q}(\E_{X' \sim Q}[u_p(X, X')]) = 0$. Then we must have $\E_{X' \sim Q}[u_p(x, X')] = 0$ for $Q$-almost-sure $x$, and hence $\KSD^2(Q, P) = \E_{X,X' \sim Q}[u_p(X, X')] = 0$. On the other hand, under \Cref{assump:kernel_universal}, \ref{assump:score_condition} and \ref{assump:ksd_kernel_condition}, \Cref{lem:ksd_second_moment} shows that $\E_{X, X' \sim Q}[ |u_p(X, X')|^2 ] < \infty$, and hence \citet[Theorem 3]{barp2024targeted} shows that $\KSD(Q, P) = 0$ if and only if $Q = P$, for all $Q$ with $\E_{X \sim Q}[ \| s_p(X) \|_2 ] < \infty$. We have therefore shown that $Q = P$, a contradiction. Hence, we must have $\Var_{X \sim Q}(\E_{X' \sim Q}[u_p(X, X')]) > 0$.
\end{proof}

\begin{lemma}
\label{lem:ksd_second_moment}
    Suppose \Cref{assump:kernel_universal}, \ref{assump:score_condition} and \ref{assump:ksd_kernel_condition} hold. Then $\E_{X, X' \sim Q}[|u_p(X, X')|^2] < \infty$.
\end{lemma}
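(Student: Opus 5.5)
We bound each of the four terms of $u_p(X,X')$ separately. Minkowski's inequality in $L^2(Q\otimes Q)$ then gives $\E[|u_p(X,X')|^2]^{1/2}$ as at most the sum of the four individual $L^2$ norms, so it suffices to show each term is square-integrable with $X, X'$ independent draws from $Q$.

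For the first term, $s_p(x)^\top s_p(x') k(x,x')$, we use $k \in \cC_b^{(1,1)}$. In particular $|k(x,x')| \le \kappa_0 \coloneqq \sup|k| < \infty$. Cauchy--Schwarz and independence give
\begin{align*}
\E\big[|s_p(X)^\top s_p(X')|^2 k(X,X')^2\big] \;\le\; \kappa_0^2\, \E\big[\|s_p(X)\|_2^2\big]^2 \;,
\end{align*}
which is finite by \Cref{assump:score_condition}, since a finite fourth moment implies a finite second moment. The two mixed terms are handled the same way. Boundedness of the first-order partial derivatives in $\cC_b^{(1,1)}$ gives $\|\nabla_2 k(x,x')\|_2 \le \kappa_1$, so $\E[|s_p(X)^\top \nabla_2 k(X,X')|^2] \le \kappa_1^2 \E\|s_p(X)\|_2^2 < \infty$, and symmetrically for $s_p(x')^\top\nabla_1 k(x,x')$.

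The last term, $\nabla^\top\nabla k(x,x') = \nabla_1^\top\nabla_2 k(x,x')$, is the one I expect to be the main obstacle. \Cref{assump:ksd_kernel_condition} only controls it on the diagonal, $\E[|\nabla_1^\top\nabla_2 k(X,X)|^2]<\infty$. My first approach avoids any off-diagonal moment assumption. Since $k\in\cC^{(1,1)}$, the map $x \mapsto \partial_{x^j} k(x,\cdot)$ lies in $\cH_k$, and the derivative reproducing property gives
\begin{align*}
\partial_{x^j}\partial_{x'^j} k(x,x') \;=\; \langle \partial_j k(x,\cdot), \partial_j k(x',\cdot)\rangle_{\cH_k} \;.
\end{align*}
Applying Cauchy--Schwarz in $\cH_k$ for each $j$ and then Cauchy--Schwarz over $j$ yields
\begin{align*}
|\nabla_1^\top\nabla_2 k(x,x')| \;\le\; \big(\nabla_1^\top\nabla_2 k(x,x)\big)^{1/2}\big(\nabla_1^\top\nabla_2 k(x',x')\big)^{1/2} \;.
\end{align*}
Squaring this bound and using independence, the second moment is bounded by $\E[\nabla_1^\top\nabla_2 k(X,X)]^2$. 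This is finite by \Cref{assump:ksd_kernel_condition} together with Jensen's inequality. If the derivative-reproducing step needs justification, I would cite the standard result on derivatives of $\cC^{(1,1)}$ kernels (e.g.\ Steinwart--Christmann, Lemma 4.34). Alternatively, the term is simply bounded, because $\cC_b^{(1,1)}$ bounds the mixed second derivatives. Summing the four bounds gives the claim.
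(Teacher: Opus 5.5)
Your proof is correct, but it takes a different route from the paper's.

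\textbf{The paper's route.} The paper does not split $u_p$ into its four terms. It treats the Stein kernel itself as a reproducing kernel with feature map $\xi_p(x,\cdot) = s_p(x)k(x,\cdot) + \nabla_1 k(x,\cdot)$. A single Cauchy--Schwarz step gives $|u_p(x,x')| \le u_p(x,x)^{1/2}u_p(x',x')^{1/2}$. It then bounds the diagonal as $u_p(x,x) = \|\xi_p(x,\cdot)\|^2_{\cH_k^d} \le 2\|s_p(x)\|_2^2 + 2\nabla_1^\top\nabla_2 k(x,x)$ and integrates using \Cref{assump:score_condition} and \ref{assump:ksd_kernel_condition}. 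In effect, the RKHS Cauchy--Schwarz argument you reserve for the last term is applied once to the whole Stein feature map.

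\textbf{What your route buys.} Your Minkowski decomposition uses the sup-norm bounds on $k$ and $\nabla k$ from $\cC_b^{(1,1)}$, together with independence of $X,X'$, to factor the expectations. This is more elementary. It also shows that only $\E\|s_p(X)\|_2^2<\infty$ is needed, so the fourth moment in \Cref{assump:score_condition} is stronger than necessary. As you note, $\cC_b^{(1,1)}$ alone already makes the moment condition in \Cref{assump:ksd_kernel_condition} redundant.

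\textbf{What the paper's route buys.} It only ever looks at the diagonal $u_p(x,x)$. Done carefully, it needs no off-diagonal boundedness of the derivatives of $k$. It would therefore extend to unbounded kernels, provided $\|s_p(X)\|_2^2 k(X,X)$ and $\nabla_1^\top\nabla_2 k(X,X)$ are integrable.

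\textbf{Two slips in the written proof that your argument avoids.} First, the bound on $\|s_p(x)k(x,\cdot)\|^2_{\cH_k^d}$ drops the factor $k(x,x)$. Second, independence gives $\E|u_p(X,X')|^2 \le (\E[u_p(X,X)])^2$, not the stated $(\E|u_p(X,X)|^2)^2$. Neither slip affects the finiteness conclusion.
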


\begin{proof}[Proof of \Cref{lem:ksd_second_moment}]
    \citet[Theorem 1]{barp2024targeted} shows that the Stein reproducing kernel $u_p$ is indeed a reproducing kernel. We can hence apply the reproducing property and a Cauchy-Schwarz inequality as done in \citet[Eq.~30]{liu2025robustness} to yield $u_p(x, x') \leq u_p(x, x)^{1/2} u_p(x', x')^{1/2}$. Moreover, we can write $u_p(x, x') = \langle\xi_p(x, \cdot), \xi_p(x', \cdot) \rangle_{\cH_k^d}$, where $\xi_p(x, \cdot) \coloneqq s_p(x) k(x, \cdot) + \nabla_1 k(x, \cdot)$; see, for example, the proof of \citet[Proposition 2]{gorham2017measuring}. We can then further bound $u_p(x, x)$ as
    \begin{align*}
        u_p(x, x)
        \;=\;
        \| \xi_p(x, \cdot) \|_{\cH_k^d}^2
        \;\stackrel{(a)}{\leq}\;
        2\| s_p(x) \|_2^2 + 2\| \nabla_1 k(x, \cdot) \|_{\cH_k^d}^2
        \;=\;
        2\| s_p(x) \|_2^2 + 2\nabla_1^\top \nabla_2 k(x, x)
        \;,
    \end{align*}
    where $(a)$ holds since $(a + b)^2 \leq 2a^2 + 2b^2$ for all constants $a, b$, and in the last step we have used \citet[Lemma 4.34]{steinwart2008support}. Therefore,
    \begin{align*}
        \E_{X, X' \sim Q}[|u_p(X, X')|^2]
        \;&\leq\;
        \big(\E_{X \sim Q}[| u_p(X, X) |^2 ] \big)^2
        \\
        \;&\stackrel{(b)}{\leq}\;
        \big( \E_{X \sim Q}\big[ \big(2\| s_p(X) \|_2^2 + 2\nabla_1^\top \nabla_2 k(X, X) \big)^2 \big] \big)^2
        \\
        \;&\leq\;
        \big( \E_{X \sim Q}\big[ 8\| s_p(X) \|_2^4 + 8 \big(\nabla_1^\top \nabla_2 k(X, X) \big)^2 \big] \big)^2
        \\
        \;&=\;
        \big( 8\E_{X \sim Q}\big[ \| s_p(X) \|_2^4 + 8 \E_{X \sim Q}\big[ \big(\nabla_1^\top \nabla_2 k(X, X) \big)^2 \big] \big)^2
        \;,
    \end{align*}
    where in $(b)$ we have again used the inequality $(a+b)^2 \leq 2a^2 + 2b^2$. Finally, the RHS of the last line is finite under \Cref{assump:kernel_universal}, \ref{assump:score_condition} and \ref{assump:ksd_kernel_condition}, thus finishing the proof.
\end{proof}

\subsubsection{Proof of \Cref{prop:general_normal_test}}
\label{pf:prop:general_normal_test}
First assume $Q \neq P$. Under \Cref{assump:general_asymptotics}, using the weak consistency $\hat{\sigma}_l^2 \to \sigma^2_l$ and Slutsky Lemma gives
\begin{align*}
    \frac{\sqrt{l}}{\hat{\sigma}_l}\big(\hat{D}^2 - D^2 \big)
    \;\darrow\;
    \cN(0, 1)
    \;.
\end{align*}
The probability of rejecting $H_0$ is
\begin{align*}
    \Pr(\hat{S}^{\theta}_l > z_\alpha)
    \;&=\;
    \Pr\bigg(\frac{\sqrt{l}}{\hat{\sigma}_l}(\hat{D}^2 - D^2) > z_\alpha + \frac{\sqrt{l}}{\hat{\sigma}_l}(\theta^2 - D^2) \bigg)
    % \\
    \;\to\;
    \begin{cases}
        0, \qquad &D > \theta \;,\\
        \alpha, \qquad &D = 0 \;,\\
        1, \qquad &D < \theta \;.
    \end{cases}
    \;,
\end{align*}
where the last line holds because $\hat{\sigma}_l \to \sigma > 0$, and thus
\begin{align*}
    \frac{\sqrt{l}}{\hat{\sigma}_l}(\theta^2 - D^2)
    \;\to\;
    \begin{cases}
        \infty, \qquad &D > \theta \;,\\
        0, \qquad &D = 0 \;,\\
        -\infty, \qquad &D < \theta \;.
    \end{cases}
\end{align*}
It then remains to show that $ \Pr(\hat{S}^{\theta}_l > z_\alpha) \to 0$ when $Q = P$. In this case,
\begin{align*}
    \Pr(\hat{S}^{\theta}_l > z_\alpha)
    \;=\;
    \Pr\bigg(\frac{\sqrt{l}}{\hat{\sigma}_l}\hat{D}_l > z_\alpha + \frac{\sqrt{l}}{\hat{\sigma}_l}\theta^2 \bigg)
    \;&=\;
    \Pr\big(\sqrt{l} \hat{D}^2_l > z_\alpha \hat{\sigma}_l + \theta^2\sqrt{l} \big)
    \;\to\;
    0
    \;,
\end{align*}
where the convergence holds because $\sqrt{l} \hat{D}_l^2 \to 0$ and $\hat{\sigma}^2 \to 0$ by \Cref{assump:general_asymptotics}, whereas $\theta^2 \sqrt{l}$ diverges to $+\infty$ since $\theta > 0$.

\subsubsection{Proof of \Cref{thm:ksd_normal_test}}
\label{pf:thm:ksd_normal_test}
\begin{proof}[Proof of \Cref{thm:ksd_normal_test}]
    It suffices to verify that the conditions in \Cref{prop:general_normal_test} for KSD. We consider the sequences $\{Q_l\}_{l=1}^\infty$, where each $Q_l$ is an empirical measure based on independent samples $\{X_i\}_{i=1}^l$ from $Q$, and $P_l = P$, and choose $k = u_p$ to be the reproducing Stein kernel so that $\hat{D}_l = \MMD(Q_l, P; u_p) = \KSD(Q_l, P)$ and $D = \MMD(Q, P; u_p) = \KSD(Q, P)$. In particular, $\KSD(Q, P)$ is well-defined under the assumed moment conditions. Moreover, choose $\hat{\sigma}_l = \hat{\sigma}_\KSD$ as defined in \eqref{eq:ksd_jackknife}.

    Under \Cref{assump:kernel_universal}, \ref{assump:score_condition} and \ref{assump:ksd_kernel_condition}, \Cref{lem:ksd_second_moment} shows that $\E_{X, X' \sim Q}[ |u_p(X, X')|^2 ] < \infty$. Therefore, the asymptotics condition \Cref{assump:general_asymptotics}, which is assumed in \Cref{prop:ksd_asymptotics}, holds for $\hat{D}_l = \KSD(Q_l, P)$, where in particular $\sigma_\KSD > 0$ when $Q \neq P$ and $\sigma_\KSD = 0$ when $Q = P$. It then remains to show that $\hat{\sigma}_\KSD \to \sigma_\KSD$ in probability. Again since $\E_{X, X' \sim Q}[ |u_p(X, X')|^2 ] < \infty$, we can invoke \citet[Theorem 6]{arvesen1969jackknifing} to conclude that the Jackknife estimator of the variance converges weakly, namely $\hat{\sigma}_\KSD^2 \to \sigma_\KSD^2$ in probability.
\end{proof}

\subsubsection{Proof of \Cref{lem:one_sample_bootstrap_validity}}
\label{pf:lem:one_sample_bootstrap_validity}
We will show the result for a U-statistic counterpart and use the fact that U- and V-statistics have similar asymptotic behaviour to conclude the proof. In particular, we will show that $n D^2_{W_n}(\X_n; k) $ and $n \MMD^2(Q_n, Q; k)$ converges weakly to the same limit, which is sufficient to prove the desired result by applying the Mann–Wald Theorem \citep[also known as Continuous Mapping Theorem; cf.][Theorem 2.3]{vandervaart2000asymptotic} and noting that the function $x \mapsto \sqrt{x}$ is everywhere continuous on $[0, \infty)$.

For each $n$, let $X_{n1}^\ast, X_{n2}^\ast, \ldots, X_{nn}^\ast$ denote independent draws from $Q_n$ conditional on $\{X_i\}_{i=1}^\infty$. Given a probability measure $R$, define the symmetric function
\begin{align*}
    k(x, x'; R) \;=\; k(x, x') + \E_{Z, Z' \sim R}[k(Z, Z')] - \E_{Z' \sim R}[k(x, Z')] - \E_{Z \sim R}[k(Z, x')]
    \;.
\end{align*}
We begin by writing the bootstrap sample as
\begin{align*}
    \MMD^2_{W_n}
    \;&=\;
    \frac{1}{n^2} \sum_{1 \leq i, j \leq n} (W_{ni} - 1) (W_{nj} - 1) k(x_i, x_j)
    \\
    \;&=\;
    \frac{1}{n^2} \sum_{1 \leq i, j \leq n} W_{ni} W_{nj} k(x_i, x_j) - W_{nj} k(x_i, x_j) - W_{ni} k(x_i, x_j) + k(x_i, x_j)
    \\
    \;&\stackrel{d}{=}\;
    \frac{1}{n^2} \sum_{1 \leq i, j \leq n} k(X_{ni}^\ast, X_{nj}^\ast) - k(x_i, X_{nj}^\ast) - k(X_{ni}^\ast, x_j) + k(x_i, x_j)
    \\
    \;&=\;
    \frac{1}{n^2} \sum_{1 \leq i, j \leq n} k(X_{ni}^\ast, X_{nj}^\ast; Q_n)
    \\
    \;&=\;
    \frac{n-1}{n} U_n^\ast + R_n
    \;,
\end{align*}
where the third line should be understood as quality in distribution, and we have defined
\begin{align*}
    U_n^\ast \;=\; \frac{1}{n(n-1)} \sum_{1 \leq i \neq j \leq n} k(X_{ni}^\ast, X_{nj}^\ast; Q_n)
    \;,\qquad
    R_n \;\coloneqq\; \frac{1}{n^2} \sum_{ i = 1}^n k(X_{ni}, X_{ni}; Q_n)
    \;.
\end{align*}
Since $k$ is bounded by assumption, with Jensen's inequality, it is straightforward to see that $R_n \to 0$ in probability, and so $D^2_{W_n}$ and $U_n^\ast$ have the same asymptotic distribution (if it exists) by Slutsky's theorem. It then suffices to show the claimed result holds for $U_n^\ast$ in place of $D^2_{W_n}$. To proceed, we define the following U-statistic
\begin{align*}
    U_n \;\coloneqq\; \frac{1}{n(n-1)} \sum_{1 \leq i \neq j \leq n} k(x_i, x_j)
    \;,
\end{align*}
and define $\mu_Q \coloneqq \E_{X, Y \sim Q}[k(X, Y)]$ and $g_k(\cdot; Q) \coloneqq \E_{Y \sim Q}[ k(\cdot, Y) ] - \mu_Q$. Direct computation gives
\begin{align*}
    U_n
    \;\coloneqq\;
    \mu_Q + \frac{2}{n} \sum_{i=1}^n g_k(x_i, Q) + \frac{1}{n(n-1)} \sum_{1 \leq i \neq j \leq n} k(x_i, x_j; Q)
    \;\eqqcolon\;
    \mu_Q + T_{1n} + T_{2n}
    \;,
\end{align*}
which is known as the \emph{Hoeffding's decomposition} of (not necessarily degenerate) one-sample U-statistics; see, e.g., \citet[Eq.~2.6]{arcones1992bootstrap}. Notably, the last term $T_{2n}$ in the Hoeffding's decomposition is the U-statistic counterpart of $\MMD^2(Q_n, Q)$, and is degenerate, so in particular the asymptotic distribution of $n T_{2n}$ exists \citep[Theorem 5.5.2]{serfling2009approximation}. Under the assumed condition $\E_{X, X' \sim Q}[| k(X, X')|^2] < \infty$, results on the bootstrap statistic for each term in the Hoeffding's decomposition \citep[Lemma 2.3, applied with $r = m = 2$]{arcones1992bootstrap} show that a correct bootstrap statistic for $T_{2n}$ is $U_n^\ast$, i.e.,
\begin{align*}
    \sup_{t \in \R} \big|
        \Pr( n U_n^\ast \leq t \;|\; \X_n )
        - \Pr( n T_{2n} \leq t )
    \big| \;\to\; 0
    \;.
\end{align*}
% Finally, under the moment condition $\E_{X, X' \sim Q}[| k(X, X')|^2] < \infty$ and noting that $T_{2n}$ is the U-statistic counterpart of the V-statistic $D_{W_n}^2$, \citet[Theorem 5.1]{grams1973convergence} shows that
% \begin{align*}
%     \E_{X, X' \sim Q}[ | T_{2n} - D_{W_n} ]
% \end{align*}
Moreover, $T_{2n}$ is the U-statistic counterpart of $\MMD^2(Q_n, Q)$, which is obvious by noting that
\begin{align*}
    \MMD^2(Q_n, Q)
    \;&=\;
    \frac{1}{n^2}\sum_{i=1}^n\sum_{j=1}^n k(x_i, x_j) + \E_{X, X' \sim Q}[k(X, X')] - \frac{1}{n}\sum_{i=1}^n \E_{X' \sim Q}[k(x_i, X')] - \frac{1}{n}\sum_{j=1}^n \E_{X \sim Q}[k(X, x_j)]
    \\
    \;&=\;
    \frac{1}{n^2}\sum_{i=1}^n\sum_{j=1}^n k(x_i, x_j; Q)
    ;.
\end{align*}
Under the moment condition $\E_{X, X' \sim Q}[| k(X, X')|^2] < \infty$, \citet[Theorem 5.1]{grams1973convergence} shows that
\begin{align*}
    \E_{X, X' \sim Q}[ | T_{2n} - \MMD^2(Q_n, Q) |^2 ]
    \;=\;
    \cO(n^{-2})
    \;.
\end{align*}
This implies that $T_{2n}$ and $\MMD^2(Q_n, Q)$ converges in probability, thus also in distribution, to the same limit, i.e.,
\begin{align*}
    \sup_{t \in \R} \big|
        \Pr( n \MMD^2(Q_n, Q) \leq t )
        - \Pr( n T_{2n} \leq t )
    \big| \;\to\; 0
    \;.
\end{align*}
Combining these observations and using a triangle inequality, we have shown that
\begin{align*}
    \sup_{t \in \R} \big|
        \Pr( n D_{W_n}^2(\X_n; k) \leq t | \X_\infty)
        - \Pr( n \MMD^2(Q_n, Q) \leq t )
    \big| \;\to\; 0
    \;,
\end{align*}
which completes the proof by applying the Mann-Wald Theorem.

\subsubsection{Proof of \Cref{lem:D_Qn_Q_asymptotics}}
\label{pf:lem:D_Qn_Q_asymptotics}
First, the assumption $\E_{X, X' \sim Q}[|k(X, X')|^2] < \infty$ implies
\begin{align}
    \Var_{X, X' \sim Q}(k(X, X'))
    \;\leq\;
    \E_{X, X' \sim Q}[|k(X, X')|^2]
    \;<\;
    \infty
    \;.
    \label{eq:mmd_one_sample_moment}
\end{align}
We first write
\begin{align*}
    \MMD^2(Q_n, Q)
    \;&=\;
    \frac{1}{n^2} \sum_{1 \leq i, j \leq n} \E_{Y, Y' \sim Q}\big[ h\big( (x_i, Y), (x_j, Y') \big) \big]
    \\
    \;&=\;
    \frac{1}{n^2} \sum_{1 \leq i, j \leq n} k(x_i, x_j) - \E_{Y \sim Q}[ k(x_i, Y) ] - \E_{Y' \sim Q}[ k(Y', x_j) ] + \E_{Y, Y' \sim Q}[ k(Y, Y') ]
    \\
    \;&=\;
    \frac{1}{n^2} \sum_{1 \leq i, j \leq n} k(x_i, x_j; Q)
    \;,
\end{align*}
where the second line follows by substituting the definition of the MMD kernel $h\big( (x, y), (x', y') \big) \coloneqq k(x, x') + k(y, y') - k(x, y') - k(x', y)$, and in the last line we have defined
\begin{align*}
    k(x, x'; Q)
    \;\coloneqq\;
    k(x, x') - \E_{Y \sim Q}[ k(x, Y) ] - \E_{X \sim Q}[ k(X, x') ] + \E_{Y, Y' \sim Q}[ k(Y, Y') ]
    \;.
\end{align*}
It is easy to verify that the function $(x, x') \mapsto k(x, x'; Q)$ is symmetric and \emph{$Q$-degenerate}, meaning that $\E_{Y \sim Q}[k(x, Y; Q)] = 0$ for all $Q$-almost sure $x$ \citep[see, e.g.,][Chapter 5]{serfling2009approximation}. Therefore, $\MMD^2(Q_n, Q)$ is a degenerate \emph{one-sample} V-statistic (not to be confused with degenerate distributions). With this observation and the moment condition \eqref{eq:mmd_one_sample_moment}, we can apply the asymptotics of degenerate one-sample V-statistics \citep[Theorem 6.4.1 B]{serfling2009approximation} to show that, as $n \to \infty$, the random variable $n\MMD^2(Q_n, Q)$ converges weakly to an infinite weighted sum of Chi-square distributions, which is non-degenerate. Moreover, since the square-root function $x \mapsto \sqrt{x}$ is continuous on $[0, \infty)$ and continuous functions preserve weak limits by the Continuous Mapping Theorem \citep[Theorem 2.3]{vandervaart2000asymptotic}, the statistic $\sqrt{n}\MMD(Q_n, Q)$ also converges weakly to a non-degenerate distribution.

\subsubsection{Proof of \Cref{thm:ksd_bootstrap_test}}
\label{pf:thm:ksd_bootstrap_test}
We will show that the claimed result holds with $\gamma_{1-\alpha}^\infty$ replaced by the $(1-\alpha)$-th quantile, $\gamma_{1-\alpha}$, of the (unconditional) distribution of $\MMD(Q_n, Q; u_p)$. This will be sufficient to show the claimed result, since the following bootstrap error vanishes by \Cref{lem:one_sample_bootstrap_validity}
\begin{align*}
    \big| \Pr\big(T_n^{\KSD, \theta} > \gamma_{1-\alpha}^\infty \;|\; \X_n \big) - \Pr\big(T_n^{\KSD, \theta} > \gamma_{1-\alpha} \big) \big|
    \;\to\; 0
    \;.
\end{align*}

We first prove the case \emph{(i)}, namely when $\KSD(Q, P) > \theta$. We first show the inequality \eqref{eq:KSD_traingle_ineq}, restated below
\begin{align*}
    \KSD(Q, P)
    \;=\;
    \MMD(Q, P; u_p)
    \;&\leq\;
    \MMD(Q, Q_n; u_p) + \MMD(Q_n, P; u_p)
    \\
    \;&=\;
    \MMD(Q, Q_n; u_p) + \KSD(Q_n, P)
    \tagaligneq
    \label{eq:KSD_traingle_ineq_pf}
    \;.
\end{align*}
The first step holds by rewriting KSD as an MMD \citep[Theorem 1]{barp2024targeted}, and the inequality follows from the triangle inequality of MMD \citep[see, e.g.,][Lemma 4]{gretton2012kernel}. 
Using this inequality, we can bound the probability of rejecting $H_0: \KSD(Q, P) \geq \theta$ as
\begin{align}
    \Pr\big(T_n^{\KSD, \theta} > \gamma_{1-\alpha} \big)
    \;&=\;
    \Pr\big(\theta - \KSD(Q_n, P) > \gamma_{1-\alpha} \big)
    \notag
    \\
    \;&=\;
    \Pr\big(\KSD(Q, P) - \KSD(Q_n, P) > \gamma_{1-\alpha} + \KSD(Q, P) - \theta \big)
    \notag
    \\
    \;&\leq\;
    \Pr\big(\MMD(Q_n, Q; u_p) > \gamma_{1-\alpha} + \KSD(Q, P) - \theta \big)
    \notag
    \\
    \;&=\;
    \Pr\big(\sqrt{n} \MMD(Q_n, Q; u_p) > \sqrt{n}\gamma_{1-\alpha} + \sqrt{n}(\KSD(Q, P) - \theta) \big)
    \;,
    \label{eq:ksd_test_rej_rewrite}
\end{align}
By \Cref{lem:D_Qn_Q_asymptotics}, the term $\sqrt{n} \MMD(Q_n, Q; u_p)$ converges weakly to a non-degenerate distribution. In particular, its scaled quantile $\sqrt{n} \gamma_{1-\alpha}$ also converges to some real number \citep[Lemma 21.2]{vaart1998asymptotic}. On the other hand, the term $\sqrt{n}(\KSD(Q, P) - \theta)$ diverges to $\infty$ when $\KSD(Q, P) > \theta$. Combining these arguments, we conclude that, when $\KSD(Q, P) > \theta$,
\begin{align*}
    \lim_{n \to \infty}
    \Pr\big(\sqrt{n} \MMD(Q_n, Q; u_p) > \sqrt{n}\gamma_{1-\alpha} + \sqrt{n}(\KSD(Q, P) - \theta) \big)
    \;=\;
    0
    \;.
\end{align*}
We now consider the case \emph{(ii)}, namely $\KSD(Q, P) = \theta$. In particular, this implies that $Q \neq P$, and so the following limit holds by \Cref{prop:ksd_asymptotics}
\begin{align*}
    \sqrt{n} (\KSD(Q_n, P) - \KSD(Q, P)) \;=\; \sqrt{n} (\KSD(Q_n, P) - \theta) \;\to\; \cN(0, \sigma^2_\KSD) \;.
\end{align*}
Moreover, as argued before, $\sqrt{n} \gamma_{1-\alpha} \to c$ for some constant $c$. We then have
\begin{align*}
    \Pr\big(\theta - \KSD(Q_n, P) > \gamma_{1-\alpha} \big)
    \;=\;
    \Pr\big(\sqrt{n}(\KSD(Q_n, P) - \theta) < -\sqrt{n}\gamma_{1-\alpha} \big)
    \;\to\;
    \Phi\Big(-\frac{c}{\sigma_\KSD}\Big) 
    \;\eqqcolon\;
    a
    \;,
\end{align*}
where $\Phi$ denotes the CDF of the standard normal distribution. In particular, we must have $a > 0$, as $-c / \sigma_\KSD$ is a real number. 
% Since $\sqrt{n} \MMD(Q_n, Q; u_p)$ converges weakly and $\sqrt{n} \gamma_{1-\alpha}$ also converges to some real number, there must exists a constant $a \in [0, 1]$ such that
% \begin{align*}
%     \Pr\big(\theta - \KSD(Q_n, P) > \gamma_{1-\alpha} \big)
%     \;&=\;
%     \Pr\big(\sqrt{n} (\KSD(Q_n, P) - \KSD(Q, P)) < \sqrt{n} \gamma_{1-\alpha} + \sqrt{n}(\theta - \MMD(Q, P)) \big)
%     \\
%     \;&=\;
%     \Pr\big(\sqrt{n} (\KSD(Q_n, P) - \KSD(Q, P)) > \sqrt{n}\gamma_{1-\alpha} \big)
%     \\
%     \;&\to\;
%     a \;.
% \end{align*}
We claim that $a \leq \alpha$. Applying the inequality \eqref{eq:KSD_traingle_ineq_pf} again, we have the following bound on the probability of rejecting $H_0$
\begin{align*}
    \Pr\big(\theta - \KSD(Q_n, P) > \gamma_{1-\alpha} \big)
    \;&\leq\;
    \Pr\big(\theta + \MMD(Q_n, Q; u_p) - \KSD(Q, P) > \gamma_{1-\alpha} \big)
    \\
    \;&=\;
    \Pr\big(\MMD(Q_n, Q; u_p) > \gamma_{1-\alpha} + \KSD(Q, P) - \theta \big)
    \\
    \;&=\;
    \Pr\big(\MMD(Q_n, Q; u_p) > \gamma_{1-\alpha} \big)
    \\
    \;&=\;
    \alpha
    \;.
    % \label{eq:rej_prob_bound}
\end{align*}
Taking the limit $n \to \infty$ on both sides then implies $a \leq \alpha$, and so $a \in (0, \alpha]$. This shows \emph{(ii)}. 

To prove the case \emph{(iii)}, we write
\begin{align*}
    &\;
    \Pr\big(\theta - \KSD(Q_n, P) > \gamma_{1-\alpha} \big)
    \\
    \;&=\;
    \Pr\big(\KSD(Q_n, P) < \theta - \gamma_{1-\alpha} \big)
    \\
    \;&=\;
    \Pr\big(\KSD^2(Q_n, P) < (\theta - \gamma_{1-\alpha})^2 \big)
    \tagaligneq
    \label{eq:ksd_power_step1}
    \\
    \;&=\;
    \Pr\big(\sqrt{n}\big(\KSD^2(Q_n, P) - \KSD^2(Q, P) \big) < \sqrt{n}(\theta - \gamma_{1-\alpha})^2 - \sqrt{n}\KSD^2(Q, P) \big)
    \tagaligneq
    \label{eq:ksd_power}
    \;.
\end{align*}
Assuming first that $0 < \KSD(Q, P) < \theta$, then $\E_{X \sim Q}[u_p(X, \cdot)] \not\equiv 0$, and thus $\KSD^2(Q_n, P)$ is a non-degenerate V-statistic; see, e.g., the proof of \citet[Theorem 4.1]{liu2016kernelized} for a proof for U-statistics; extensions to V-statistics is straightforward. Asymptotics of non-degenerate V-statistics \citep[Theorem 6.4.1 B]{serfling2009approximation} then gives $\sqrt{n}\big(\KSD^2(Q_n, P) - \KSD^2(Q, P) \big) \to \cN(0, \sigma_\KSD^2)$ in distribution, where $\sigma_\KSD^2 > 0$. On the other hand, the RHS of the inequality within the probability in \eqref{eq:ksd_power} can be simplified as
\begin{align*}
    \sqrt{n}(\theta - \gamma_{1-\alpha})^2 - \sqrt{n}\KSD^2(Q, P)
    \;=\;
    \sqrt{n}(\theta^2 - \KSD^2(Q, P)) + \sqrt{n} \gamma_{1-\alpha}^2 - 2 \sqrt{n}\theta \gamma_{1-\alpha}
    \;.
\end{align*}
By \Cref{lem:D_Qn_Q_asymptotics}, the scaled quantile $\sqrt{n} \theta \gamma_{1-\alpha}$ converges to a positive constant, while $\sqrt{n} \gamma_{1 - \alpha}^2 \to 0$. Since also $\KSD(Q, P) < \theta$, the RHS of the above equality diverges to $\infty$. We have therefore showed that, inside the probability of \eqref{eq:ksd_power}, the LHS converges weakly to a Gaussian limit, while the RHS diverges to $\infty$. This implies that $\Pr\big(\theta - \KSD(Q_n, P) > \gamma_{1-\alpha} \big) \to 1$.

It then remains to prove \emph{(iii)} when $\KSD(Q, P) = 0$. In this case, $n \KSD^2(Q_n, P)$ converges weakly to a non-degenerate distribution by \Cref{prop:ksd_asymptotics}. In particular, $\KSD^2(Q_n, P) \to 0$ in probability. On the other hand, $\gamma_{1-\alpha} \to 0$ in probability by \Cref{lem:D_Qn_Q_asymptotics}, and so $(\theta - \gamma_{1-\alpha})^2 \to \theta > 0$. In other words, inside the probability in \eqref{eq:ksd_power_step1}, the LHS converges to $0$ in probability, while the RHS converges to a non-zero constant. This implies that \eqref{eq:ksd_power_step1} converges to $1$, thus showing the case \emph{(iii)} when $\KSD(Q, P) = 0$.

\subsection{Proofs for MMD Equivalence Tests}
\label{app:proofs:mmd}
This subsection holds proofs for the results on the E-MMD-Normal and E-MMD-Boot tests.

\subsubsection{Proof of \Cref{prop:mmd_asymptotics}}
\label{pf:prop:mmd_asymptotics}

This is a well-known result and a consequence of classic asymptotic theory for two-sample U-statistics; see, e.g., \citet{huang2023weighted}. For completeness, we provide a proof for the first limit in \Cref{prop:mmd_asymptotics} when $Q \neq P$. Convergence in the other case, namely when $Q = P$, follows directly from \citet[Theorem 12]{gretton2012kernel}, which showed that, in this case, $N \cdot \MMD(Q_n, P_m)$ converges to a infinite sum of weighted chi-squares.
 
When $Q \neq P$, we suppose for a contradiction that $\sigma_1 = \Var_{X \sim Q}(\E_{X' \sim Q, Y, Y' \sim P}[h(X, X', Y, Y')]) = 0$. Then we must have $\E_{X' \sim Q, Y, Y' \sim P}[h(\cdot, X', Y, Y')] = 0$ almost surely. Taking expectation over $Q$ gives $0 = \E_{X, X' \sim Q, Y, Y' \sim P}[h(X, X', Y, Y')] = \MMD(Q, P)$, and hence $Q = P$, a contradiction. We thus conclude that $\sigma_1 > 0$. \citet[Proposition 1]{huang2023weighted} then imply that the first asymptotic limit in \Cref{prop:mmd_asymptotics} holds.

\subsubsection{Proof of \Cref{thm:mmd_normal_test}}
\label{pf:thm:mmd_normal_test}
We again verify the assumptions in \Cref{prop:general_normal_test} for MMD, which is sufficient to conclude the claimed results. For notational convenience, we define the sample sizes $n = n_l$ and $m = m_l$ to be functions of $l$. Let $\{X_i\}_{i=1}^\infty$ and $\{Y_j\}_{j=1}^\infty$ be independent draws from $Q$ and $P$, respectively. We consider the sequences $\{Q_l\}_{l=1}^\infty$ and $\{P_l\}_{l=1}^\infty$, where $Q_l, P_l$ are empirical measures based on $\{X_i\}_{i=1}^{n_l}$ and $\{Y_j\}_{j=1}^{m_l}$, respectively. Moreover, choose $\hat{\sigma}_l = \hat{\sigma}_\MMD$ as defined below \eqref{eq:mmd_normal_test}.

Under \Cref{assump:mmd_sample_sizes} and \Cref{assump:mmd_moment}, \Cref{prop:mmd_asymptotics} shows that the assumed asymptotics hold for $\hat{D}_l = \MMD(Q_l, P_l; k)$, where in particular $\sigma_\MMD > 0$ when $Q \neq P$ and $\sigma_\MMD = 0$ when $Q = P$. It then remains to show that $\hat{\sigma}_\MMD \to \sigma_\MMD$ in probability. Under the assumed condition $\E_{X, X' \sim Q}[ |k(X, X')|^2 ] < \infty$, we can invoke \citet[Theorem 6]{arvesen1969jackknifing} to conclude that the Jackknife estimator of the variance converges weakly, namely $\hat{\sigma}_\MMD^2 \to \sigma_\MMD^2$ in probability.

\subsubsection{Proof of \Cref{cor: validity of bootstrap}}
\label{pf:cor: validity of bootstrap}
As argued in the proof for \Cref{lem:one_sample_bootstrap_validity}, the conditional distribution of $U_n^\ast \coloneqq \sqrt{N} D_{W_n}(\X_n) = \sqrt{n} D_{W_n}(\X_n)$ given $\X_\infty$ and the unconditional distribution of $U_n \coloneqq \sqrt{n} \MMD(Q_n, Q)$ converge weakly to the same distribution, say $R$. That is, given a random variable $U \sim R$, we have $U_n^\ast \to U$ and $U_n \to U$ weakly. Similarly, denote by $R'$ the limiting distribution of $V_n^\ast \coloneqq \sqrt{n} D_{W_n}(\Y_n)$ given $\Y_\infty$ and of $V_n \coloneqq \sqrt{n} \MMD(P_n, P)$, and let $V \sim R'$. We then have $V_n^\ast \to V$ and $V_n \to V$. It suffices to show that $U_n^\ast + V_n^\ast \to U + V$ and $U_n + V_n \to U + V$ weakly and apply a triangle inequality to conclude the claimed result.

For a (real-valued) random variable $Z$, denote by $\phi_Z(t) \coloneqq \E[ \exp(it Z)]$ its characteristic function, where $i$ denotes the imaginary unit. In particular, $\phi_U(t) = \E[\exp(it U)]$ and $\phi_{U_n^\ast}(t) = \E[ \exp(it U_n^\ast) \;|\; \X_\infty ]$. The weak convergence of $U_n^\ast$ and $V_n^\ast$ then implies $\phi_{U_n^\ast}(t) \to \phi_U(t)$ and $\phi_{V_n^\ast}(t) \to \phi_V(t)$ pointwise in $t$. Hence, the product $\phi_{U_n^\ast}(t) \phi_{V_n^\ast}(t) \to \phi_U(t) \phi_V(t)$ also converges pointwise by the independence of $U$ and $V$. Since by assumption $U_n^\ast$ and $V_n^\ast$ are independent given $\X_n$ and $\Y_n$, this implies $\phi_{U_n^\ast}(t) \phi_{V_n^\ast}(t) = \phi_{U_n^\ast + V_n^\ast}(t)$ and $\phi_U(t) \phi_V(t) = \phi_{U+V}(t)$, and therefore $U_n^\ast + V_n^\ast \to U + V$ weakly by Lévy's Continuity Theorem \citep[Theorem 2.13]{vandervaart2000asymptotic}. A similar argument shows that $U_n + V_n$ also converges weakly to the same limit, and hence the claimed result follows.

\subsubsection{Proof of \Cref{thm:mmd_bootstrap}}
\label{pf:thm:mmd_bootstrap}
Using a similar argument as in the proof in \Cref{pf:thm:ksd_bootstrap_test}, it suffices to show the claimed result with $\eta_{1-\alpha}^\infty$ replaced by the $(1-\alpha)$-th quantile, $\eta_{1-\alpha}$, of $S_{n,m}$, due to the bootstrap validity shown in \Cref{cor: validity of bootstrap}.  

We first prove the case \emph{(i)}, namely when $\MMD(Q, P) > \theta$. Using the symmetry and triangle inequality of MMD \citep[see, e.g.,][]{muller1997integral,gretton2012kernel} twice, we have
\begin{align}
    \MMD(Q, P)
    \;\leq\;
    \MMD(Q, Q_n) + \MMD(Q_n, P_m) + \MMD(P_m, P)
    \;=\;
    S_{n,m} + \MMD(Q_n, P_m)
    \label{eq:MMD_triangle_ineq}
    \;.
\end{align}
Using this inequality, we can bound the probability of rejecting $H_0$ as
\begin{align*}
    \Pr\big(T_{n,m}^{\MMD, \theta} > \eta_{1-\alpha}\big)
    \;&=\;
    \Pr\big(\theta - \MMD(Q_n, P_m) > \eta_{1-\alpha}\big)
    \\
    \;&=\;
    \Pr\big(\MMD(Q, P) - \MMD(Q_n, P_m) > \eta_{1-\alpha} + \MMD(Q, P) - \theta\big)
    \\
    \;&\leq\;
    \Pr\big(S_{n,m} > \eta_{1-\alpha} + \MMD(Q, P) - \theta \big)
    \\
    \;&=\;
    \Pr\big(\sqrt{N} S_{n,m} > \sqrt{N}\eta_{1-\alpha} + \sqrt{N}(\MMD(Q, P) - \theta) \big)
    \;.
    \tagaligneq
    \label{eq:mmd_test_rej_rewrite}
\end{align*}
We claim that $\sqrt{N} S_{n,m}$ converges weakly to a non-degenerate distribution. Indeed, by \Cref{assump:mmd_sample_sizes} and the definition of $S_{n,m}$, we can write $\sqrt{N}S_{n,m} = \nu_n^{-1/2}\sqrt{n}\MMD(Q_n, Q) + (1 - \nu_n)^{-1/2}\sqrt{m}\MMD(P_m, P)$, where $\nu_n \coloneqq n / N \to \nu$ by \Cref{assump:mmd_sample_sizes}, and both $\sqrt{n}\MMD(Q_n, Q)$ and $\sqrt{m}\MMD(P_m, P)$ converges weakly to some non-degenerate distributions by \Cref{lem:D_Qn_Q_asymptotics}. By the independence of $\MMD(P_n, P)$ and $\MMD(Q_m, Q)$, the statistic $\sqrt{N} S_{n,m}$ converges weakly to a non-degenerate distribution. In particular, its scaled quantile $\sqrt{N} \eta_{1-\alpha}$ also converges to some real number \citep[Lemma 21.2]{vaart1998asymptotic}. On the other hand, the term $\sqrt{N}(\MMD(Q, P) - \theta)$ diverges to $\infty$ when $\MMD(Q, P) > \theta$. Combining these arguments, we conclude that, when $\MMD(Q, P) > \theta$,
\begin{align*}
    \lim_{N \to \infty}
    \Pr\big(\sqrt{N} S_{n,m} > \sqrt{N}\eta_{1-\alpha} + \sqrt{N}(\MMD(Q, P) - \theta) \big)
    \;=\;
    0
    \;.
\end{align*}
We now consider the case \emph{(ii)}, namely $\MMD(Q, P) = \theta$. In particular, $Q \neq P$, and \Cref{prop:mmd_asymptotics} implies that the following weak limit holds
\begin{align*}
    \sqrt{N} \big( \MMD(Q_n, P_m) - \MMD(Q, P) \big) \;=\; \sqrt{N} \big( \MMD(Q_n, P_m) - \theta \big) \;\to\; \cN(0, \sigma^2_\MMD) \;.
\end{align*}
Moreover, since $\sqrt{N} \eta_{1-\alpha} \to c$ for some constant $c$ as argued before, we have
\begin{align*}
    \Pr\big(\theta - \MMD(Q_n, P_m) > \eta_{1-\alpha} \big)
    \;=\;
    \Pr\big(\sqrt{N}(\MMD(Q_n, P_m) - \theta) < -\sqrt{N}\eta_{1-\alpha} \big)
    \;\to\;
    \Phi\Big(-\frac{c}{\sigma_\MMD}\Big) 
    \;\eqqcolon\; 
    a \;,
\end{align*}
where $\Phi$ denotes the CDF of the standard normal distribution. In particular, we must have $a > 0$.
% Since $\sqrt{N} S_{n,m}$ converges weakly to a non-degenerate distribution, and since $\sqrt{N} \eta_{1-\alpha}$ also converges to some real number, there must exists a constant $a \in [0, 1]$ such that
% \begin{align*}
%     \Pr\big(\theta - \MMD(Q_n, P_m) > \eta_{1-\alpha} \big)
%     \;&=\;
%     \Pr\big(\sqrt{N} S_{n,m} > \sqrt{N}\eta_{1-\alpha} + \sqrt{N}(\MMD(Q, P) - \theta) \big)
%     \\
%     \;&=\;
%     \Pr\big(\sqrt{N} S_{n,m} > \sqrt{N}\eta_{1-\alpha} \big)
%     \\
%     \;&\to\;
%     a \;,
% \end{align*}
% where the first equality holds due to \eqref{eq:mmd_test_rej_rewrite}.
We now claim that $a \leq \alpha$. Applying the inequality \eqref{eq:MMD_triangle_ineq} again, we have the following bound on the probability of rejecting $H_0$
\begin{align*}
    \Pr\big(\theta - \MMD(Q_n, P_m) > \eta_{1-\alpha} \big)
    \;&\leq\;
    \Pr\big(\theta + S_{n,m} - \MMD(Q, P) > \eta_{1-\alpha} \big)
    \\
    \;&=\;
    \Pr\big(S_{n,m} > \eta_{1-\alpha} + \MMD(Q, P) - \theta \big)
    \\
    \;&=\;
    \Pr\big(S_{n,m} > \eta_{1-\alpha} \big)
    \\
    \;&=\;
    \alpha
    \;.
    % \label{eq:rej_prob_bound}
\end{align*}
Taking the limit $N \to \infty$ on both sides then implies $a \leq \alpha$. This shows \emph{(ii)}. To prove the case \emph{(iii)}, we write
\begin{align*}
    &\;
    \Pr\big(\theta - \MMD(Q_n, P_m) > \eta_{1-\alpha} \big)
    \\
    \;&=\;
    \Pr\big(\MMD(Q_n, P_m) < \theta - \eta_{1-\alpha} \big)
    \\
    \;&=\;
    \Pr\big(\MMD^2(Q_n, P_m) < (\theta - \eta_{1-\alpha})^2 \big)
    \tagaligneq
    \label{eq:mmd_power_step1}
    \\
    \;&=\;
    \Pr\big(\sqrt{N}\big(\MMD^2(Q_n, P_m) - \MMD^2(Q, P) \big) < \sqrt{N}(\theta - \eta_{1-\alpha})^2 - \sqrt{N}\MMD^2(Q, P) \big)
    \tagaligneq
    \label{eq:mmd_power}
    \;.
\end{align*}
Assuming first that $0 < \MMD(Q, P) < \theta$, then $\MMD^2(Q_n, P_m)$ is a non-degenerate V-statistic \citep[Proposition 1]{huang2023weighted}. Asymptotics of non-degenerate V-statistics \citep[Proposition 1 (i)]{huang2023weighted} then gives $\sqrt{N}\big(\MMD^2(Q_n, P_m) - \MMD^2(Q, P) \big) \to \cN(0, \sigma^2)$ in distribution, for some $\sigma^2 > 0$. On the other hand, the RHS of the inequality within the probability in \eqref{eq:mmd_power} can be simplified as
\begin{align*}
    \sqrt{N}(\theta - \eta_{1-\alpha})^2 - \sqrt{N}\MMD^2(Q, P)
    \;=\;
    \sqrt{N}(\theta^2 - \MMD^2(Q, P)) + \sqrt{N} \eta_{1-\alpha}^2 - 2 \sqrt{N}\theta \eta_{1-\alpha}
    \;.
\end{align*}
By \Cref{lem:D_Qn_Q_asymptotics}, the scaled quantile $\sqrt{N} \theta \eta_{1-\alpha}$ converges to a positive constant, while $\sqrt{N} \eta_{1 - \alpha}^2 \to 0$. Since also $\MMD(Q, P) < \theta$, the RHS of the above equality diverges to $\infty$. We have therefore showed that, inside the probability of \eqref{eq:mmd_power}, the LHS converges weakly to a Gaussian limit, while the RHS diverges to $\infty$. This implies that $\Pr\big(\theta - \MMD(Q_n, P_m) > \eta_{1-\alpha} \big) \to 1$.

It then remains to prove \emph{(iii)} when $\MMD(Q, P) = 0$. In this case, the V-statistics $\MMD^2(Q_n, P_n)$ is \emph{degenerate of order 2} \citep[see, e.g., ][Section 2]{huang2023weighted}, and the asymptotics of 2-degenerate V-statistics \citep[Proposition 1 (ii)]{huang2023weighted} imply that $N \MMD^2(Q_n, P_n)$ converges weakly to a non-degenerate distribution. In particular, $\MMD^2(Q_n, P_m) \to 0$ in probability. On the other hand, $\eta_{1-\alpha} \to 0$ in probability by \Cref{lem:D_Qn_Q_asymptotics}, and so $(\theta - \eta_{1-\alpha})^2 \to \theta > 0$. In other words, inside the probability in \eqref{eq:mmd_power_step1}, the LHS converges to $0$ in probability, while the RHS converges to a non-zero constant. This implies that \eqref{eq:mmd_power_step1} converges to $1$, thus showing the case \emph{(iii)} when $\MMD(Q, P) = 0$.

\subsubsection{Proof of \Cref{thm:power_selected_theta_ksd}}
First assume $\KSD(Q, P) \leq \theta'$. Inequality \eqref{eq:KSD_traingle_ineq} implies
\begin{align}
    T_{n,m}^{\KSD, \theta}
    \;=\;
    \theta - \KSD(Q_n, P)
    \;&\geq\;
    \theta - \MMD(Q_n, Q; u_p) - \KSD(Q, P)
    \;\geq\;
    \theta - \theta' - \xi_n
    \label{eq:power_triangle_ineq_ksd}
    \;,
\end{align}
where the last inequality holds since $\KSD(Q, P) \leq \theta'$, and we have defined $\xi_{n} \coloneqq \MMD(Q_n, Q; u_p)$. Therefore,
\begin{align*}
    \Pr(T_{n}^{\KSD,\theta} \;>\; \eta_{1-\alpha})
    \;\geq\;
    \Pr(\theta - \theta' - \xi_n > \eta_{1-\alpha})
    \;\stackrel{(i)}{=}\;
    \Pr(\xi_n < \eta_{1-\beta})
    \;\stackrel{(ii)}{=}\;
    1 - \beta
    \;,
\end{align*}
where \emph{(i)} holds since $\theta = \theta + \eta_{1-\alpha} + \eta_{1-\beta}$ by definition, and \emph{(ii)} holds as $\eta_{1-\beta}$ is the $(1-\beta)$-th quantile of $\xi_n$. This shows the first claim.

Now assume $\KSD(Q, P) > \theta'$. The probability of rejection can be bounded as
\begin{align*}
    \Pr(T_{n}^{\KSD,\theta} \;>\; \eta_{1-\alpha})
    \;&=\;
    \Pr(\theta - \KSD(Q_n, P) \;>\; \eta_{1-\alpha})
    \\
    \;&=\;
    \Pr(\theta' + \eta_{1-\alpha} + \eta_{1-\beta} - \KSD(Q_n, P) \;>\; \eta_{1-\alpha})
    \\
    \;&=\;
    \Pr(\KSD(Q_n, P) \;<\; \theta' - \eta_{1-\beta})
    \tagaligneq \label{eq:varying_theta_rej_prob_bound}
    \;.
\end{align*}
The statistic $\xi_n = \MMD(Q_n, Q; u_p) \to \MMD(Q, Q; u_p) = 0$ almost surely by the strong law of large numbers for V-statistics \citep{kokic1987rates}. Hence, its quantile $\eta_{1-\beta}$ also converges to 0. On the other hand,
\begin{align*}
    \KSD(Q_n, P) \;\to\; \KSD(Q, P) \;>\; \theta' \;,
\end{align*}
where the convergence is again by the law of large numbers. Combining these argument shows that the RHS of \eqref{eq:varying_theta_rej_prob_bound} converges to 0, thus showing the claimed result.

\subsubsection{Proof of \Cref{thm:power_selected_theta}}
\label{pf:thm:power_selected_theta}
An inequality similar to \eqref{eq:power_triangle_ineq_ksd} can be derived for MMD as follows
\begin{align*}
    T_{n,m}^{\MMD, \theta}
    \;=\;
    \theta - \MMD(Q_n, P_m)
    \;&\geq\;
    \theta - \MMD(Q_n, Q) - \MMD(Q, P) - \MMD(P_m, P)
    \;\geq\;
    \theta - \theta' - S_{n,m}
    \;,
\end{align*}
where the last inequality holds since $\MMD(Q, P) \leq \theta'$, and $S_{n,m} \coloneqq \MMD(Q_n, Q) + \MMD(P_m, P)$. The rest of the proof is the same as that of \Cref{thm:power_selected_theta} by replacing \eqref{eq:power_triangle_ineq_ksd} with the above inequality, and using the quantiles of $S_{n,m}$ instead of $\xi_n$.

\section{Computation of the MMD Asymptotic Variance}
\label{app:mmd_normal_vars}
We show how \eqref{eq:mmd_normal_vars} can be computed in $\cO((n + m)^2)$ time. We provide a discussion for $\hat{\sigma}_{\MMD, 1}^2$, and the same argument applies to $\hat{\sigma}_{\MMD, 2}^2$. 

The key observation is that $q_i$, defined below \eqref{eq:mmd_normal_vars}, can be simplified to double sums using the definition of the MMD kernel $h(x, x', y, y') = k(x, x') + k(y, y') - k(x, y') - k(x', y)$. Indeed,
\begin{align*}
    (n-1)m(m-1)q_i \;&=\; \sum_{\substack{i' = 1\\ i' \neq i}}^n \sum_{j=1}^m \sum_{\substack{j'=1\\ j' \neq j}}^m h(x_i, x_{i'}, y_j, y_{j'})
    \\
    \;&=\;
    \sum_{\substack{i' = 1\\ i' \neq i}}^n \sum_{j=1}^m \sum_{\substack{j'=1\\ j' \neq j}}^m k(x_i, x_{i'}) + k(y_j, y_{j'}) - k(x_i, y_{j'}) - k(x_{i'}, y_j)
    \\
    \;&=\;
    m(m-1)\sum_{\substack{i' = 1\\ i' \neq i}}^n k(x_i, x_{i'}) 
    + (n-1) \sum_{j=1}^m \sum_{\substack{j'=1\\ j' \neq j}}^m k(y_j, y_{j'})
    - (n-1)(m-1) \sum_{j'=1}^m k(x_i, y_{j'})
    \\
    \;&\quad
    - (m-1) \sum_{\substack{i' = 1\\ i' \neq i}}^n \sum_{j=1}^m k(x_{i'}, y_j)
    \;.
\end{align*}
Computing the first and third terms for all $\{q_i\}_{i=1}^n$ requires $\cO(n^2 + nm)$. The second term requires $\cO(m^2)$ and only needs to be computed once for all $q_i$. For the last term, we can rewrite it as
\begin{align*}
    \sum_{\substack{i' = 1\\ i' \neq i}}^n \sum_{j=1}^m k(x_{i'}, y_j)
    \;=\;
    \sum_{i' = 1}^n \sum_{j=1}^m k(x_{i'}, y_j) - \sum_{j=1}^m k(x_i, y_j)
    \;=\;
    T_{1} + T_{i,2}
    \;.
\end{align*}
Computing $T_{i,2}$ for all $q_i$ requires $\cO(nm)$, while $T_{1}$ only needs to be computed once, which also costs $\cO(nm)$. Therefore, all $\{q_i\}_{i=1}^n$ can be computed in $\cO((n + m)^2)$ time. Since $\hat{\sigma}_{\MMD, 1}^2$ is the sample variance of $\{q_i\}_{i=1}^n$, it can also be computed in $\cO((n + m)^2)$ time.

\end{document}

%% file: preamble.tex
\usepackage{arxiv}

\usepackage[utf8]{inputenc} % allow utf-8 input
\usepackage[T1]{fontenc}    % use 8-bit T1 fonts
\usepackage{hyperref}       % hyperlinks
\usepackage{url}            % simple URL typesetting
\usepackage{booktabs}       % professional-quality tables
\usepackage{amsfonts}       % blackboard math symbols
\usepackage{nicefrac}       % compact symbols for 1/2, etc.
\usepackage{microtype}      % microtypography
\usepackage{xcolor}         % colors
\usepackage{lipsum}		    % Can be removed after putting your text content
\usepackage[round]{natbib}

\usepackage{amsmath}
\usepackage{amssymb}
\usepackage{amsthm}
\usepackage{mathtools, nccmath}
\usepackage{amsfonts}
\usepackage{bm}
\usepackage{graphicx}
\usepackage{algorithm}
\usepackage{algpseudocode}
\usepackage{dsfont}
\usepackage{caption}
\usepackage{subcaption}
\usepackage{centernot}
\usepackage{wrapfig}
\usepackage{adjustbox}
\usepackage{bbm}
\usepackage{cleveref}
\hypersetup{
    colorlinks=true,
    linkcolor=blue,
    citecolor=blue,      
    urlcolor=brown,
}

\newtheorem{theorem}{Theorem}

\newtheorem{lemma}[theorem]{Lemma}
\newtheorem{proposition}[theorem]{Proposition}
\newtheorem{assumption}{Assumption}
\newtheorem{remark}{Remark}[theorem]

\newcommand{\tagaligneq}{\refstepcounter{equation}\tag{\theequation}}

\def\E{\mathbb{E}}

\def\R{\mathbb{R}}

\def\X{\mathbb{X}}
\def\Y{\mathbb{Y}}
\def\Z{\mathbb{Z}}

\def\cA{\mathcal{A}}

\def\cC{\mathcal{C}}

\def\cH{\mathcal{H}}

\def\cN{\mathcal{N}}
\def\cO{\mathcal{O}}
\def\cP{\mathcal{P}}

\newcommand{\Var}{\text{\rm Var}}

\newcommand{\darrow}{\xrightarrow{\;\text{\tiny\rm d}\;}}

\newcommand*\diff{\mathop{}\!\mathrm{d}}
\newcommand{\indicator}{\mathbbm{1}}
\newcommand{\MMD}{\mathrm{MMD}}
\newcommand{\KSD}{\mathrm{KSD}}